\documentclass[letterpaper]{article} 
\usepackage{aaai25}  
\usepackage{times}  
\usepackage{helvet}  
\usepackage{courier}  
\usepackage[hyphens]{url}  
\usepackage{graphicx} 
\urlstyle{rm} 
\usepackage{natbib}  
\usepackage{caption} 
\frenchspacing  
\setlength{\pdfpagewidth}{8.5in}  
\setlength{\pdfpageheight}{11in}  
%
\usepackage{algorithm}
\usepackage{algorithmic}

%
\usepackage{newfloat}
\usepackage{listings}
\DeclareCaptionStyle{ruled}{labelfont=normalfont,labelsep=colon,strut=off} 
\lstset{%
	basicstyle={\footnotesize\ttfamily},
	numbers=left,numberstyle=\footnotesize,xleftmargin=2em,
	aboveskip=0pt,belowskip=0pt,%
	showstringspaces=false,tabsize=2,breaklines=true}
\floatstyle{ruled}
\newfloat{listing}{tb}{lst}{}
\floatname{listing}{Listing}
%
\pdfinfo{
/TemplateVersion (2025.1)
}

\usepackage[utf8]{inputenc}     
\usepackage{url}                
\usepackage{booktabs}           
\usepackage{amsfonts}           
\usepackage{nicefrac}           
\usepackage{microtype}          
\usepackage[usenames,dvipsnames,svgnames,table,x11names]{xcolor}
\usepackage{multirow, multicol} 
\usepackage{graphicx}           
\usepackage{subcaption}
\usepackage{amsmath}            
\usepackage{amsthm}             
\usepackage{bm}
\usepackage{titletoc}

\setcounter{secnumdepth}{0} 

\newtheorem{theorem}{Theorem}

\newtheorem{lemma}[theorem]{Lemma}
\newtheorem{proposition}[theorem]{Proposition}

\newtheorem{remark}[theorem]{Remark}

\newcommand{\btw}{\widetilde{\boldsymbol{w}}}
\newcommand{\bw}{\boldsymbol{w}}
\newcommand{\be}{\boldsymbol{\epsilon}}
\newcommand{\vmin}{\boldsymbol{v}_{\text{min}}}
\newcommand{\lmin}{\lambda_{\text{min}}}
\newcommand{\lmax}{\lambda_{\text{max}}}
\newcommand{\lratio}{\lambda_{\text{ratio}}}
\newcommand{\Lhead}{\mathcal{L}_{\text{head}}}
\newcommand{\Ltail}{\mathcal{L}_{\text{tail}}}
\newcommand{\ehead}{\boldsymbol{\epsilon}_{\text{head}}}
\newcommand{\etail}{\boldsymbol{\epsilon}_{\text{tail}}}
\newcommand{\rhead}{\rho_{\text{head}}}
\newcommand{\rtail}{\rho_{\text{tail}}}
\newcommand{\expe}{\mathbb{E}}
\newcommand{\li}{\ell_i(\bm{w})}

%


\title{SSE-SAM: Balancing Head and Tail Classes Gradually through Stage-Wise SAM}




\author {
    Xingyu Lyu\textsuperscript{\rm 1,\rm 2},
    Qianqian Xu\textsuperscript{\rm 1}\thanks{Corresponding authors.},
    Zhiyong Yang\textsuperscript{\rm 2},
    Shaojie Lyu\textsuperscript{\rm 3},
    Qingming Huang\textsuperscript{\rm 2, \rm 1, \rm 4}\footnotemark[1]
}
\affiliations {
    \textsuperscript{\rm 1}Key Lab. of Intelligent Information Processing, Institute of Computing Tech., CAS\\
    \textsuperscript{\rm 2}School of Computer Science and Tech., University of Chinese Academy of Sciences\\
    \textsuperscript{\rm 3}Tencent Corporate\\
    \textsuperscript{\rm 4}BDKM, University of Chinese Academy of Sciences\\
    lvxingyu20@mails.ucas.ac.cn, xuqianqian@ict.ac.cn, yangzhiyong21@ucas.ac.cn, shaojielv@tencent.com, qmhuang@ucas.ac.cn
}

\begin{document}

\maketitle

\begin{abstract}
Real-world datasets often exhibit a long-tailed distribution, where vast majority of classes known as tail classes have only few samples. Traditional methods tend to overfit on these tail classes. Recently, a new approach called Imbalanced SAM (ImbSAM) is proposed to leverage the generalization benefits of Sharpness-Aware Minimization (SAM) for long-tailed distributions. The main strategy is to merely enhance the smoothness of the loss function for tail classes. However, we argue that improving generalization in long-tail scenarios requires a careful balance between head and tail classes. We show that neither SAM nor ImbSAM alone can fully achieve this balance. For SAM, we prove that although it enhances the model's generalization ability by escaping saddle point in the overall loss landscape, it does not effectively address this for tail-class losses. Conversely, while ImbSAM is more effective at avoiding saddle points in tail classes, the head classes are trained insufficiently, resulting in significant performance drops. Based on these insights, we propose Stage-wise Saddle Escaping SAM (SSE-SAM), which uses complementary strengths of ImbSAM and SAM in a phased approach. Initially, SSE-SAM follows the majority sample to avoid saddle points of the head-class loss. During the later phase, it focuses on tail-classes to help them escape saddle points. Our experiments confirm that SSE-SAM has better ability in escaping saddles both on head and tail classes, and shows performance improvements.
\end{abstract}

%
\begin{links}
    \link{Code}{https://github.com/Zemdalk/SSE-SAM}
\end{links}


\section{Introduction}
\label{intro}

Over the past few decades, deep neural networks have made significant progress in various fields, largely due to the use of well-curated datasets \cite{deng2009imagenet, krizhevsky2009learning}. However, real-world data often exhibit a long-tail distribution \cite{ouyang2016factors, zhang2021distribution, wei2023towards, li2024long, han2024aucseg, yang2024harnessing}, where a few dominant classes (head classes) hold the majority of samples and most other classes (tail classes) have very few. Modern over-parameterized neural networks tend to overfit on head classes and underperform on tail classes due to inadequate training for the latter \cite{buda2018systematic, van2017devil}.

Several methods have been developed to address this imbalance \cite{li2024size, yang2022optimizing, hou2022adauc}. Most of these approaches use re-weighting techniques, such as LDAM \cite{cao2019learning}, logit-adjustment \cite{menon2020long, wang2024unified}, and Vector Scaling (VS) Loss \cite{kini2021label}, to adjust class weights and focus on tail classes in the loss function. This adjustment significantly enhances model performance in datasets with long-tail distributions. However, these re-weighting techniques can still cause overfitting in tail classes \cite{hawkins2004problem}.

To address overfitting, many strategies involving smoothness regularization have been proposed \cite{kang2019decoupling, alshammari2022long}. Sharpness-Aware Minimization (SAM) \cite{foret2020sharpness}, known for its strong theoretical basis and excellent results, has garnered considerable attention. According to \citet{rangwani2022escaping}, SAM effectively helps models escape saddle points, heading for a flat local minimum. \citet{zhou2023imbsam} introduced Imbalanced SAM (ImbSAM) for long-tail learning, which bypasses the SAM optimization for head classes to prevent their dominance in the optimization process, showing notable performance improvements in long-tail scenarios. However, empirical studies indicate that ImbSAM significantly reduces the generalization ability of head classes compared to SAM. Thus, enhancing long-tail generalization requires a careful balance between head and tail classes.

To find a better solution, we first examine the properties of SAM and ImbSAM. We show that SAM theoretically has a greater component on the maximum negative curvature direction than Stochastic Gradient Descent (SGD), suggesting a stronger saddle escape capability. However, in long-tail distributions, head classes mainly influence the direction and magnitude of perturbations in SAM, often neglecting tail classes. In contrast, empirical studies show that ImbSAM reduces the misleading effects of head classes, helping tail classes escape saddles more effectively. However, it results in insufficient training of head classes.

\begin{figure*}[!t]
    \centering
    \includegraphics[width=0.90 \textwidth]{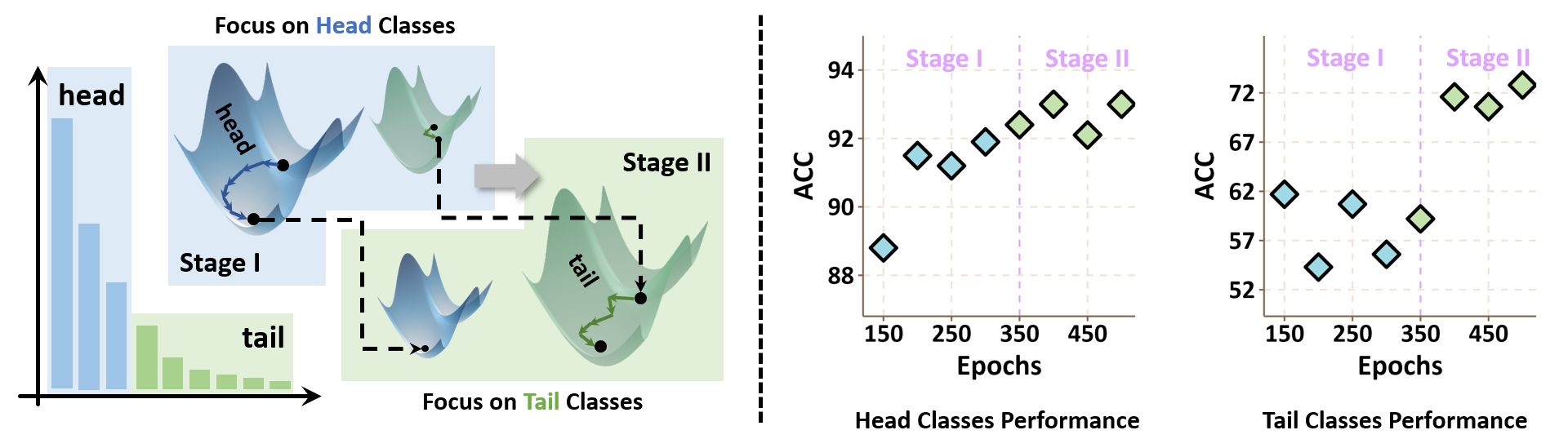}
    \caption{\textbf{Illustration of proposed SSE-SAM.} SSE-SAM splits the training process into two stages. The first stage helps head classes escape saddle points, while the second stage assists tail classes. The graphs on the right show an initial improvement in head class performance during the first stage, followed by a significant enhancement in tail class performance during the second stage. For detailed experimental results, see Sec.\ref{experiments}.}
    \label{fig:illustration}
\end{figure*}

From the analysis, SAM and ImbSAM offer complementary benefits, but it's challenging to escape saddle points in both head and tail classes simultaneously. To tackle this, we introduce a new approach called Stage-wise Saddle Escaping SAM (SSE-SAM), designed to gradually overcome saddle points in both head and tail classes. SSE-SAM divides the training into two phases. Initially, it employs the small loss assumption to help head classes escape saddle points effectively. Later, it shifts attention to tail classes to facilitate their escape from saddle points. Our experiments demonstrate that SSE-SAM is more effective at escaping saddles in both head and tail classes.

Our contributions in this paper are summarized as follows:

\begin{itemize}
    \item We theoretically show that SAM, compared with SGD, has a greater component along the maximum negative direction of the loss landscape. This finding demonstrates the more efficient saddle escaping ability of SAM and lays the foundation for the following analysis.
    \item We analyze the complementary characteristics of SAM and ImbSAM in saddle escaping ability. We observe the distinct effects of aiding escape from saddle points by SAM and ImbSAM, both theoretically and empirically.
    \item We propose a Stage-wise Saddle Escaping SAM (SSE-SAM) algorithm. SSE-SAM splits the training process into two stages of saddle escaping to achieve balance in head and tail classes. Our method achieves remarkable performance across multiple experiments.
\end{itemize}


\section{Related Work}
\label{related_work}


In this section, we briefly discuss related work here. More detailed discussion can be found in Appendix. In the end, we explain the notation conventions in this paper.

\subsection{Long-tail Learning}


Class re-balancing techniques are mainstream methods in long-tail learning \cite{zhang2023deep}, including re-sampling \cite{chawla2002smote, buda2018systematic}, class-sensitive learning \cite{lin2017focal, cao2019learning, kini2021label, du2024probabilistic}, and logit adjustment \cite{menon2020long, wang2024unified}. These strategies aim to provide a more balanced training environment, enhancing the representation and performance of tail classes.

\subsection{Smoothness Regularization Methods}\label{sec:smoothness-reg}

Smoothness regularization algorithms promote convergence to flatter minima, which can significantly improve generalization \cite{keskar2016large, jiang2019fantastic}. A notable method within this domain is Sharpness-Aware Minimization (SAM) \cite{foret2020sharpness}, which has demonstrated notable success in guiding models toward smoother convergence and thus, better generalization. There are abundant theoretical \cite{woodworth2020kernel, andriushchenko2022towards, wen2022does, wen2023sharpness} and practical \cite{du2021efficient, kwon2021asam, mi2022make, zhou2023imbsam} researches surrounding SAM, underscoring its broad applicability.

\subsection{Escaping Saddle Points}
\label{sec:escaping-saddle}

Saddle points are regions in the loss landscape where the function's gradient is zero but the Hessian matrix is indefinite \cite{lee2018introduction}. Models converging to saddle points typically exhibit poor generalization \cite{dauphin2014identifying}. There are extensive theoretical studies on saddle point problem \cite{daneshmand2018escaping, jin2021nonconvex, rangwani2022escaping, hsieh2023riemannian}, and many algorithms have been developed to efficiently escape from saddle points \cite{palaniappan2016stochastic, jin2017escape, staib2019escaping, criscitiello2019efficiently, zhang2021escape, huang2022efficiently}.

\subsection{Notation}

In this paper, $\mathcal{L}$ denotes the loss function, $\boldsymbol{w}$ represents the weights, and $\mathbf{I}$ indicates the identity matrix. Unless otherwise specified , $\lVert \cdot \rVert$ refers to the Euclidean norm $\lVert \cdot \rVert_2$.


\section{Preliminary}\label{sec:pre}
\label{preliminary}

As we have mentioned, our method is a two-stage saddle escaping method based on SAM and ImbSAM algorithm. In this section, we first introduce the related concepts and algorithms.



\subsection{Sharpness-Aware Minimization}

Sharpness-Aware Minimization (SAM) \cite{foret2020sharpness} aims to regularize sharpness of loss landscape by finding the highest loss value in the neighborhood of current weight $\boldsymbol{w}$ controlled by $\rho$, then minimize such value:
\begin{equation}
    \min_{\boldsymbol{w}}\max_{\lVert\boldsymbol{\epsilon}\rVert\leq\rho}\mathcal{L}(\boldsymbol{w}+\boldsymbol{\epsilon}).
\end{equation}

The solution of this minimax problem is approximated by first order Taylor expansion, which gives $\boldsymbol{\epsilon}\approx\rho\frac{\nabla \mathcal{L}(\boldsymbol{w})}{\lVert \nabla \mathcal{L}(\boldsymbol{w})\rVert}$. The SAM loss function is thus defined as follows:
\begin{equation}
    \mathcal{L}^{\text{SAM}}\left(\boldsymbol{w}\right) = \mathcal{L}\left(\boldsymbol{w}+\boldsymbol{\epsilon}\right) =\mathcal{L}\left(\boldsymbol{w}+\rho\frac{\nabla \mathcal{L}(\boldsymbol{w})}{\lVert \nabla \mathcal{L}(\boldsymbol{w})\rVert}\right).
\end{equation}

\subsection{Imbalanced SAM}
Imbalanced SAM (ImbSAM) is a variant of SAM tailored for long-tail distributions \cite{zhou2023imbsam}. It corrects the bias towards dominant classes found in SAM by allowing data from less represented classes to contribute more to optimization during training.

The loss function of SAM can be decomposed as 
\begin{equation}
    \mathcal{L}^{\text{SAM}}(\bw) = \mathcal{L}(\bw+\be) = \Lhead(\bw+\be)+\Ltail(\bw+\be).
\end{equation}
ImbSAM removes the perturbation term in $\Lhead$ to focus the sharpness minimization on tail classes. Consequently, the loss function of ImbSAM is structured as follows:
\begin{equation}\label{eq:imbsam}
\begin{aligned}
    \mathcal{L}^{\text{ImbSAM}}(\bw) & = \Lhead(\bw) + \Ltail(\bw+\etail) \\
    & = \Lhead(\bw) + \Ltail\left(\bw+\rho\frac{\nabla\Ltail(\bw)}{\lVert\nabla\Ltail(\bw)\rVert}\right).
\end{aligned}
\end{equation}

\subsection{Geometric Measurement of Escaping Saddle Points}

We use two methods to measure the ability of models in escaping saddles.

From theoretical aspect, to analyze the saddle escaping process, we have to analyze the direction of maximum negative curvature\footnote{Here "maximum" means the absolute value reaches maximum among all negative curvatures, \textit{i.e.}, the smallest curvature which is negative. This expression is used throughout this paper.} around saddles. Here we briefly summarize the differential geometry theories showing that the direction of the eigenvector corresponding to the smallest eigenvalue of Hessian around saddle point is the direction of maximum negative curvature.

Differential geometry suggests that the principal direction of maximum negative curvature in the loss landscape corresponds to the eigenvector associated with the most negative eigenvalue of the Weingarten matrix \cite{lee2018introduction}  $\mathbf{W}$:
\begin{equation}
    \mathbf{W} = \left(\mathbf{I}-\frac{\nabla \mathcal{L}\cdot \nabla \mathcal{L}^{\top}}{\alpha^2}\right)\frac{\nabla^2\mathcal{L}}{\alpha},
\end{equation}
where $\alpha := \sqrt{1+\lVert\nabla \mathcal{L}\rVert^2}$. Proof of such property can be found in Appendix.

Denote by $\mathcal{H}$ the Hessian of $\mathcal{L}$ at $\boldsymbol{w}_0$, $\nabla^2\mathcal{L}(\boldsymbol{w}_0)$. Also, denote $\lambda_{\text{min}}, \boldsymbol{v}_{\text{min}}$ as the most negative eigenvalue and its corresponding eigenvector of $\mathcal{H}$. Since the gradient $\nabla \mathcal{L}(\boldsymbol{w}_0)$ vanishes near saddle points, $\mathcal{H}$ effectively approximates $\mathbf{W}$. In this sense, $\mathcal{H}$ is a good approximation of $\mathbf{W}$. Therefore $\boldsymbol{v}_{\text{min}}$ can effectively represent the direction of maximum negative curvature of loss landscape.

From empirical aspect, we focus on how well our model escapes from saddles. Following previous works \cite{li2018visualizing, rangwani2022escaping}, we use $\lratio = \lvert \lmax/\lmin\rvert$ to measure the non-convexity of the loss landscape, where high $\lratio$ indicates convergence to points with less negative curvature, therefore away from saddle points.

\subsection{Roadmap}


This paper is structured as follows. The next two sections provide a detailed analysis of the strengths and weaknesses of SAM and ImbSAM in escaping saddle points. This examination reveals their complementary dynamics, leading to the development of a stagewise approach, presented in the following section. Experimental evidence to validate the efficacy of our proposed SSE-SAM algorithm is then presented in the subsequent section. The final section of the paper offers concluding remarks. All proofs are deferred to Appendix.


\section{Analysis of SAM in Escaping Saddle Points}
\label{sam_analysis}

\subsection{Advantage: SAM Escapes from Saddles More Efficiently Compared with SGD}
As mentioned in previous discussions, the theoretical power of SAM comes partially from its ability to escape from saddle points and heading to a flat local/global minimum. Inspired by this, we first delve deeper into the theoretical properties of SAM's saddle point escaping ability.  

Building upon the foundation laid in the preceding section on geometric measurement of escaping saddle points, we evaluate this capability by examining how close the learned weight is to the subspace associated with the most negative curvature. Moreover, we only need to examine it around the saddle point, where the Weingarten matrix roughly equals to the Hessian, \textit{i.e.}, $\mathbf{W} \approx \nabla^2\mathcal{L}$. The most negative curvature subspace is roughly the subspace spanned by the eigenvector associated with the minimum eigenvalue of the Hessian $\lmin$\footnote{$\lmin$ is mostly negative as shown in the experiments.}. In this sense, a greater component on this subspace implies a more effective direction for descent around the saddle. Therefore, we compare the update directions of SAM and ImbSAM by examining the inner product between these directions and the eigenvector. The result is shown in the following theorem.

\begin{theorem}\label{thm:projection-onto-negative-curvature}
    Let $\mathcal{W} := \left\{\bw_0, \bw_1, \cdots, \bw_t, \btw_1, \cdots, \btw_t\right\}$, where $\bw_0$ is the starting weight, $\bw_t$ denotes the weight obtained after optimizing for $t$ steps using SGD, $\btw_t$ denotes the weight obtained after optimizing for $t$ steps using SAM. All weights in $\mathcal{W}$ are around saddle point. Let $L := \max\limits_{\bw \in \mathcal{W}}\lVert \nabla \mathcal{L}(\bw)\rVert>0$. Given any $\mu\geq 1$, when $\lmin\leq -\frac{L}{\eta\rho}\left(\frac{1+\mu}{2}\eta+\sqrt{\left(\frac{1+\mu}{2}\right)^2\eta^2+\frac{\left(2+\mu\right)\eta\rho}{L}}\right)$, under second order Taylor approximation, we have
    \begin{equation}
    \begin{aligned}
        \lvert\langle \boldsymbol{v}_{\text{min}}, \widetilde{\boldsymbol{w}}_t-\boldsymbol{w}_0\rangle\rvert\geq\mu\lvert\langle \boldsymbol{v}_{\text{min}}, \boldsymbol{w}_t-\boldsymbol{w}_0\rangle\rvert  \\
        +\left((\mu^2+\mu)t-2\mu^2-\mu\right)\frac{\lvert\langle\vmin, \nabla \mathcal{L}(\bw_0)\rangle\rvert}{\lvert\lmin\rvert}.
    \end{aligned}
    \end{equation}
\end{theorem}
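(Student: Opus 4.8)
The plan is to reduce everything to scalar recursions along the escape direction $\vmin$. First I would invoke the second-order Taylor approximation at the saddle $\bw_0$, so that $\nabla\mathcal{L}(\bw)\approx\nabla\mathcal{L}(\bw_0)+\mathcal{H}(\bw-\bw_0)$ for every iterate in $\mathcal{W}$. Projecting the SGD and SAM update rules onto $\vmin$ and using $\mathcal{H}\vmin=\lmin\vmin$, I define the scalar sequences $a_k:=\langle\vmin,\bw_k-\bw_0\rangle$ and $\tilde a_k:=\langle\vmin,\btw_k-\bw_0\rangle$ together with the constant $g_0:=\langle\vmin,\nabla\mathcal{L}(\bw_0)\rangle$. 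The SGD update collapses to the affine recursion $a_{k+1}=(1-\eta\lmin)a_k-\eta g_0$ with $a_0=0$, which I can solve in closed form as $a_t=g_0(\beta^t-1)/\lmin$ where $\beta:=1-\eta\lmin>1$. In particular $\lvert a_t\rvert=\lvert g_0\rvert(\beta^t-1)/\lvert\lmin\rvert$, which already exposes the factor $\lvert g_0\rvert/\lvert\lmin\rvert$ appearing on the right-hand side of the claim.

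Next I would derive the SAM recursion. Substituting $\be_k=\rho\,\nabla\mathcal{L}(\btw_k)/\lVert\nabla\mathcal{L}(\btw_k)\rVert$ into the linearized gradient and projecting onto $\vmin$ gives
\[
\tilde a_{k+1}=\tilde a_k-\eta\,(g_0+\lmin\tilde a_k)\left(1+\frac{\lmin\rho}{\lVert\nabla\mathcal{L}(\btw_k)\rVert}\right),
\]
i.e.\ the same affine form as SGD but with the whole gradient-projection term multiplied by the extra factor $c_k:=1+\lmin\rho/\lVert\nabla\mathcal{L}(\btw_k)\rVert$. The crucial structural observation is that the effective growth rate of $\tilde a_k$ along $\vmin$ is $\tilde\beta_k=\beta-\eta\lmin^2\rho/\lVert\nabla\mathcal{L}(\btw_k)\rVert$, which is strictly larger in magnitude than SGD's $\beta$ once $\lvert\lmin\rvert$ is large, because of the curvature-driven term of order $\lmin^2$. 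Since $\lVert\nabla\mathcal{L}(\btw_k)\rVert\leq L$ uniformly over $\mathcal{W}$, I can lower bound this extra term by $\eta\lmin^2\rho/L$, replacing the unknown per-step norm by the single constant $L$.

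With both recursions in hand I would compare magnitudes by induction on $k$, maintaining a hypothesis of the form $\lvert\tilde a_k\rvert\geq\mu\lvert a_k\rvert+(\text{linear-in-}k)\cdot\lvert g_0\rvert/\lvert\lmin\rvert$ and propagating it one step using the bounds above; the linear accumulation is exactly what produces the coefficient $(\mu^2+\mu)t-2\mu^2-\mu$ in the final statement. Carrying out the inductive step forces the amplified SAM growth rate to dominate $\mu$ times the SGD rate, which after clearing denominators reduces to the quadratic inequality $y^2\geq(1+\mu)\eta y+(2+\mu)\eta\rho/L$ in the variable $y:=(\eta\rho/L)\lvert\lmin\rvert$. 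Solving this quadratic and taking the positive root yields $y\geq\frac{1+\mu}{2}\eta+\sqrt{\left(\frac{1+\mu}{2}\right)^2\eta^2+\frac{(2+\mu)\eta\rho}{L}}$, which is precisely the stated lower bound on $\lvert\lmin\rvert$, i.e.\ the hypothesis $\lmin\leq-\frac{L}{\eta\rho}(\cdots)$.

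The main obstacle I anticipate is controlling the gradient-norm-dependent coefficient $c_k$: unlike SGD, the SAM recursion is not a constant-coefficient affine map, and $c_k$ becomes negative once the $\lmin$ condition holds, so the sign of $\tilde a_k$ can alternate from step to step. The delicate part is therefore tracking absolute values so that the curvature-driven contributions reinforce rather than cancel across iterations, and ensuring the uniform replacement of $\lVert\nabla\mathcal{L}(\btw_k)\rVert$ by $L$ is applied in the direction that preserves the inequality. Getting these signs and the accumulation bookkeeping right is what makes the induction—and the emergence of the exact threshold—nontrivial; the surrounding algebra is routine.
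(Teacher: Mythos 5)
Your proposal is correct and follows essentially the same route as the paper's proof: both linearize the gradient around $\bw_0$ via the second-order Taylor approximation, project the SGD and SAM recursions onto $\vmin$ (yielding the exact rate $1-\eta\lmin$ for SGD and the per-step lower bound $\eta\rho\lmin^2/L+\eta\lmin-1$ for SAM after replacing $\lVert\nabla\mathcal{L}(\btw_k)\rVert$ by $L$), and extract the threshold on $\lmin$ from the same quadratic condition $\eta\rho\lmin^2/L+(1+\mu)\eta\lmin-(1+\mu)\geq 1$. The only difference is bookkeeping: you induct on displacement projections with a linear-in-$k$ accumulation term, whereas the paper tracks gradient projections and finishes with a difference-of-powers factorization, the two being equivalent through the affine relation $\langle\vmin,\bw-\bw_0\rangle=\left(\langle\vmin,\nabla\mathcal{L}(\bw)\rangle-\langle\vmin,\nabla\mathcal{L}(\bw_0)\rangle\right)/\lmin$.
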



For all $\mu\geq 1$, when $t\geq 2$, we have that $(\mu^2+\mu)t-2\mu^2-\mu\geq \mu>0$. Therefore, SAM leads to a component on $\vmin$ at least $\mu$ times greater than SGD around the saddle point, showing a better potential to find a descent path toward a smooth and flat local minimum (SAM naturally punishes sharp local minimums through its objective function). Above all, SAM facilities the saddle point escaping ability of the optimization algorithm and leads to better generalization by sharpness regularization. 


Fig.\ref{fig:illu-thm-1} intuitively illustrates Thm.\ref{thm:projection-onto-negative-curvature}. Starting from a position close to a saddle point, SAM moves substantially more towards the direction of $\vmin$ than SGD, showing better saddle escaping capability.

\begin{figure}
    \centering
    \includegraphics[width=0.35\textwidth]{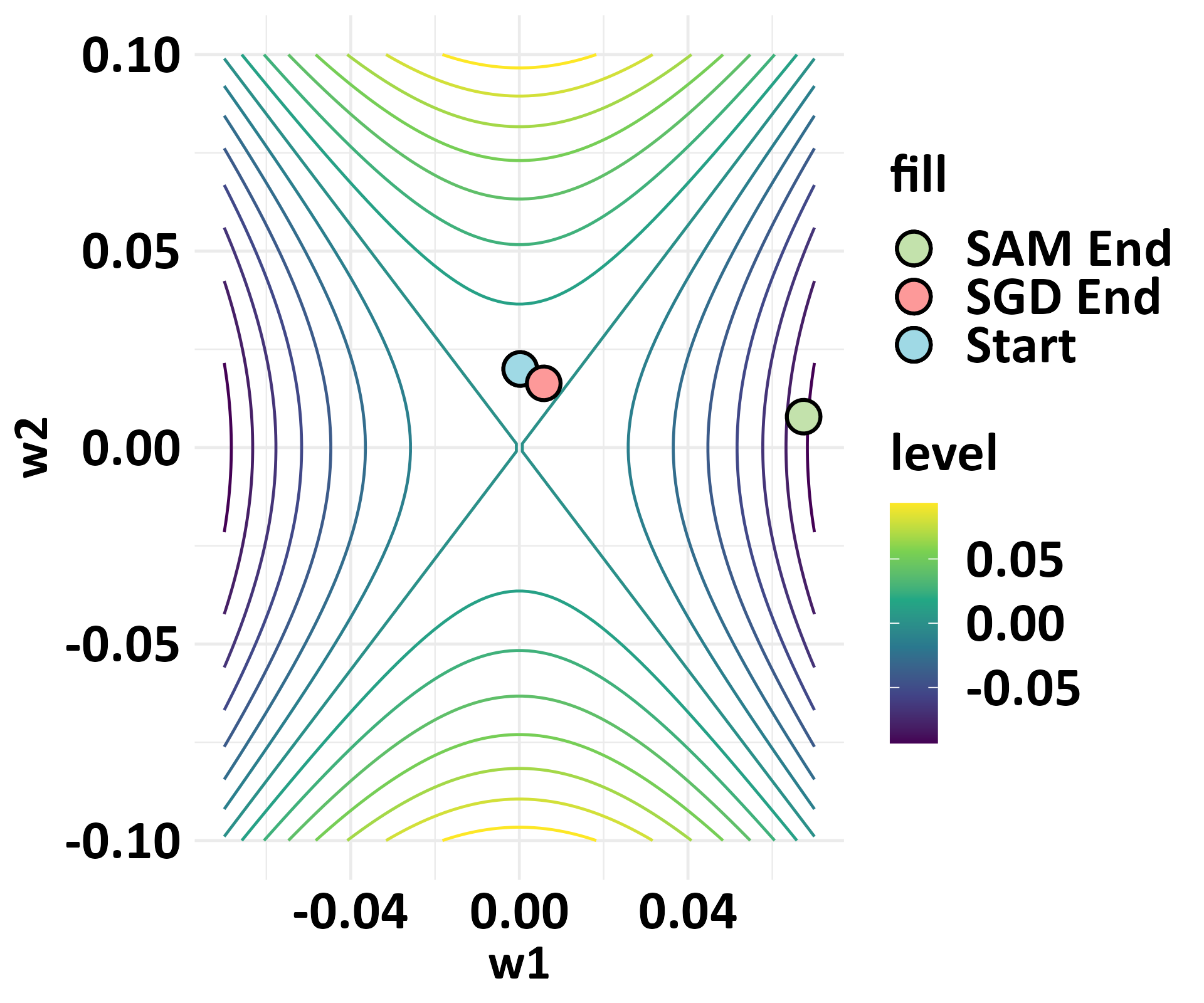}
    \caption{\textbf{Illustration of Thm.\ref{thm:projection-onto-negative-curvature}.}}
    \label{fig:illu-thm-1}
\end{figure}

\begin{remark}
    \citet{rangwani2022escaping} also establish a saddle point escaping theory for SAM. However, their result relies on the  Correlated Negative Curvature (CNC) assumption \cite{daneshmand2018escaping},  which assumes the existence of $\gamma >0$ such that $\mathbb{E}[\langle \vmin, \nabla L(\bw)\rangle^2]\geq \gamma$. If CNC holds, they show that SAM will increase the second moment $\mathbb{E}[\langle \vmin, \nabla L(\bw)\rangle^2]$. However, this assumption is not typically met near saddle points, where $\nabla L(\bm{w})$ approaches zero. Moreover, the CNC assumption is based on a baseline called CNC-PGD algorithm, which does not align with the settings in standard SGD \cite{daneshmand2018escaping}. Compared with this existing work, our result is much more general, since it not only get rid of the CNC assumption but also captures the dynamic trajectory after $t$ steps. As $t$ increases, the different behavior between SAM and SGD becomes increasingly apparent.  
\end{remark}

\subsection{Limitation: Suboptimal Optimization for Tail Classes in Escaping Saddles}
The aforementioned analysis is based on the overall loss landscape. When it comes to long-tail distribution, such theories are no longer validated. Given that head classes vastly outnumber tail classes in such datasets, the shape of the loss surface is predominantly influenced by head classes, effectively overshadowing the characteristics of tail classes which remain underexplored. \citet{zhou2023imbsam} empirically show that  while optimization process of SAM significantly benefits head classes, it leads to suboptimal optimization direction of tail classes due to the overwhelming existence of head classes. 

Delving deeper into this problem, we provide a formal theoretical analysis of such suboptimality.

Decomposing loss function of SAM as $\mathcal{L}^{\text{SAM}}(\bw) = \mathcal{L}(\bw+\be) = \Lhead(\bw+\be)+\Ltail(\bw+\be)$, we have
\begin{equation}\label{eq:decomp-of-epsilon}
\begin{aligned}
    \be  = \underbrace{\rho\cdot \frac{\nabla \Lhead(\bw)}{\lVert\nabla \mathcal{L}(\bw)\rVert}}_{\ehead} +
            \underbrace{\rho\cdot\frac{\nabla \Ltail(\bw)}{\lVert\nabla \mathcal{L}(\bw)\rVert}}_{\etail}.
\end{aligned}
\end{equation}

Because $\lvert S_{\text{head}}\rvert\gg \lvert S_{\text{tail}}\rvert$, we have 
\begin{equation}
    \Lhead = \sum_{(\boldsymbol{x}_i, y_i)\in S_{\text{head}}}\li \gg \Ltail = \sum_{(\boldsymbol{x}_i, y_i)\in S_{\text{tail}}}\li, 
\end{equation}
where $l_i$ is the loss for single sample $(\boldsymbol{x}_i,y_i)$. In fact, we have the following propositions:

\begin{proposition} \label{prop:scale-of-ehead-and-etail}
    Assume that the samples are drawn from i.i.d distribution, let $m=\lvert S_{\text{head}}\rvert, n=\lvert S_{\text{tail}}\rvert$
    then only if $\frac{\xi_{\text{head}}}{\xi_{\text{tail}}} = \Theta(\frac{n^2}{m^2})$, we have 
    \begin{equation}
        \frac{\mathbb{E}_{\text{head}}[\lVert\nabla \mathcal{L}(\bw)\rVert^2]}{\mathbb{E}_{\text{tail}}[\lVert\nabla \mathcal{L}(\bw)\rVert^2]} = \Theta(1).
    \end{equation}
    where 
    \begin{align*}
    &\xi_{\text{head}} = \|\bm{\mu}^{(1)}\|^2 + \sum_{j=1}^d \sigma_j^{(1)^2},~\xi_{\text{tail}} = \|\bm{\mu}^{(2)}\|^2 + \sum_{j=1}^d \sigma_j^{(2)^2},
    \end{align*}
$\bm{\mu}^{(1)},\bm{\mu}^{(2)}$ are the mean of instance gradient $\nabla \li$ for head/tail class, $\sigma_j^{(1)^2},\sigma_j^{(2)^2} $ are the variance of the j-th dimension of the instance gradient $\nabla \li$ for head/tail class.
\end{proposition}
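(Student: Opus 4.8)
The plan is to reduce both expected squared gradient norms to closed forms in $m$, $n$ and the per-instance gradient statistics, and then read the balance condition directly off their ratio. First I would write the head gradient as a sum of instance gradients, $\nabla\Lhead(\bw)=\sum_{i\in S_{\text{head}}}\nabla\li$, and use the i.i.d.\ assumption: the summands are independent copies of a random vector with mean $\bm{\mu}^{(1)}$ and coordinatewise variances $\sigma_j^{(1)^2}$. Expanding the squared norm as $\|\sum_i\nabla\li\|^2=\sum_{i,k}\langle\nabla\li,\nabla\ell_k(\bw)\rangle$ and taking expectations, the $m$ diagonal terms ($i=k$) each contribute the single-instance second moment $\expe[\|\nabla\li\|^2]=\|\bm{\mu}^{(1)}\|^2+\sum_j\sigma_j^{(1)^2}=\xi_{\text{head}}$, while the $m(m-1)$ off-diagonal terms ($i\ne k$) factor through independence into $\langle\bm{\mu}^{(1)},\bm{\mu}^{(1)}\rangle=\|\bm{\mu}^{(1)}\|^2$.

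Collecting these gives $\expe[\|\nabla\Lhead(\bw)\|^2]=m^2\|\bm{\mu}^{(1)}\|^2+m\sum_j\sigma_j^{(1)^2}$, and symmetrically $\expe[\|\nabla\Ltail(\bw)\|^2]=n^2\|\bm{\mu}^{(2)}\|^2+n\sum_j\sigma_j^{(2)^2}$. The structural point I would then extract is that each expression splits into a $\Theta(m^2)$ (resp.\ $\Theta(n^2)$) \emph{bias} term driven by the squared mean gradient and a $\Theta(m)$ (resp.\ $\Theta(n)$) \emph{variance} term. Because the sample counts are large and the per-class mean gradients are nonzero, the bias term dominates, so both second moments collapse to $\expe[\|\nabla\Lhead(\bw)\|^2]=\Theta(m^2\xi_{\text{head}})$ and $\expe[\|\nabla\Ltail(\bw)\|^2]=\Theta(n^2\xi_{\text{tail}})$, using that $\xi=\Theta(\|\bm{\mu}\|^2)$ in this bias-dominated regime. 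Forming the ratio $\Theta(m^2\xi_{\text{head}}/(n^2\xi_{\text{tail}}))$ and imposing that it equal $\Theta(1)$ then solves to $\frac{\xi_{\text{head}}}{\xi_{\text{tail}}}=\Theta(\frac{n^2}{m^2})$, which is exactly the claimed necessary condition.

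I expect the main obstacle to be the dominant-term reduction in the middle step, since the raw expressions carry two different scalings ($m^2$ versus $m$) and collapsing them to a single $\Theta(m^2\xi_{\text{head}})$ factor is not automatic: the two-sided bound $m\,\xi_{\text{head}}\le\expe[\|\nabla\Lhead(\bw)\|^2]\le m^2\,\xi_{\text{head}}$ alone leaves a factor-$m$ gap. Making the reduction rigorous requires the assumption that the bias term dominates the variance term per class, i.e.\ $m\|\bm{\mu}^{(1)}\|^2\gg\sum_j\sigma_j^{(1)^2}$ together with $\sum_j\sigma_j^{(1)^2}=O(\|\bm{\mu}^{(1)}\|^2)$, and analogously for the tail; I would state this regime explicitly, because without it (for instance when the per-class mean gradients nearly vanish at the saddle so the variance terms govern) the ratio instead scales like $\Theta(m/n)$ and the necessary condition would degrade to $\xi_{\text{head}}/\xi_{\text{tail}}=\Theta(n/m)$. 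Pinning down and justifying this dominance assumption is therefore the crux of turning the exact moment identities into the clean $\Theta(n^2/m^2)$ statement.
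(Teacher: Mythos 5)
Your proposal is correct and takes essentially the same route as the paper: reduce everything to per-instance gradient second moments and then read the condition off the ratio. The paper computes $\expe[\lVert\nabla\li\rVert^2]=\lVert\bm{\mu}\rVert^2+\sum_j\sigma_j^2=\xi$ via a trace identity and then concludes in one line that $\expe_{\text{head}}[\lVert\nabla\mathcal{L}(\bw)\rVert^2]\,/\,\expe_{\text{tail}}[\lVert\nabla\mathcal{L}(\bw)\rVert^2]=m^2\xi_{\text{head}}/(n^2\xi_{\text{tail}})$, never expanding the sum over instance pairs. Your expansion is the more careful version of that step: the exact second moment is $m^2\lVert\bm{\mu}^{(1)}\rVert^2+m\sum_j\sigma_j^{(1)^2}$, which coincides with $m^2\xi_{\text{head}}$ only up to the variance term, i.e.\ is $\Theta(m^2\xi_{\text{head}})$ only in the mean-dominated regime $\lVert\bm{\mu}^{(1)}\rVert^2=\Theta(\xi_{\text{head}})$ (and analogously for the tail). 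So the dominance caveat you identify as the crux is not an artifact of your route; it is an assumption the paper's own final identity silently absorbs. Your write-up therefore buys something the paper's does not: it makes explicit the regime in which the closed-form ratio, and hence the stated necessary condition $\xi_{\text{head}}/\xi_{\text{tail}}=\Theta(n^2/m^2)$, is valid, and it correctly observes that in the opposite (variance-dominated) regime, e.g.\ near stationary points where the mean instance gradients vanish, the ratio scales like $\Theta(m\xi_{\text{head}}/(n\xi_{\text{tail}}))$ and the condition would change accordingly.
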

For long-tail distribution,  we have $m^2 \gg n^2$, since it is almost impossible to observe comparable gradients for head and tail distribution. In this sense, we have $\ehead \gg \etail$ the optimization will overlook the update for the tail-class. Furthermore, the following proposition shows the closeness between $\ehead$ and $\epsilon$ in terms of their angles. 
\begin{proposition}\label{prop:direction-of-ehead-and-etail}
    Denote by $\theta_{\text{head}}$ the angle between $\ehead$ and $\boldsymbol{\epsilon}$, $\theta_{\text{tail}}$ the angle between $\etail$ and $\boldsymbol{\epsilon}$, $\psi$ the angle between $\ehead$ and $\etail$. When $\lVert\ehead\rVert\gg\lVert\etail\rVert$, we have
    \begin{equation}
    \begin{aligned}
        &\cos\theta_{\text{head}} \approx 1, \\
        &\cos\theta_{\text{tail}} \approx \cos\psi.
    \end{aligned}
    \end{equation}
\end{proposition}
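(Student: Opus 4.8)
The plan is to exploit the fact that the three vectors $\ehead$, $\etail$, and $\be$ all carry the identical positive prefactor $\rho/\lVert\nabla\mathcal{L}(\bw)\rVert$, so that the three angles in question depend only on the directions of the underlying gradients and are therefore scale-invariant. Writing $\boldsymbol{a} := \nabla\Lhead(\bw)$ and $\boldsymbol{b} := \nabla\Ltail(\bw)$, we have $\ehead \parallel \boldsymbol{a}$, $\etail \parallel \boldsymbol{b}$, and $\be \parallel \boldsymbol{a}+\boldsymbol{b}$, all with the same positive scalar, and the hypothesis $\lVert\ehead\rVert \gg \lVert\etail\rVert$ is exactly $\lVert\boldsymbol{a}\rVert \gg \lVert\boldsymbol{b}\rVert$. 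Thus $\theta_{\text{head}}$ is the angle between $\boldsymbol{a}$ and $\boldsymbol{a}+\boldsymbol{b}$, $\theta_{\text{tail}}$ the angle between $\boldsymbol{b}$ and $\boldsymbol{a}+\boldsymbol{b}$, and $\psi$ the angle between $\boldsymbol{a}$ and $\boldsymbol{b}$, so that $\langle\boldsymbol{a},\boldsymbol{b}\rangle = \lVert\boldsymbol{a}\rVert\lVert\boldsymbol{b}\rVert\cos\psi$.

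Next I would write both cosines exactly from the inner-product definition of angle. Expanding $\langle\boldsymbol{a},\boldsymbol{a}+\boldsymbol{b}\rangle = \lVert\boldsymbol{a}\rVert^2 + \langle\boldsymbol{a},\boldsymbol{b}\rangle$ and $\langle\boldsymbol{b},\boldsymbol{a}+\boldsymbol{b}\rangle = \langle\boldsymbol{a},\boldsymbol{b}\rangle + \lVert\boldsymbol{b}\rVert^2$, introducing the small ratio $r := \lVert\boldsymbol{b}\rVert/\lVert\boldsymbol{a}\rVert \ll 1$, and using $\lVert\boldsymbol{a}+\boldsymbol{b}\rVert^2 = \lVert\boldsymbol{a}\rVert^2(1+2r\cos\psi+r^2)$, I obtain the closed forms
\[
\cos\theta_{\text{head}} = \frac{1 + r\cos\psi}{\sqrt{1 + 2r\cos\psi + r^2}}, \qquad \cos\theta_{\text{tail}} = \frac{\cos\psi + r}{\sqrt{1 + 2r\cos\psi + r^2}},
\]
where the first is obtained by dividing numerator and denominator by $\lVert\boldsymbol{a}\rVert^2$ and the second by $\lVert\boldsymbol{a}\rVert\lVert\boldsymbol{b}\rVert$.

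Finally I would take the regime $r\to 0^+$. The denominator tends to $1$ in both cases; the first claim then follows since the numerator of $\cos\theta_{\text{head}}$ tends to $1$ (in fact the $O(r)$ terms in numerator and denominator cancel, giving $\cos\theta_{\text{head}} = 1 + O(r^2)$), and the second follows since the numerator of $\cos\theta_{\text{tail}}$ tends to $\cos\psi$, yielding $\cos\theta_{\text{tail}} = \cos\psi + O(r)$. The only point requiring care — which I would treat as the main obstacle — is justifying that the cross term $\langle\boldsymbol{a},\boldsymbol{b}\rangle$ cannot corrupt the leading order: by Cauchy--Schwarz it is bounded by $\lVert\boldsymbol{a}\rVert\lVert\boldsymbol{b}\rVert$, so relative to $\lVert\boldsymbol{a}\rVert^2$ it is $O(r)$ and genuinely subdominant in $\cos\theta_{\text{head}}$, whereas in $\cos\theta_{\text{tail}}$ it is precisely the term that survives to give $\cos\psi$. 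This bound also shows the approximations degrade gracefully in the near-orthogonal and near-(anti)aligned cases, so no separate treatment of degenerate angles is needed.
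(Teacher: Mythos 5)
Your proposal is correct and follows essentially the same route as the paper's proof: decompose $\be = \ehead + \etail$, expand the inner products defining the cosines, and pass to the regime $\lVert\ehead\rVert \gg \lVert\etail\rVert$. The only difference is cosmetic — you work with the underlying gradients and an explicit small ratio $r$, obtaining exact closed forms with $O(r)$ and $O(r^2)$ error terms, whereas the paper writes $\cos\theta_{\text{head}} = \frac{\lVert\ehead\rVert}{\lVert\be\rVert} + \frac{\lVert\etail\rVert}{\lVert\be\rVert}\cos\psi$ (and the analogous identity for $\theta_{\text{tail}}$) and invokes the approximation $\lVert\be\rVert \approx \lVert\ehead\rVert$ directly, so your version is marginally more explicit about the error order but not a genuinely different argument.
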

 Therefore, both in terms of direction and magnitude, $\ehead$ is close to $\boldsymbol{\epsilon}$.  \textbf{The smoothness regularization direction for tail classes is dominated by head classes, leading to suboptimal regularization trajectory for tail classes}. Consequently, this results in less effective generalization enhancements for tail classes compared to head classes in SAM.


\section{Analysis of ImbSAM in Escaping Saddles}
\label{imbsam_analysis}




\subsection{Advantage: Efficiently Escaping Saddle Points on Tail Classes}

Previous section shows that SAM fails to assist tail classes in escaping saddle points. To address this issue,  ImbSAM propose a simple and effective solution, \textit{i.e.}, removing the perturbation term $\ehead$ in SAM, resulting in a new objective $\mathcal{L}^{\text{ImbSAM}}=\Lhead(\bw) + \Ltail\left(\bw+\rho\frac{\nabla\Ltail(\bw)}{\lVert\nabla\Ltail(\bw)\rVert}\right)$.


Previous studies have already shown the effectiveness of ImbSAM in long-tail distribution \cite{zhou2023imbsam}. The benefit of only punishing tail-class sharpness seems straightforward, because it directly eliminate the influence of the head class.  Our empirical study further confirms this inference. As is shown in Fig.\ref{fig:hessian_analysis}, ImbSAM has larger $\lratio$ on tail classes campared with SGD and SAM, rendering it better at escaping saddles on tail classes and improving generalization ability.

\subsection{Limitation: Insufficient Optimization of Head Classes}


Although ImbSAM achieves better performance on tail classes, its effectiveness in head classes still needs validation. To this end, we record the head class performance in Tab.\ref{tab:cifar-results}, showing that ImbSAM underperforms SAM in these classes. This shortfall stems from inadequate optimization of head classes. Due to the elimination of $\ehead$ in head classes, ImbSAM actually has worse saddle escaping performance on head classes compared with SAM. Empirical Hessian analysis offers more general and intuitive illustration of the degenerate saddle escaping ability of ImbSAM. See Fig.\ref{fig:hessian_analysis}, where a higher $\lratio$ indicates better saddle escaping ability. ImbSAM works well on tail classes (Fig.\ref{fig:hessian_analysis_c}), but has unsatisfactory results on head classes (Fig.\ref{fig:hessian_analysis_b}) and on a global scale (Fig.\ref{fig:hessian_analysis_a}). Finally, based on our theoretical analysis in Thm.\ref{thm:projection-onto-negative-curvature}, we offer a counter-example that ImbSAM shows degenerate saddle point escaping capabilities.

\begin{proposition}\label{prop:imbsam-limitation}
    Consider a special case where $\mathcal{H}_{\text{tail}} = r\mathcal{H}, \mathcal{H}_{\text{head}}=(1-r)\mathcal{H}$ with $r=\frac{\lvert S_{\text{tail}}\rvert}{\lvert S\rvert}\approx 0$. Let $\bw_t'$ denote the weight obtained after optimizing for $t$ steps by ImbSAM. Other variables are defined the same as in Thm.\ref{thm:projection-onto-negative-curvature}. For any $\mu\geq 1$, we have that:


    (1) when $\lvert\langle \vmin,\bw_t'-\bw_0\rangle\rvert < \mu\lvert\langle\vmin, \bw_t-\bw_0\rangle\rvert$, as long as $\lmin\leq-\frac{L}{\eta\rho}\bigg(\frac{1+\mu}{2}\eta+$ $\sqrt{\left(\frac{1+\mu}{2}\right)^2\eta^2+\frac{(2+\mu)\eta\rho}{L}}\bigg)$, it holds that $\lvert\langle\vmin, \btw_t-\bw_0\rangle\rvert\geq \mu\lvert\langle\vmin,\bw_t-\bw_0\rangle\rvert$;

    (2) when $\lvert\langle \vmin,\btw_t-\bw_0\rangle\rvert < \mu\lvert\langle\vmin, \bw_t-\bw_0\rangle\rvert$, it also holds that $\lvert\langle \vmin,\bw_t'-\bw_0\rangle\rvert < \mu\lvert\langle\vmin, \bw_t-\bw_0\rangle\rvert$.
\end{proposition}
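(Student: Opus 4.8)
The plan is to reduce everything to the one-dimensional dynamics of the projection onto $\vmin$, exactly as in the proof of Thm.~\ref{thm:projection-onto-negative-curvature}, and then exploit the special Hessian structure to compare the three trajectories. Write $a_k := \langle\vmin, \bw_k - \bw_0\rangle$, $\tilde a_k := \langle\vmin, \btw_k - \bw_0\rangle$, and $a_k' := \langle\vmin, \bw_k' - \bw_0\rangle$ for SGD, SAM, and ImbSAM respectively, and set $g_0 := \langle\vmin, \nabla\mathcal{L}(\bw_0)\rangle$. Because $\mathcal{H}_{\text{head}} = (1-r)\mathcal{H}$ and $\mathcal{H}_{\text{tail}} = r\mathcal{H}$ are scalar multiples of $\mathcal{H}$, all three Hessians share $\vmin$, with eigenvalues $(1-r)\lmin$ and $r\lmin$, so a single eigendirection governs every update. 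Taking the consistent scaling at the gradient level, $\nabla\Ltail = r\nabla\mathcal{L}$, makes the ImbSAM perturbation $\etail$ collinear with the SAM perturbation $\be$, which is what lets me line the two schemes up directly.

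First I would expand each update to second order around $\bw_0$ and project onto $\vmin$. Using $\langle\vmin, \nabla\mathcal{L}(\bw)\rangle \approx g_0 + \lmin\langle\vmin, \bw - \bw_0\rangle$, the three recursions take the common form $x_{k+1} = x_k - \eta(g_0 + \lmin x_k)\,c_k$, where the only difference is the per-step escape boost $c_k$: $c_k = 1$ for SGD, $c_k = 1 + \lmin\rho/\lVert\nabla\mathcal{L}(\btw_k)\rVert$ for SAM, and---crucially---$c_k = 1 + r\lmin\rho/\lVert\nabla\mathcal{L}(\bw_k')\rVert$ for ImbSAM, since the perturbation there is fed through $\mathcal{H}_{\text{tail}} = r\mathcal{H}$. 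With $r = \lvert S_{\text{tail}}\rvert/\lvert S\rvert \approx 0$, the ImbSAM boost satisfies $0 < c_k < 1$ and is arbitrarily close to the SGD value $1$, so ImbSAM inherits SGD-like, rather than SAM-like, escape dynamics.

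For part~(2) I would show that ImbSAM's projection is dominated by SGD's, $\lvert a_t'\rvert \le \lvert a_t\rvert$. Writing the recursion as $x_{k+1} = (1 - \eta\lmin c_k)x_k - \eta c_k g_0$ and abbreviating $\gamma_k := 1 - \eta\lmin c_k$, the $r\approx 0$ bound gives $0 < c_k < 1$, hence $1 < \gamma_k^{\text{Imb}} < \gamma_k^{\text{SGD}} = 1 - \eta\lmin$ at every step, both on the increasing branch $\gamma>1$. A termwise monotone comparison of the two recursions started from $a_0' = a_0 = 0$ then yields $\lvert a_t'\rvert < \lvert a_t\rvert$. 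Since $\mu \ge 1$, this gives $\lvert a_t'\rvert < \lvert a_t\rvert \le \mu\lvert a_t\rvert$, which is exactly the conclusion of part~(2); note that the hypothesis that SAM itself fails is automatically consistent but is not even required, because ImbSAM degenerates to SGD regardless. Part~(1) is then immediate: its hypothesis on $\lmin$ is precisely the condition of Thm.~\ref{thm:projection-onto-negative-curvature}, so the theorem directly gives $\lvert\tilde a_t\rvert \ge \mu\lvert a_t\rvert + \big((\mu^2+\mu)t - 2\mu^2 - \mu\big)\lvert g_0\rvert/\lvert\lmin\rvert \ge \mu\lvert a_t\rvert$ for $t\ge 2$, while the accompanying hypothesis $\lvert a_t'\rvert < \mu\lvert a_t\rvert$, guaranteed by the part-(2) estimate, exhibits the desired separation: SAM escapes but ImbSAM does not.

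The main obstacle I anticipate is controlling the step-dependent gradient norms $\lVert\nabla\mathcal{L}(\bw_k')\rVert$ and $\lVert\nabla\mathcal{L}(\btw_k)\rVert$ appearing inside $c_k$, since the clean cancellation that yields $x_t = g_0(\gamma^t - 1)/\lmin$ holds only when $\gamma$ is constant. To make the comparison $\lvert a_t'\rvert \le \lvert a_t\rvert$ rigorous I would need a small induction lemma showing that the ordering $1 < \gamma_k^{\text{Imb}} < \gamma_k^{\text{SGD}}$ propagates through the two coupled recursions despite the varying norms---bounding the norms by $L$ as in Thm.~\ref{thm:projection-onto-negative-curvature} and checking that the sign of $g_0$ is preserved along the trajectory. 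A secondary, more conceptual point is the legitimacy of the gradient-level scaling $\nabla\Ltail = r\nabla\mathcal{L}$: the proposition postulates only the Hessian split, so I would either adopt the stronger local model $\Ltail = r\mathcal{L}$, $\Lhead = (1-r)\mathcal{L}$, or argue more weakly that $\lvert\langle\vmin, r\mathcal{H}\etail_k\rangle\rvert \le r\lvert\lmin\rvert\rho \to 0$, which already forces the ImbSAM boost to collapse onto the SGD value.
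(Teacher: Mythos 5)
Your setup is faithful to the paper's: you project the three update rules onto $\vmin$ under the second-order Taylor model, and you correctly identify that the ImbSAM perturbation enters through $\mathcal{H}_{\text{tail}}=r\mathcal{H}$, so its per-step escape boost is the SAM boost with $\rho$ replaced by $r\rho$ (the paper's Lem.\ref{lem:imbsam-limitation} is built on exactly this recursion, and it also implicitly uses the gradient-level scaling $\nabla\Ltail=r\nabla\mathcal{L}$ that you worry about, so that is not a point of divergence). Your part (1) is essentially the paper's as well: the stated condition on $\lmin$ is the hypothesis of Thm.\ref{thm:projection-onto-negative-curvature}, which gives $\lvert\langle\vmin,\btw_t-\bw_0\rangle\rvert\geq\mu\lvert\langle\vmin,\bw_t-\bw_0\rangle\rvert$ for $t\geq 2$. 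The paper does one extra thing you skip: it proves that the ImbSAM threshold $\lambda(r):=-\frac{L}{\eta\rho r}\left(\frac{1+\mu}{2}\eta+\sqrt{\left(\frac{1+\mu}{2}\right)^2\eta^2+\frac{(2+\mu)\eta\rho r}{L}}\right)$ is strictly below $\lambda(1)$ by a monotonicity-in-$r$ computation, so the regime ``ImbSAM fails yet $\lmin\leq\lambda(1)$'' is nonempty; without that, (1) is a conditional that might be vacuous, and your own justification of its consistency leans on your part (2).

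The genuine gap is in part (2). Your key claim --- that the ImbSAM boost satisfies $0<c_k<1$, hence $\lvert\langle\vmin,\bw_t'-\bw_0\rangle\rvert\leq\lvert\langle\vmin,\bw_t-\bw_0\rangle\rvert$ ``regardless'' of the hypothesis --- is false in the setting of the proposition. Here $r$ is fixed (a property of the dataset) and $\lmin$ is a free parameter; positivity of $c_k=1+r\lmin\rho/\lVert\nabla\mathcal{L}(\bw_k')\rVert$ requires $r\rho\lvert\lmin\rvert<\lVert\nabla\mathcal{L}(\bw_k')\rVert$, which fails precisely when $\lmin\leq\lambda(r)\sim -L/(\eta\rho r)$: there one gets $r\rho\lvert\lmin\rvert\geq\frac{1+\mu}{2}L\geq\lVert\nabla\mathcal{L}\rVert$, so $c_k\leq 0$ and ImbSAM escapes strongly --- this is exactly the content of Lem.\ref{lem:imbsam-limitation}, which your ``ImbSAM degenerates to SGD regardless'' contradicts. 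Both variants of your degeneration argument (via $c_k$, or via $r\lvert\lmin\rvert\rho\to 0$) take $r\to 0$ with $\lmin$ held fixed, reversing the proposition's order of quantifiers. Consequently the hypothesis of (2) cannot be discarded: it is precisely what pins down $\lmin>\lambda(1)$ via the contrapositive of Thm.\ref{thm:projection-onto-negative-curvature} and rules out the strong-escape regime. Moreover, even granting that hypothesis, your trajectory comparison needs an \emph{upper} bound on the ImbSAM boost, i.e., a \emph{lower} bound on the gradient norms $\lVert\nabla\Ltail(\bw_k')\rVert$ in the denominator; near a saddle no such bound exists, and the bound $\lVert\nabla\mathcal{L}\rVert\leq L$ you propose to recycle goes the wrong way --- it lower-bounds the boost, which helps when proving escape, not when refuting it. The paper avoids the trajectory comparison altogether and argues at the level of thresholds: SAM fails $\Rightarrow\lmin>\lambda(1)>\lambda(r)$, and then ImbSAM fails by Lem.\ref{lem:imbsam-limitation} (reading its threshold as characterizing escape). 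To repair your argument you would have to route it through this threshold comparison, including the $\lambda(r)<\lambda(1)$ monotonicity step you omitted.
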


Prop.\ref{prop:imbsam-limitation}(1) states that, even when ImbSAM has worse saddle escaping ability, \textit{i.e.}, $\lvert\langle \vmin,\bw_t'-\bw_0\rangle\rvert < \mu\lvert\langle\vmin, \bw_t-\bw_0\rangle\rvert$, under mild conditions, SAM still can have better saddle escaping ability. However, according to Prop.\ref{prop:imbsam-limitation}(2), when SAM has worse saddle escaping ability, ImbSAM cannot have better saddle escaping ability as well. Collectively, these observations indicate that ImbSAM has a reduced ability to escape from saddles compared to SAM.



\section{Stagewise Saddle Escaping SAM (SSE-SAM)}
\label{methodology}

\subsection{Saddle Point Escaping and Generalization}

Before introducing the proposed algorithm, we first analyze the connection between saddle point escaping and generalization. We have the following theorem.

\begin{theorem}\label{thm:saddle-generalization}
    Suppose loss function $\mathcal{L}$ is upper bounded by $M$. For any $\rho>0$ and any distribution $\mathcal{D}$, with probability at least $1-\delta$ over the choice of the training set $S\sim\mathcal{D}$, there exists a constant $0\leq c\leq 1$ such that
    \begin{equation}
    \begin{aligned}
        & \mathcal{L}_{\mathcal{D}}(\bw)\leq \mathcal{L}_S(\bw) + \rho^2\sqrt{\frac{d}{4\pi}}\max_{\lVert\be\rVert\leq \rho}\lmax\left(\nabla^2L\left(\bw+c\be\right)\right) \\
        & + \frac{M}{\sqrt{n}} + \left[\left(\frac{1}{4}d\log \left(1+\frac{\lVert\bw\rVert^2(\sqrt{d}+\sqrt{\log n})^2}{d\rho^2}\right)+\frac{1}{4}\right.\right. \\
        & \left.\left.+\log\frac{n}{\delta} +2\log(6n+3d)\right)/\left(n-1\right)\right]^{\frac{1}{2}},
    \end{aligned}
    \end{equation}
    where $d$ is the number of parameters and $n$ is the size of training set $S$.
\end{theorem}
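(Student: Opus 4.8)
The plan is to follow the PAC-Bayesian route pioneered for SAM by \cite{foret2020sharpness}, but to refine the sharpness penalty into an explicit curvature term through a second-order Taylor expansion, which is what produces the Hessian eigenvalue $\lmax$ in the statement. Throughout I take the posterior $Q=\mathcal{N}(\bw,\sigma^2\mathbf{I})$ centered at the learned weight and a prior $P=\mathcal{N}(\boldsymbol 0,\sigma_P^2\mathbf{I})$, with $\sigma$ eventually tied to $\rho$, and write $\be\sim\mathcal{N}(\boldsymbol 0,\sigma^2\mathbf{I})$ for the perturbation.

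First I would invoke McAllester's PAC-Bayes bound for the pair $(Q,P)$: with probability at least $1-\delta$, $\expe_{\be}[\mathcal{L}_{\mathcal{D}}(\bw+\be)]\le \expe_{\be}[\mathcal{L}_{S}(\bw+\be)] + \sqrt{(\mathrm{KL}(Q\|P)+\log(n/\delta))/(2(n-1))}$. Computing $\mathrm{KL}(Q\|P)=\tfrac12[(d\sigma^2+\lVert\bw\rVert^2)/\sigma_P^2-d+d\log(\sigma_P^2/\sigma^2)]$ and optimizing over $\sigma_P$ — which, since the prior must be data-independent, is carried out via a union bound over a finite grid of candidate variances — yields exactly the bracketed complexity term: the $\tfrac14 d\log(1+\cdots)$ piece from the KL, the $\log(n/\delta)$ and $\tfrac14$ from the base bound, and the $2\log(6n+3d)$ overhead from the grid, all divided by $n-1$. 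Fixing $\sigma=\rho/(\sqrt d+\sqrt{\log n})$ so that a Gaussian draw typically has norm $\approx\rho$ is what turns $\lVert\bw\rVert^2/(d\sigma^2)$ into $\lVert\bw\rVert^2(\sqrt d+\sqrt{\log n})^2/(d\rho^2)$.

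Next I would convert the averaged empirical term into the stated form. A second-order Taylor expansion with Lagrange remainder gives $\mathcal{L}_S(\bw+\be)=\mathcal{L}_S(\bw)+\nabla\mathcal{L}_S(\bw)^\top\be+\tfrac12\be^\top\nabla^2\mathcal{L}_S(\bw+c\be)\be$ for some $c\in[0,1]$. Taking $\expe_\be$, the linear term vanishes by symmetry of the centered Gaussian, and the quadratic term is bounded by $\tfrac12\,\max_{\lVert\be\rVert\le\rho}\lmax(\nabla^2\mathcal{L}_S(\bw+c\be))\cdot\expe_\be[\lVert\be\rVert^2]$. Evaluating the Gaussian moment $\expe_\be[\lVert\be\rVert^2]=d\sigma^2$ with the variance normalized so that $\tfrac12 d\sigma^2=\rho^2\sqrt{d/(4\pi)}$ produces precisely the curvature term $\rho^2\sqrt{d/(4\pi)}\,\max_{\lVert\be\rVert\le\rho}\lmax(\nabla^2 L(\bw+c\be))$. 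Finally, the left-hand side must be brought from the posterior average $\expe_\be[\mathcal{L}_{\mathcal{D}}(\bw+\be)]$ down to the single point $\mathcal{L}_{\mathcal{D}}(\bw)$; here the boundedness $\mathcal{L}\le M$ is used to control the Gaussian-tail contribution (equivalently, the gap between the perturbed and unperturbed population risk), which I expect to account for the additive $M/\sqrt n$.

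The step I expect to be the main obstacle is the bookkeeping around the perturbation variance. The curvature term wants a normalization that yields the constant $\sqrt{d/(4\pi)}$, while the complexity term wants $\sigma=\rho/(\sqrt d+\sqrt{\log n})$; reconciling these — or splitting the argument so the Hessian term is evaluated on the $\rho$-ball while the out-of-ball mass is charged to $M/\sqrt n$ — is the delicate part. Equally careful is justifying that the remainder Hessian $\nabla^2\mathcal{L}_S(\bw+c\be)$ can be uniformly replaced by its largest eigenvalue over $\lVert\be\rVert\le\rho$, and that the single-point-versus-average gap is genuinely $O(M/\sqrt n)$ rather than $O(M)$; both rely on the concentration of the Gaussian perturbation together with the uniform bound $\mathcal{L}\le M$.
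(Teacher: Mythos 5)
Your plan follows the same skeleton as the paper's proof --- the PAC-Bayes bound of \cite{foret2020sharpness} with $\sigma=\rho/(\sqrt{d}+\sqrt{\log n})$, a second-order Taylor expansion whose linear term dies by symmetry, a Rayleigh-quotient bound on the quadratic term, and a ball-truncation producing $M/\sqrt{n}$ --- but the two steps you flag as ``delicate'' are precisely where your proposal is wrong or unresolved, and they are the substance of the proof beyond Foret et al. First, the constant $\sqrt{d/(4\pi)}$ does \emph{not} come from re-normalizing the variance so that $\tfrac12 d\sigma^2=\rho^2\sqrt{d/(4\pi)}$: that is incompatible with the already-fixed $\sigma=\rho/(\sqrt{d}+\sqrt{\log n})$ (which gives $d\sigma^2\leq\rho^2$, of order $\rho^2$, not $\rho^2\sqrt{d}$), and no second choice of $\sigma$ is available since the same $\sigma$ must also produce the KL term and the tail bound. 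The paper's resolution is purely computational and uses one $\sigma$ throughout: after conditioning on $\{\lVert\be\rVert\leq\rho\}$, the relevant quantity is $\sigma^2\,\expe\left[z \mid z\leq\rho^2/\sigma^2\right]$ with $z\sim\chi^2(d)$; bounding $\int_0^{\rho^2/\sigma^2}z^{d/2}e^{-z/2}\,dz$ by the interval length times the integrand's maximum (attained at the mode $z=d\leq\rho^2/\sigma^2$, valid exactly because of the choice of $\sigma$) and applying Stirling's bound $\Gamma(d/2)\geq\sqrt{4\pi/d}\,\left(d/(2e)\right)^{d/2}$ gives $\sigma^2\expe\left[z \mid z\leq\rho^2/\sigma^2\right]\leq\rho^2\sqrt{d/(4\pi)}$. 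So there is no tension to reconcile; the step you left open closes by this truncated chi-squared moment calculation.

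Second, your attribution of $M/\sqrt{n}$ is misplaced, and the mechanism you commit to would fail. Boundedness alone cannot control the population-side gap $\mathcal{L}_{\mathcal{D}}(\bw)-\expe_{\be}[\mathcal{L}_{\mathcal{D}}(\bw+\be)]$ at the level $M/\sqrt{n}$: a bounded function's value at the center of a Gaussian can exceed its Gaussian average by $\Theta(M)$ (the population risk may drop sharply in every direction around $\bw$), and concentration of $\be$ is of no help because the discrepancy sits at the center, not in the tail. The paper never bounds this gap; following \cite{foret2020sharpness}, the passage from $\expe_{\be}[\mathcal{L}_{\mathcal{D}}(\bw+\be)]$ to $\mathcal{L}_{\mathcal{D}}(\bw)$ is inherited as an assumption (Gaussian smoothing does not decrease population risk). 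The $M/\sqrt{n}$ term instead arises on the \emph{empirical} side: by the chi-squared tail bound of \cite{laurent2000adaptive}, $\mathbb{P}(\lVert\be\rVert>\rho)\leq 1/\sqrt{n}$, hence $\expe_{\be}[\mathcal{L}_S(\bw+\be)]\leq\expe_{\be}\left[\mathcal{L}_S(\bw+\be)\mid\lVert\be\rVert\leq\rho\right]+M/\sqrt{n}$. This conditioning is also what legitimizes the Rayleigh step: your unconditional bound $\tfrac12\max_{\lVert\be\rVert\leq\rho}\lmax(\nabla^2\mathcal{L}_S(\bw+c\be))\cdot\expe[\lVert\be\rVert^2]$ is invalid as written, because the full Gaussian expectation includes perturbations outside the ball where the Hessian is not controlled by that max. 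You mentioned ball-splitting as a fallback, but in the proposal both the curvature constant and the $M/\sqrt{n}$ term rest on mechanisms that do not work; the paper's proof is the realization of your fallback, not of your primary plan.
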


Note that the upper bound can be written as
\begin{equation}
\begin{aligned}
    & \underbrace{\mathcal{L}_S(\bw)}_{(A)} + \underbrace{\rho^2\sqrt{\frac{d}{4\pi}}\max_{\lVert\be\rVert\leq \rho}\lmax\left(\nabla^2\mathcal{L}\left(\bw+c\be\right)\right)}_{(B)} \\
    & + r\left(\frac{\lVert\bw\rVert^2}{\rho^2}, \Tilde{O}\left(\frac{1}{\sqrt{n}}\right)\right).
\end{aligned}
\end{equation}

Here $r$ is an increasing function on $\lVert\bw\rVert^2/\rho^2$ and is of order $\Tilde{O}(1/\sqrt{n})$. The success of SAM can be viewed as a two-stage process. By escaping from saddle point along the most negative eigenvector $\vmin$, $(A)$ is consistently minimized while $(B)$ remains stable. After escaping from saddles, SAM seeks for smooth local/global minima. \cite{wen2023sharpness} proves that SAM minimizes $\lmax$, which corresponds to term $(B)$. In all, SAM escapes from saddle point and finds smooth local minima, minimizing both $(A)$ and $(B)$, which greatly improves generalization. This result bridges the gap between escaping from saddle points and improving model generalization, and dynamically analyzes the behavior of SAM algorithm, which lays foundation for the following algorithm.

\subsection{Stagewise Saddle Escaping SAM (SSE-SAM) Algorithm}

Previous sections analyze the complementary characteristics of SAM and ImbSAM. To sum up, SAM excels at saddle points escaping in the overall loss landscape, but fails in helping tail classes to do so. Conversely, while ImbSAM does focus on helping tail classes escape saddles, the elimination of  head classes perturbation also hurts the generalization ability of head classes.

Therefore, an important question arises: \textit{can we combine the advantages of both while avoiding their drawbacks?} Integrating these approaches seems challenging due to the apparent contradictions in saddle escaping abilities between SAM and ImbSAM. Nonetheless, recognizing that it is difficult to achieve it  under a single objective, we propose a two-stage strategy tailored to exploit the strengths of each method while compensating for their weaknesses: 

(1) \textbf{In the first stage, we apply SAM with a focus on head classes to ensure a rapid and effective escape from saddle points}. According to small loss assumption \cite{xia2021sample}, the\textbf{ head-classes}, often bear way smaller loss value, can naturally follow the trajectory to \textbf{escape the saddle points}. Meanwhile, increasing $\rho$ lowers the upper bound for $\lmin$ in Thm.\ref{thm:projection-onto-negative-curvature}, facilitating an easier saddle escape. Hence, we construct $\rho$ and $\epsilon$ separately for head- and tail-classes. By increasing $\rtail$, it is easier for tail-class to escape saddle points. Overall, the SSE-SAM objective for the first stage is:
\begin{equation}
    \mathcal{L}^{\text{SSE-SAM}}_1(\bw) = \Lhead(\bw+\ehead) + \Ltail(\bw+\etail)
\end{equation}
where $\ehead = \rhead\frac{\nabla\Lhead(\bw)}{\lVert\nabla L(\bw)\rVert}, \etail = \rhead\frac{\nabla\Ltail(\bw)}{\lVert\nabla L(\bw)\rVert}$.

(2) \textbf{In the second stage, we resort to the ImbSAM algorithm to further assist tail classes to escape saddles}. The SSE-SAM objective for the second stage is:
\begin{equation}
    \mathcal{L}^{\text{SSE-SAM}}_2(\bw) = \Lhead(\bw) + \Ltail(\bw+\etail)
\end{equation}
where $\etail = \rhead\frac{\nabla\Ltail(\bw)}{\lVert\nabla L(\bw)\rVert}$.

(3) \textbf{We introduce a hyperparameter $\gamma$ (where $0<\gamma<1$) to mark the transition point between the two stages}. Specifically, if the total training duration is $T$ epochs, the model will switch from the first stage objective to the second stage objective after $\gamma T$ epochs.

For further details and pseudo-code, please refer to Appendix. Note that when $\gamma=1$, SSE-SAM reverts to the SAM algorithm; when $\gamma=0$, SSE-SAM corresponds to the ImbSAM algorithm. This demonstrates that $\gamma$ serves as a pivotal factor in balancing the synergistic benefits of both SAM and ImbSAM.


\section{Experiments}
\label{experiments}

In this section, we first present our experimental results, demonstrating the remarkable performance of our method. Next, we provide a Hessian analysis to compare saddle escaping abilities, which aligns with our theoretical analysis. More additional experiment details are presented in Appendix.

\subsection{Results}

Our experiment results on CIFAR-100-LT and CIFAR-10-LT are presented in Tab.\ref{tab:cifar-results}. All hyperparameters are set according to the model with the highest \textit{overall accuracy}: $\rho=0.05$ for SAM, $\rho=0.10$ for ImbSAM, and $\rhead=0.05, \rtail=0.10, \gamma=0.70$ for SSE-SAM. SSE-SAM achieves the highest overall accuracy among all methods and baselines. We also run our model on large scale dataset ImageNet, where we set $\rho=0.05$ for SAM, $\rho=0.15$ for ImbSAM and $\rhead=0.05,\rtail=0.15,\gamma=0.80$ for SSE-SAM. The results can be found in Tab.\ref{tab:imagenet-results}.

To further demonstrate the improved trade-off effect between head and tail classes, we trained two groups of models. Each group consists of three models: SAM (ext.)\footnote{We use SAM (ext.) to denote models trained with the extended version of SAM, \textit{i.e.}, the first stage objective in SSE-SAM.}, ImbSAM, and our SSE-SAM. SSE-SAM and SAM share $\rhead$, and all three models share $\rtail$. The results are presented in Tab.\ref{tab:trade-off}. In each group, the evaluation results for Many class accuracy generally follow the pattern \textbf{``SAM $>$ SSE-SAM $>$ ImbSAM"}, indicating that SSE-SAM indeed reduces the insufficient training of head classes to some extent, aligning with our expectations. Meanwhile, Medium and Few class accuracy typically adhere to the pattern \textbf{``SSE-SAM $>$ ImbSAM $>$ SAM"}, suggesting that SSE-SAM more effectively escapes saddle points in tail classes. Additionally, it is notable that SSE-SAM generally outperforms ImbSAM in tail classes, exceeding our expectations.

\begin{table}
  \centering
  \setlength{\tabcolsep}{0.7mm} 
  \begin{tabular}{@{}lcccccccc@{}}
    \toprule
     & \multicolumn{4}{c}{CIFAR-100-LT} & \multicolumn{4}{c}{CIFAR-10-LT} \\ \cmidrule(l){2-5}\cmidrule(l){6-9} 
     & \multicolumn{1}{l}{Many} & \multicolumn{1}{l}{Med.} & \multicolumn{1}{l}{Few} & \multicolumn{1}{l}{Avg} & \multicolumn{1}{l}{Many} & \multicolumn{1}{l}{Med.} & \multicolumn{1}{l}{Few} & \multicolumn{1}{l}{Avg} \\ \midrule
    CE & 74.3 & 43.1 & 11.6 & 44.6 & 92.3 & 78.6 & 54.8 & 77.0 \\
    +SAM & \textbf{75.3} & 45.6 & 12.9 & 46.2 & \textbf{93.2} & 79.8 & 60.4 & 79.4 \\
    +ImbSAM & 72.3 & 50.4 & 16.6 & 47.9 & 92.2 & 79.2 & 65.8 & 80.4 \\
    +SSE-SAM & 71.7 & \textbf{54.5} & \textbf{17.1} & \textbf{49.3} & 93.0 & \textbf{80.5} & \textbf{72.8} & \textbf{83.2} \\ \midrule
    LDAM & 74.3 & 48.4 & 17.7 & 48.3 & 90.8 & 76.0 & 61.5 & 77.6 \\
    +SAM & \textbf{76.3} & 49.6 & 19.4 & 49.9 & \textbf{91.2} & 77.4 & 62.6 & 78.5 \\
    +ImbSAM & 72.0 & \textbf{55.4} & \textbf{23.3} & \textbf{51.6} & 89.1 & 78.9 & \textbf{71.7} & 80.8 \\
    +SSE-SAM & 72.3 & 54.7 & 21.5 & 50.9 & 91.1 & \textbf{81.8} & 69.7 & \textbf{81.9} \\ \midrule
    LA & 75.0 & 45.7 & 16.2 & 47.1 & 91.8 & 80.0 & 65.4 & 80.4 \\
    +SAM & \textbf{75.2} & 49.9 & 21.2 & 50.2 & 92.3 & 80.8 & 73.3 & 83.1 \\
    +ImbSAM & 68.7 & 54.1 & 23.5 & 50.0 & 91.1 & 80.5 & 74.0 & 82.8 \\
    +SSE-SAM & 68.8 & \textbf{58.7} & \textbf{28.6} & \textbf{53.2} & \textbf{92.5} & \textbf{81.1} & \textbf{79.2} & \textbf{85.1} \\ \midrule
    VS & \textbf{73.5} & 46.7 & 19.3 & 47.8 & \textbf{92.0} & 80.1 & 65.1 & 80.4 \\
    +SAM & 73.0 & 53.9 & 24.6 & 51.8 & 89.9 & 81.2 & \textbf{81.9} & 84.9 \\
    +ImbSAM & 68.9 & 55.2 & 25.4 & 51.0 & 90.9 & 80.0 & 76.6 & 83.3 \\
    +SSE-SAM & 69.7 & \textbf{57.2} & \textbf{30.5} & \textbf{53.5} & 91.6 & \textbf{82.6} & 79.0 & \textbf{85.1} \\ \bottomrule
  \end{tabular}
  \caption{\textbf{Comparison of overall and split accuracy (\%) on CIFAR-100-LT and CIFAR-10-LT.} `Med.' denotes Medium classes, `Avg' denotes overall accuracy. We set $\text{IF}=100$.}
  \label{tab:cifar-results}
\end{table}

\begin{table}
  \centering
  \setlength{\tabcolsep}{1mm} 
  \begin{tabular}{lllll}
    \hline
     & Many & Med. & Few & Avg. \\ \hline
    CB & 36.9 & 32.7 & 16.8 & 33.2 \\
    $\tau$-norm & 59.1 & 46.9 & 30.7 & 49.4 \\
    cRT & 62.5 & 47.4 & 29.5 & 50.3 \\
    LWS & 61.8 & 48.6 & 33.5 & 51.2 \\
    DisAlign & 61.3 & 52.2 & 31.4 & 52.9 \\
    DRO-LT & 64.0 & 49.8 & 33.1 & 53.5 \\ \hline
    CE & 54.8 & 43.8 & 25.7 & 45.1 \\
    CE+SAM & 54.9 & 44.8 & 28.4 & 46.0 \\
    CE+ImbSAM & 54.9 & 44.6 & 29.2 & 46.1 \\
    CE+SSE-SAM & \textbf{56.0} & \textbf{46.0} & \textbf{31.2} & \textbf{47.5} \\ \hline
  \end{tabular}
  \caption{\textbf{Comparison of overall and split accuracy (\%) on ImageNet-LT.}}
  \label{tab:imagenet-results}
\end{table}

Finally, we delve into the dynamic training process of SAM, ImbSAM, and SSE-SAM. The results are presented in Fig.\ref{fig:acc-100}. The data indicates that after transitional point, accuracy for medium and few classes in SSE-SAM exceeds that of ImbSAM, while accuracy for many classes remains between that observed with SAM and ImbSAM. Overall, all-class accuracy of SSE-SAM experiences further improvement upon entering the second stage of training, solidifying the superior performance of SSE-SAM relative to other methods.

\begin{table*}[!htbp]
  \centering
  \begin{tabular}{@{}lllcccccccc@{}}
    \toprule
    \multirow{2}{*}{Models} & \multirow{2}{*}{$\rhead$} & \multirow{2}{*}{$\rtail$} & \multicolumn{4}{c}{IF=100} & \multicolumn{4}{c}{IF=50} \\ \cmidrule(l){4-11} 
     &  &  & \multicolumn{1}{l}{Many} & \multicolumn{1}{l}{Med.} & \multicolumn{1}{l}{Few} & \multicolumn{1}{l}{Avg} & \multicolumn{1}{l}{Many} & \multicolumn{1}{l}{Med.} & \multicolumn{1}{l}{Few} & \multicolumn{1}{l}{Avg} \\ \midrule
    SGD & 0.00 & 0.00 & 74.3 & 43.1 & 11.6 & 44.6 & 73.6 & 41.4 & 15.4 & 49.9 \\ \midrule
    SAM & 0.05 & 0.05 & \textbf{75.3} & 45.6 & 12.9 & 46.2 & \textbf{74.7} & 43.9 & 19.1 & 52.1 \\
    ImbSAM & 0.00 & 0.05 & 73.0 & {\underline{50.1}} & {\underline{15.2}} & {\underline{47.6}} & 71.4 & {\underline{48.5}} & \textbf{23.2} & {\underline{53.3}} \\
    SSE-SAM & 0.05 & 0.05 & {\underline{73.1}} & \textbf{51.3} & \textbf{15.8} & \textbf{48.3} & {\underline{73.4}} & \textbf{49.7} & {\underline{22.4}} & \textbf{54.5} \\ \midrule
    SAM (ext.) & 0.05 & 0.10 & \textbf{75.2} & 48.6 & 14.9 & 47.8 & \textbf{74.4} & 46.4 & 21.3 & 53.4 \\
    ImbSAM & 0.00 & 0.10 & {\underline{72.3}} & {\underline{50.4}} & {\underline{16.6}} & {\underline{47.9}} & 70.9 & {\underline{50.2}} & {\underline{24.1}} & {\underline{54.0}} \\
    SSE-SAM & 0.05 & 0.10 & 71.7 & \textbf{54.5} & \textbf{17.1} & \textbf{49.3} & {\underline{72.2}} & \textbf{51.9} & \textbf{26.1} & \textbf{55.6} \\ \bottomrule
  \end{tabular}
  \caption{\textbf{Comparison of overall and split accuracy (\%) of different IFs and $\rho$'s.} We set $\gamma=0.70$ for SSE-SAM models. We use CE as loss function and train all models on CIFAR-100-LT. }
  \label{tab:trade-off}
\end{table*}

\begin{figure*}[htbp!]
    \centering
    \begin{subfigure}{0.20\textwidth}
        \centering
        \includegraphics[width=\textwidth]{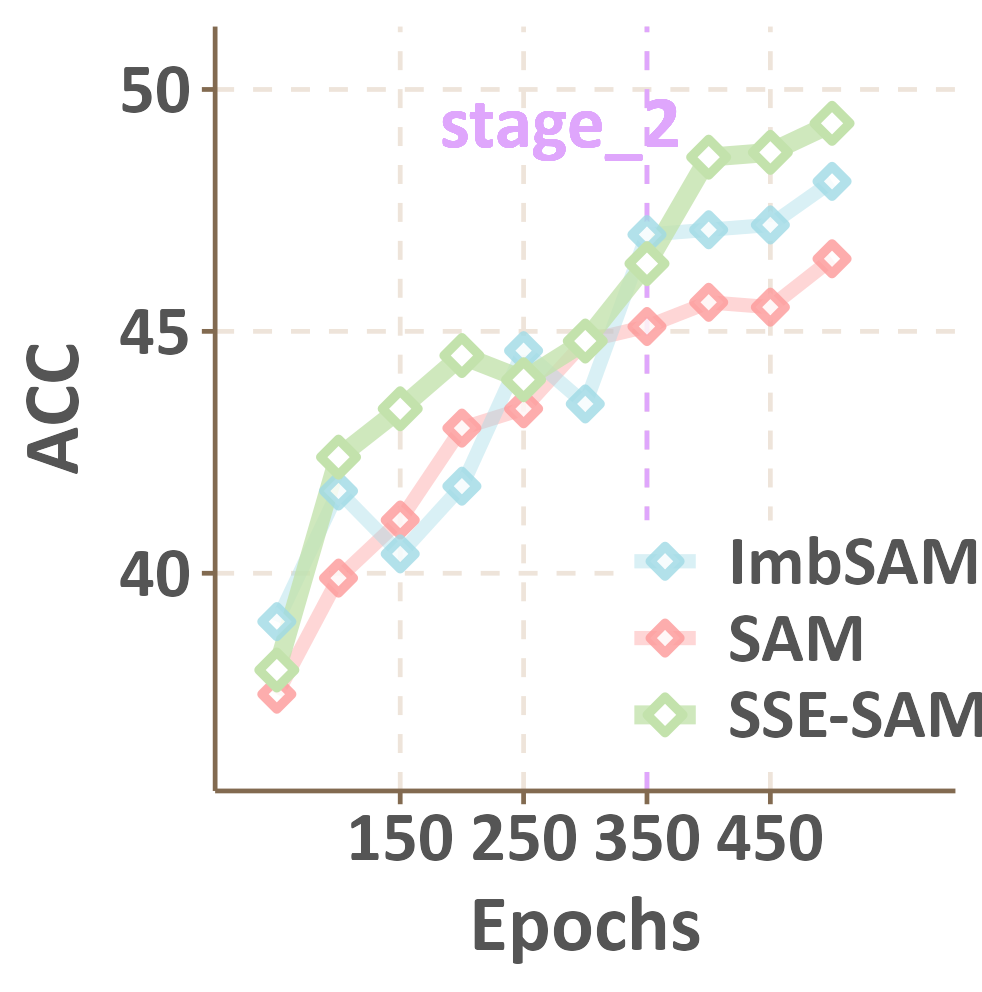}
        \caption{Overall}
        \label{fig:acc-all-100}
    \end{subfigure}
    \hfill
    \begin{subfigure}{0.20\textwidth}
        \centering
        \includegraphics[width=\textwidth]{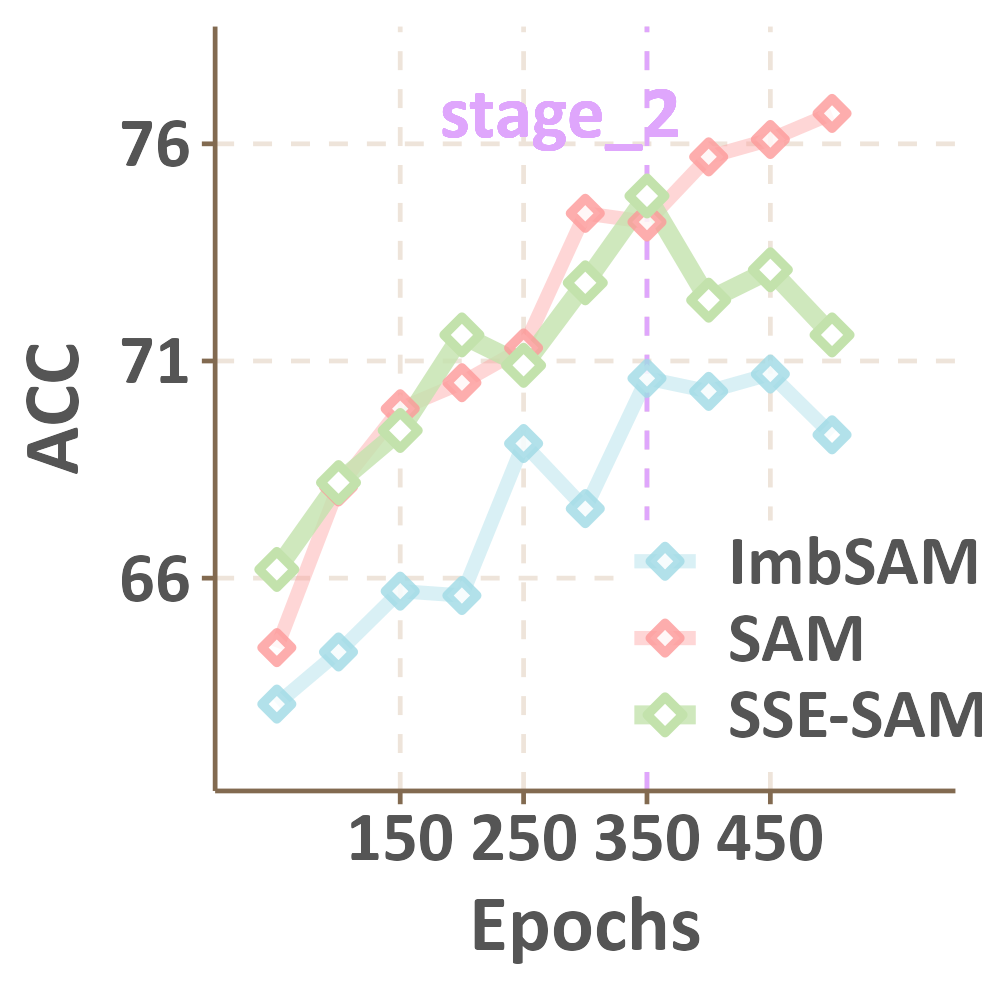}
        \caption{Many classes}
        \label{fig:acc-many-100}
    \end{subfigure}
    \hfill
    \begin{subfigure}{0.20\textwidth}
        \centering
        \includegraphics[width=\textwidth]{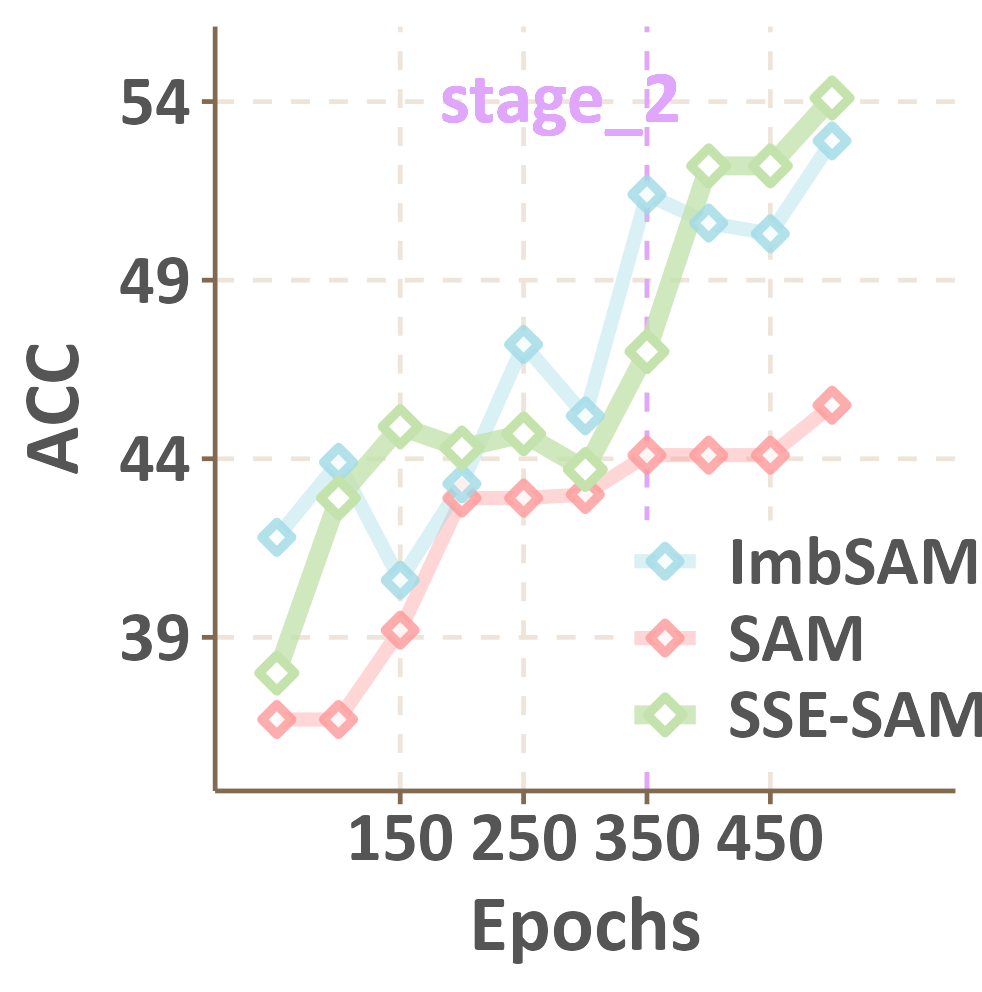}
        \caption{Medium classes}
        \label{fig:acc-med-100}
    \end{subfigure}
    \hfill
    \begin{subfigure}{0.20\textwidth}
        \centering
        \includegraphics[width=\textwidth]{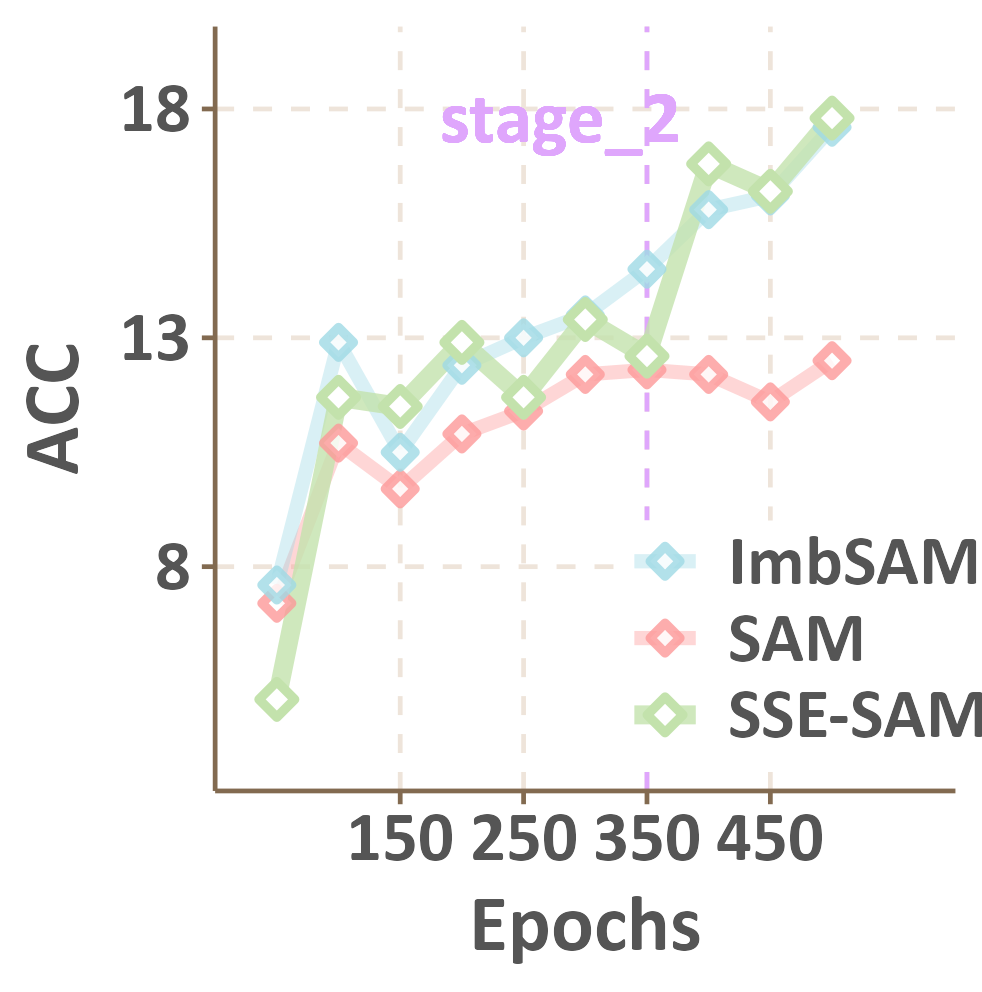}
        \caption{Few classes}
        \label{fig:acc-few-100}
    \end{subfigure}
    \caption{\textbf{Test accuracy on CIFAR-100-LT across different classes.} } 
    \label{fig:acc-100}
\end{figure*}

\subsection{Hessian Analysis}

\begin{figure}[!t]
    \centering
    \begin{subfigure}{0.23 \textwidth}
      \includegraphics[width=\textwidth]{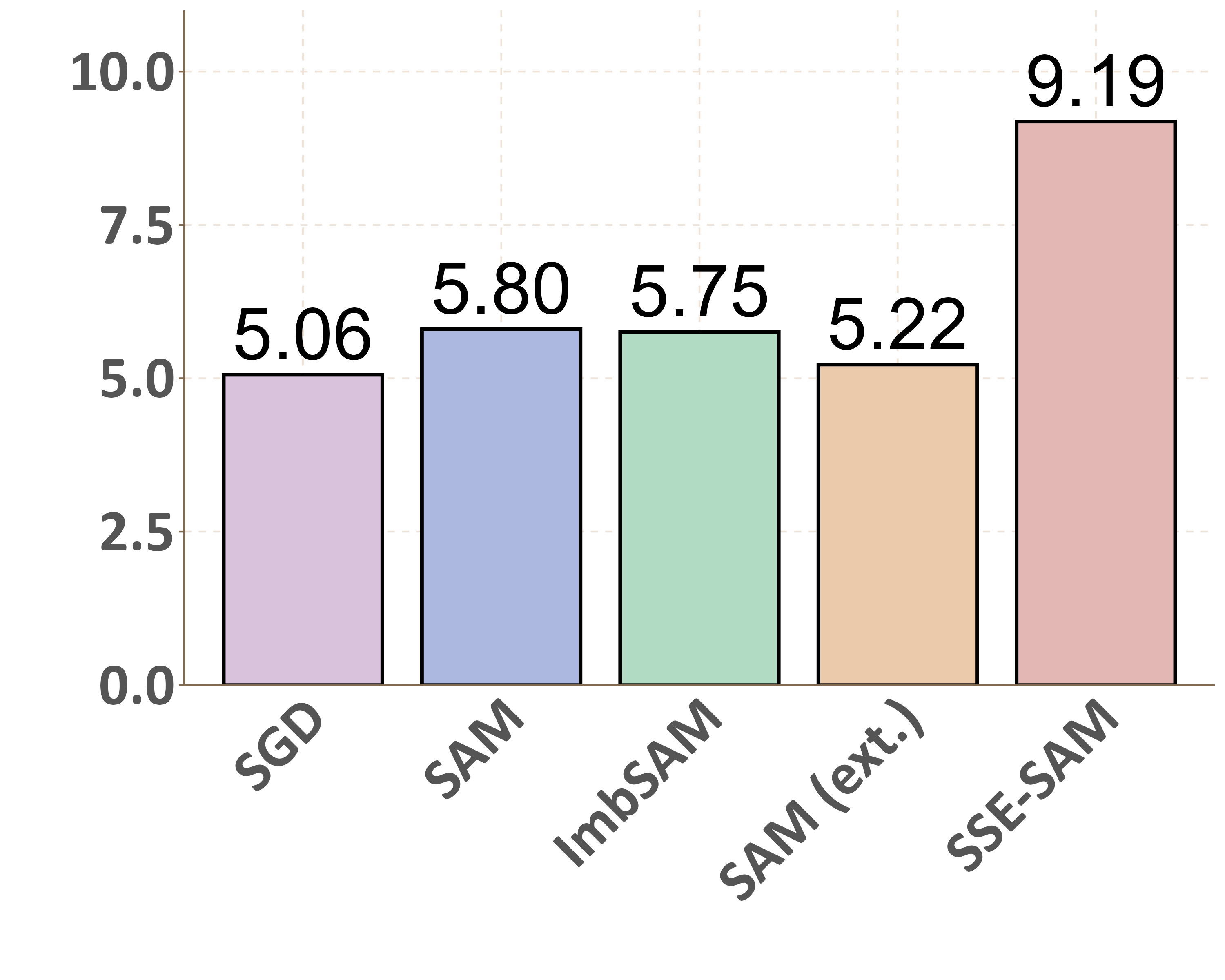}
      \caption{All classes}
      \label{fig:hessian_analysis_a}
    \end{subfigure}%
    \hfill
    \begin{subfigure}{0.23 \textwidth}
      \includegraphics[width=\textwidth]{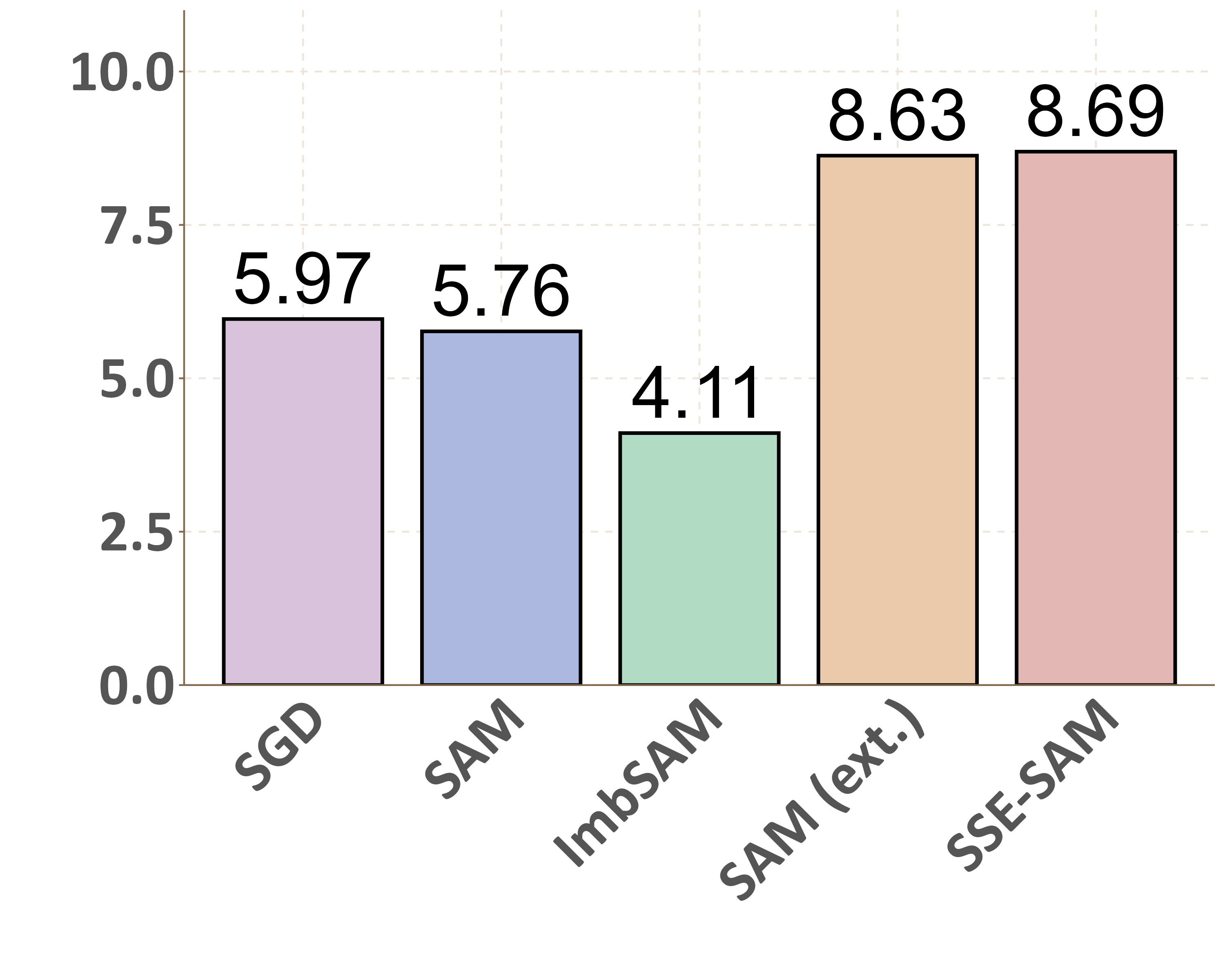}
      \caption{Class 0}
      \label{fig:hessian_analysis_b}
    \end{subfigure}%
    \hfill
    \begin{subfigure}{0.23 \textwidth}
      \includegraphics[width=\textwidth]{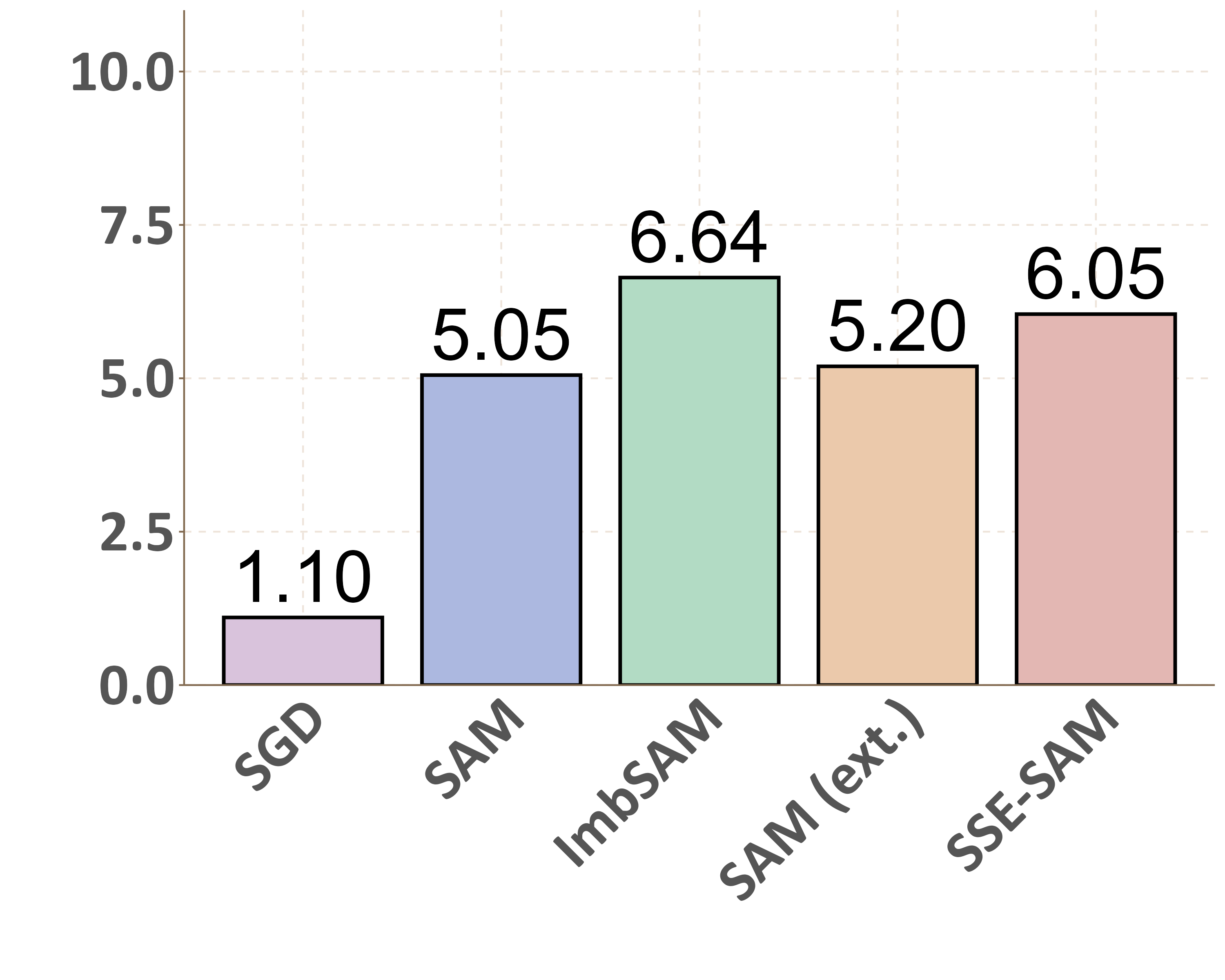}
      \caption{Class 80}
      \label{fig:hessian_analysis_c}
    \end{subfigure}%
    \caption{\textbf{Comparative Analysis of $\lratio$ for Different Models.} These results validate SSE-SAM's effective balance between head and tail classes compared with SAM and ImbSAM.}
    \label{fig:hessian_analysis}
\end{figure}

We utilize $\lratio$, as described in Preliminary section, to evaluate convexity. We analyze the $\lratio$ for all classes, head classes, and tail classes across models trained using SGD, SAM ($\rho = 0.05$), ImbSAM ($\rho=0.10$), SAM (ext.) ($\rhead=0.05, \rtail=0.10$), and our SSE-SAM ($\rhead=0.05, \rtail=0.10, \gamma=0.7$). The outcomes are illustrated in Fig.\ref{fig:hessian_analysis}. Fig.\ref{fig:hessian_analysis_a} distinctly demonstrates that SSE-SAM shows superior overall performance compared to the other methods. Specifically, for head classes, we analyze $\lratio$ for class 0, where Fig.\ref{fig:hessian_analysis_b} shows that SSE-SAM outperforms all other methods. In the case of class 80, which represents tail classes, Fig.\ref{fig:hessian_analysis_c} reveals that SSE-SAM's capacity to escape saddle points lies between ImbSAM and SAM (ext.), all sharing the same $\rtail$. Concerning $\rhead$, our analysis indicates that the predominance of head classes results in suboptimal escape from saddles in tail classes. Here, the $\rhead$ of ImbSAM is $0$, indicating an absence of suppressive effect from head classes; the $\rhead$ of SAM (ext.) is $0.05$, displaying the most significant suppressive effect; the $\rhead$ of SSE-SAM transitions from $0.05$ to $0$, reflecting a suppressive impact that lies between the other two methods. These findings corroborate our theoretical expectations.


\section{Conclusion}
\label{conclusion}

In this work, we theoretically show that SAM has better ability in escaping saddle points compared with ImbSAM. Under long-tail distribution, we analyze that SAM is good at escaping saddles for head classes but fails to efficiently escape saddles for tail classes. ImbSAM, on the contrary, focuses on escaping saddles for tail classes while suffers from insufficient training of head classes. Leveraging this complementary effect, we propose a two-stage saddle escaping algorithm SSE-SAM. At the first stage, SSE-SAM uses SAM to help head classes escape saddles. At the next stage, SSE-SAM sets $\rhead=0$ to focus on helping tail classes in escaping saddles. Experiment results show that SSE-SAM escapes saddles efficiently both on head and tail classes, achieving outstanding performance among all compared baselines.


\appendix


\section{Limitations}
\label{app:limitations}

This paper conducts a theoretical analysis of the ability of the SAM and ImbSAM algorithms to escape saddle points and introduces the SSE-SAM algorithm, which performs better in long-tail scenarios. Although progress has been made, there are still areas for improvement. Possible future research directions include:

\begin{itemize}
    \item Theoretically investigating the reasons why SSE-SAM performs better than expected. In Tab.~\ref{tab:trade-off}, SSE-SAM outperforms ImbSAM for both middle and tail classes. However, the misleading influence of head classes on tail classes is only mitigated in later stages. This unexpectedly superior performance warrants further theoretical exploration.
    \item Exploring the application of more smoothness regularization methods in long-tail distributions. This paper primarily focuses on the state-of-the-art smoothness regularization methods SAM and ImbSAM. Future work could involve introducing additional smoothness regularization techniques to improve model performance in handling long-tail distributions.
\end{itemize}


\section{Related Work}
\label{app:related_work}

In this section, we begin by describing relevant metrics and reviewing re-weighting methods in long-tail learning. We then explore popular smoothness regularization techniques. Next, we present conclusions regarding saddle points in the loss landscape. Finally, we explain the notation conventions used throughout this paper.

\subsection{Long-tail Learning}

Class re-balancing is a prevalent method for training on long-tailed datasets \cite{chawla2002smote, lin2017focal, yang2022learning, yang2022optimizing, hou2022adauc, han2024aucseg}. This approach seeks to mitigate the adverse effects of tail classes through three main strategies: re-sampling, class-sensitive learning, and logit adjustment. Re-sampling involves either oversampling tail classes \cite{chawla2002smote} or undersampling head classes \cite{buda2018systematic} to achieve a more balanced dataset. Class-sensitive learning \cite{lin2017focal, cao2019learning, kini2021label, gao2023feature, li2024size} modifies the learning objectives for different classes to equalize training impact. Logit adjustment \cite{menon2020long, wang2024unified} addresses class imbalance by modifying the prediction logits of neural networks.

\subsection{Smoothness Regularization Methods}

Smoothness regularization algorithms aim to reduce both the loss value and sharpness, guiding models toward flat minima in the loss landscape, which is known to enhance performance \cite{keskar2016large, jiang2019fantastic}. Sharpness-Aware Minimization (SAM) \cite{foret2020sharpness}, a notable method in this area, has gained significant attention recently for its solid theoretical foundation and impressive results.

For theory, \cite{woodworth2020kernel} demonstrate how initial scale impacts generalization in multilayer homogeneous models using gradient flow. In terms of theoretical research,  \cite{woodworth2020kernel} demonstrate how initial scale impacts generalization in multilayer homogeneous models using gradient flow. \cite{andriushchenko2022towards} further apply the gradient flow method to compare two SAM variants, m-SAM and n-SAM, showing that m-SAM's effects are similar to $\ell_1$ regularization, which improves its efficacy in sparse scenarios. \cite{wen2022does} prove that n-SAM actually minimizes maximum eigenvalue of Hessian, while m-SAM minimizes its trace. Moreover, \cite{wen2023sharpness} observe that although SAM may not always reach the flattest minima, it generally performs well, suggesting that minimizing sharpness is not its only pathway to enhanced performance. 

For application, \cite{du2021efficient} developed the Efficient Sharpness-Aware Minimization (ESAM) algorithm, which maintains SAM’s efficiency and generalization capabilities. \cite{kwon2021asam} introduce Adaptive Sharpness-Aware Minimization (ASAM), which uses adaptive sharpness that is scale-invariant, leading to improved model performance. \cite{mi2022make} propose Sparse SAM (SSAM), which applies a binary mask to achieve sparse perturbations. SSAM matches SAM in convergence speed and effectively smooths the loss landscape. Lastly, \cite{zhou2023imbsam} introduce Imbalanced SAM (ImbSAM), integrating class priors into SAM to tackle overfitting in tail classes within long-tail learning contexts.

\subsection{Escaping Saddle Points}

Saddle points are regions in the loss landscape that are flat but have both positive and negative curvatures. They occur where the function's gradient is zero and its Hessian matrix is indefinite, featuring both positive and negative eigenvalues. 

Research has demonstrated that models converging to saddle points typically exhibit poor generalization. By updating weights in the direction of negative curvature, models can significantly improve their generalization abilities. Thus, developing algorithms that effectively bypass or escape from saddle points is critical in machine learning \cite{dauphin2014identifying}. There are extensive theoretical studies on saddle point problem \cite{daneshmand2018escaping, jin2021nonconvex, rangwani2022escaping}, and many algorithms have been developped to efficiently escape from saddle points \cite{palaniappan2016stochastic, jin2017escape, staib2019escaping, criscitiello2019efficiently, zhang2021escape, huang2022efficiently}. \citet{rangwani2022escaping} found that the SGD method escapes saddle points in head classes but not in tail classes. Compared to SGD, SAM better assists both head and tail classes in escaping saddle points, thereby boosting generalization capabilities. However, SAM still faces challenges in aiding tail classes due to the overpowering influence of head classes.


\section{Computation of Weingarten Matrix}
\label{app:weingarten_computation}

For $\boldsymbol{w}=\left[w^{(1)}, \cdots, w^{(d)}\right]\in\mathbb{R}^d$, to simplify notation, the first-order partial derivative of the multivariate function $f(\boldsymbol{w})$ w.r.t. $w^{(i)}$ will be denoted as $\partial_if(\boldsymbol{w})$ or $\partial_if$, and the second-order partial derivative will be denoted as $\partial_{ij}f := \partial_{ij}f(\boldsymbol{w}):=\frac{\partial^2 f(\boldsymbol{w})}{\partial w^{(i)}\partial w^{(j)}}$.

We aim to compute $\mathbf{W}$. Drawing on theories from differential geometry\cite{lee2018introduction}, it is established that $\mathbf{W}=\mathcal{I}^{-1}\cdot\mathcal{II}$, where $\mathcal{I}$ represents the first fundamental form of the hypersurface, and $\mathcal{II}$ denotes the second fundamental form.

In a Cartesian coordinate system, the loss function hypersurface is parameterized as follows:
\begin{equation}
    \boldsymbol{r}\left(w^{(1)}, \cdots, w^{(d)}\right) = \left[w^{(1)}, \cdots, w^{(d)}, \mathcal{L}(\boldsymbol{w})\right]^{\top}.
\end{equation}

The computations yield:
\begin{equation}
\begin{aligned}
    \partial_i\boldsymbol{r} &= \left[0,\cdots,0,\underbrace{1}_{i\text{-th position}}, 0, \cdots, 0, \partial_i\mathcal{L}\right]^{\top}, \\
    \partial_{ij}\boldsymbol{r} &= \left[0, \cdots, 0, \partial_{ij}\mathcal{L}\right]^{\top}.
\end{aligned}
\end{equation}

The $ij$-th element of the first fundamental form of the hypersurface is:
\begin{equation}
\begin{aligned}
    \mathcal{I}_{ij} &= \partial_i\boldsymbol{r}^{\top}\partial_j\boldsymbol{r} \\
    &= \left\{\begin{aligned}
        & \partial_i\mathcal{L}\partial_j\mathcal{L}    &, i\neq j,\\
        & \partial_i\mathcal{L}^2 + 1 &, i=j.
    \end{aligned}
    \right.
\end{aligned}\end{equation}
Hence, $\mathcal{I} = \mathbf{I} + \nabla \mathcal{L}\cdot \nabla \mathcal{L}^{\top}$ represents a typical rank-one update matrix. The inverse can be derived using the Sherman-Morrison formula:
\begin{equation}
    \mathcal{I}^{-1} = \mathbf{I} - \frac{\nabla \mathcal{L}\cdot \nabla \mathcal{L}^{\top}}{\alpha^2},
\end{equation}
where $\alpha = \sqrt{1 + \lVert\nabla \mathcal{L}\rVert^2}$ simplifies the denominator.

In a $d+1$-dimensional space, the normal vector to the hypersurface defined by $F(x_1, ..., x_d, x_{d+1})=0$ is $n=\nabla F$. Considering $\mathcal{L} - \mathcal{L}(\boldsymbol{w}) = 0$ as the equation of the hypersurface in $d+1$ dimensions, with $L$ representing the $d+1$-th component. The normal vector at the point $(\boldsymbol{w}, \mathcal{L}(\boldsymbol{w}))$ is given by:
\begin{equation}
    \boldsymbol{n} = \left[-\partial_1\mathcal{L}, \cdots, -\partial_d\mathcal{L}, 1\right]^{\top} / \alpha,
\end{equation}
where the denominator $\alpha$ is the normalization factor. Therefore, the $ij$-th element of the second fundamental form of the hypersurface is:
\begin{equation}
\begin{aligned}
    \mathcal{II}_{ij} &= \partial_{ij}\boldsymbol{r}^{\top}\boldsymbol{n} \\
    &= \partial_{ij}\mathcal{L} / \alpha.
\end{aligned}
\end{equation}

Thus, the Weingarten matrix, which describes how the hypersurface bends in different directions, can be computed as:
\begin{equation}
    \mathbf{W} = \left(\mathbf{I} - \frac{\nabla \mathcal{L}\cdot\nabla \mathcal{L}^{\top}}{\alpha^2}\right)\frac{\nabla ^2\mathcal{L}}{\alpha}.
\end{equation}


\section{Omitted Proofs}
\label{app:omitted_proofs}

We provide the omitted proofs of propositions and theorems.

\subsection{Proof of Thm.\ref{thm:projection-onto-negative-curvature}}
\label{app:projection-onto-negative-curvature}

We first re-state Thm.~\ref{thm:projection-onto-negative-curvature} here.

\begin{theorem}
    Let $\mathcal{W} := \left\{\bw_0, \bw_1, \cdots, \bw_t, \btw_1, \cdots, \btw_t\right\}$, where $\bw_0$ is the starting weight, $\bw_t$ denotes the weight obtained after optimizing for $t$ steps using SGD, $\btw_t$ denotes the weight obtained after optimizing for $t$ steps using SAM. All weights in $\mathcal{W}$ are around saddle point. Let $L := \max\limits_{\bw \in \mathcal{W}}\lVert \nabla L(\bw)\rVert>0$. Given $\mu\geq 1$, when $\lmin\leq -\frac{L}{\eta\rho}\left(\frac{1+\mu}{2}\eta+\sqrt{\left(\frac{1+\mu}{2}\right)^2\eta^2+\frac{\left(2+\mu\right)\eta\rho}{L}}\right)$, under second order Taylor approximation, we have
    \begin{equation}
    \begin{aligned}
        \lvert\langle \boldsymbol{v}_{\text{min}}, \widetilde{\boldsymbol{w}}_t-\boldsymbol{w}_0\rangle\rvert\geq\mu\lvert\langle \boldsymbol{v}_{\text{min}}, \boldsymbol{w}_t-\boldsymbol{w}_0\rangle\rvert\\
        +\left((\mu^2+\mu)t-2\mu^2-\mu\right)\frac{\lvert\langle\vmin, \nabla \mathcal{L}(\bw_0)\rangle\rvert}{\lvert\lmin\rvert}.
    \end{aligned}
    \end{equation}
\end{theorem}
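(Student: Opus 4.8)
The plan is to reduce the high-dimensional SGD and SAM dynamics to scalar recurrences along the direction $\vmin$, solve the SGD recurrence exactly, show that each SAM step amplifies the $\vmin$-component by a factor whose magnitude exceeds $\mu$ times the SGD factor, and finally assemble these per-step comparisons by induction on the number of steps.

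First I would invoke the second-order Taylor approximation to linearize the gradient near the saddle, writing $\nabla\mathcal{L}(\bw)\approx\nabla\mathcal{L}(\bw_0)+\mathcal{H}(\bw-\bw_0)$. Projecting the SGD update $\bw_{k+1}=\bw_k-\eta\nabla\mathcal{L}(\bw_k)$ onto $\vmin$ and using $\mathcal{H}\vmin=\lmin\vmin$ collapses the dynamics to the scalar recurrence $a_{k+1}=(1-\eta\lmin)a_k-\eta g_0$, where $a_k:=\langle\vmin,\bw_k-\bw_0\rangle$ and $g_0:=\langle\vmin,\nabla\mathcal{L}(\bw_0)\rangle$. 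With $a_0=0$ this solves in closed form to $a_t=\frac{g_0}{\lmin}\big((1-\eta\lmin)^t-1\big)$, so that $|a_t|=\frac{|g_0|}{|\lmin|}\big((1+\eta|\lmin|)^t-1\big)$. For SAM, the same projection applied to $\btw_{k+1}=\btw_k-\eta\nabla\mathcal{L}(\btw_k+\be_k)$ with $\be_k=\rho\,\nabla\mathcal{L}(\btw_k)/\lVert\nabla\mathcal{L}(\btw_k)\rVert$ yields $\tilde a_{k+1}=(1-\eta c_k\lmin)\tilde a_k-\eta c_k g_0$, where $c_k:=1+\rho\lmin/\lVert\nabla\mathcal{L}(\btw_k)\rVert$ is the curvature-induced amplification introduced by the perturbation.

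The crux is controlling the factor $c_k$. Since $\btw_k\in\mathcal{W}$ gives $\lVert\nabla\mathcal{L}(\btw_k)\rVert\le L$ and $\lmin<0$, I would bound $c_k\le 1-\rho|\lmin|/L$, so that under the hypothesis on $\lmin$ the quantity $1-\eta c_k\lmin$ becomes strongly negative with magnitude at least $\mu(1+\eta|\lmin|)$. Concretely, the stated threshold on $\lmin$ is exactly the larger root in $x=|\lmin|$ of the quadratic $\tfrac{\eta\rho}{L}x^2-(1+\mu)\eta x-(2+\mu)=0$, engineered so that $\eta\rho|\lmin|^2/L\ge(1+\mu)(1+\eta|\lmin|)$, with the extra slack (the $(2+\mu)$ rather than $(1+\mu)$ constant) reserved to absorb the additive $g_0$-terms. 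This per-step magnitude domination is where the factor $\mu$ in the conclusion originates.

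Finally I would chain the per-step bounds by induction on $k$. Applying the reverse triangle inequality to the SAM recurrence gives $|\tilde a_{k+1}|\ge|1-\eta c_k\lmin|\,|\tilde a_k|-\eta|c_k||g_0|$; comparing against the SGD step and summing the accumulated $\eta|c_k||g_0|$ corrections produces the linear-in-$t$ term $\big((\mu^2+\mu)t-2\mu^2-\mu\big)\frac{|g_0|}{|\lmin|}$, whose $\mu^2$ coefficients come from multiplying the amplification bound against the per-step additive residual. I expect the main obstacle to be precisely this bookkeeping: because $c_k<0$ the SAM iterate oscillates in sign, so a naive magnitude comparison fails, and one must track the signed recurrence (or apply the reverse triangle inequality uniformly) while verifying that the slack built into the $\lmin$ threshold dominates the accumulated additive error at \emph{every} step. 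The non-constant denominator $\lVert\nabla\mathcal{L}(\btw_k)\rVert$ is the secondary difficulty, resolved by the uniform bound $L$, which is also why $L$ enters the hypothesis.
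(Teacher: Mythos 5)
Your reduction---second-order Taylor linearization, projection onto $\vmin$, the per-step SAM factor $1-\eta c_k\lmin=1-\eta\lmin-\eta\rho\lmin^2/\lVert\nabla\mathcal{L}(\btw_k)\rVert$ controlled through $\lVert\nabla\mathcal{L}(\btw_k)\rVert\le L$, and the identification of the threshold on $\lmin$ as the positive root of $\tfrac{\eta\rho}{L}x^2-(1+\mu)\eta x-(2+\mu)=0$---matches the paper's Lemmas \ref{lem:sgd-vmin-multiply-nabla-l} and \ref{lem:sam-vmin-multiply-nabla-l}. The divergence, and the gap, is in the final assembly. The paper never runs an inhomogeneous recursion: it tracks the gradient projection $b_k:=\langle\vmin,\nabla\mathcal{L}(\btw_k)\rangle=g_0+\lmin\tilde a_k$, in which your own recurrence becomes purely multiplicative, $b_{k+1}=(1-\eta c_k\lmin)\,b_k$, hence $\lvert b_t\rvert\ge a^t\lvert g_0\rvert$ with $a:=\eta\rho\lmin^2/L+\eta\lmin-1$; the triangle inequality is paid exactly once, at the end, when converting back through $\tilde a_t=(b_t-g_0)/\lmin$. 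The linear-in-$t$ term then comes from the difference-of-powers factorization $a^t-\bigl(\mu(1-\eta\lmin)\bigr)^t=\bigl(a-\mu(1-\eta\lmin)\bigr)\sum_{i=0}^{t-1}T_i$, with $a-\mu(1-\eta\lmin)\ge 1$, $T_0\ge 1+\mu$, $T_{t-1}\ge\mu$ and the middle $T_i\ge\mu(1+\mu)$---not, as you claim, from ``summing the accumulated corrections.''

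Your assembly instead applies the reverse triangle inequality at every step, $A_{k+1}\ge M_kA_k-\eta\lvert c_k\rvert\lvert g_0\rvert$ with $A_k:=\lvert\tilde a_k\rvert$, $M_k:=\lvert1-\eta c_k\lmin\rvert$, and this is provably too lossy. Since $\eta c_k\lmin=M_k+1$ and $c_k\lmin=\lvert c_k\rvert\lvert\lmin\rvert$, the per-step correction equals $(M_k+1)E$ with $E:=\lvert g_0\rvert/\lvert\lmin\rvert$, i.e., it is comparable to the amplification itself. The map $x\mapsto Mx-(M+1)E$ has fixed point $\tfrac{M+1}{M-1}E$, and your iteration escapes it only with a prefactor proportional to $M-2$. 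The hypothesis guarantees no more than $M_k\ge 1+\mu(1-\eta\lmin)$, and the regime $\mu=1$, $\eta\lvert\lmin\rvert\to0$ is compatible with the hypothesis (the threshold can be met by making $\rho\lmin^2/L$ large while $\eta\lvert\lmin\rvert$ stays small). There $M_k$ is only guaranteed to be about $2$, the starting value $A_1=(M_0+1)E\approx 3E$ sits essentially at the fixed point, and your lower bound stalls near $3E$, whereas the theorem's right-hand side, $\mu(s^t-1)E+\bigl((\mu^2+\mu)t-2\mu^2-\mu\bigr)E$ with $s:=1-\eta\lmin\approx1$, is about $(2t-3)E$ and already exceeds $3E$ at $t=4$. (The true $\lvert\tilde a_t\rvert$ does grow like $(M^t-1)E$, as the homogeneous recursion shows; it is your bound, not SAM, that stalls.) So the uniform reverse-triangle-inequality bookkeeping cannot close the induction. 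Your parenthetical alternative---``track the signed recurrence''---is the right instinct, but the signs of $\tilde a_k$ alternate precisely because $b_k$ does, so carrying that out is the change of variables to $b_k$, i.e., it collapses into the paper's proof rather than providing an independent route.
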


According to the theorem's assumptions, the weights on the optimization trajectory are all near a saddle point. Thus, the original loss function can be approximated using a second-order Taylor expansion around $\bw_0$:
\begin{equation}\label{eq:taylor-expansion-l}
\begin{aligned}
    \mathcal{L}(\boldsymbol{w}_i)=\mathcal{L}(\boldsymbol{w}_0) + (\boldsymbol{w}-\boldsymbol{w}_0)^{\top}\nabla \mathcal{L}(\boldsymbol{w}_0) \\
    + \frac{1}{2}(\boldsymbol{w}-\boldsymbol{w}_0)^{\top}\mathcal{H}(\boldsymbol{w}-\boldsymbol{w}_0).
\end{aligned}
\end{equation}
where $\mathcal{H}$ denotes the Hessian matrix at $\bw_0$.

To prove Thm.~\ref{thm:projection-onto-negative-curvature}, we first introduce two lemmas that measure the magnitude of the inner product between $\boldsymbol{v}_{\text{min}}$ and the gradient of the loss function $\nabla L$ after $t$ steps of optimization using SGD and SAM respectively.

\begin{lemma}\label{lem:sgd-vmin-multiply-nabla-l}
    Under the conditions of Thm.~\ref{thm:projection-onto-negative-curvature}, it holds that
    \begin{equation}\label{eq:sgd-vmin-multiply-nabla-l}
        \langle \boldsymbol{v}_{\text{min}}, \nabla \mathcal{L}(\boldsymbol{w}_t)\rangle = (1-\eta \lambda_{\text{min}})^t\langle \boldsymbol{v}_{\text{min}}, \nabla \mathcal{L}(\boldsymbol{w}_0)\rangle.
    \end{equation}
\end{lemma}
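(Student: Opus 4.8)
The plan is to exploit the fact that, under the second-order Taylor approximation in \eqref{eq:taylor-expansion-l}, the gradient of $\mathcal{L}$ becomes an \emph{affine} function of the weight, so that a single SGD step acts on the gradient as one fixed linear operator. Concretely, differentiating the quadratic model gives the standing identity $\nabla\mathcal{L}(\bw) = \nabla\mathcal{L}(\bw_0) + \mathcal{H}(\bw - \bw_0)$ (using that $\mathcal{H}$ is symmetric). First I would substitute the SGD update $\bw_{i+1} = \bw_i - \eta\nabla\mathcal{L}(\bw_i)$ into this expression: the terms $\nabla\mathcal{L}(\bw_0) + \mathcal{H}(\bw_i-\bw_0)$ recombine into $\nabla\mathcal{L}(\bw_i)$, leaving the clean recurrence
\[
\nabla\mathcal{L}(\bw_{i+1}) = (\mathbf{I}-\eta\mathcal{H})\,\nabla\mathcal{L}(\bw_i).
\]

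Next I would iterate this recurrence $t$ times to obtain $\nabla\mathcal{L}(\bw_t) = (\mathbf{I}-\eta\mathcal{H})^t\,\nabla\mathcal{L}(\bw_0)$, and then take the inner product with $\vmin$. Because $\mathcal{H}$ is symmetric, the matrix $(\mathbf{I}-\eta\mathcal{H})^t$ is self-adjoint, so I can transfer it onto $\vmin$; since $\vmin$ is an eigenvector of $\mathcal{H}$ with eigenvalue $\lmin$, it is also an eigenvector of $(\mathbf{I}-\eta\mathcal{H})^t$ with eigenvalue $(1-\eta\lmin)^t$. This immediately yields $\langle \vmin, \nabla\mathcal{L}(\bw_t)\rangle = (1-\eta\lmin)^t\langle\vmin,\nabla\mathcal{L}(\bw_0)\rangle$, which is exactly \eqref{eq:sgd-vmin-multiply-nabla-l}.

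The argument is essentially a one-line linear recursion once the quadratic model is in place, so there is no genuinely hard step. The only point requiring care is the legitimacy of treating the gradient as exactly affine: this is precisely what the phrase ``under second order Taylor approximation'' licenses, and it is already the standing hypothesis of Thm.~\ref{thm:projection-onto-negative-curvature}, since all iterates in $\mathcal{W}$ lie near the saddle. I would therefore state the affine-gradient identity explicitly at the outset, carry out the induction on $i$ to justify $\nabla\mathcal{L}(\bw_t)=(\mathbf{I}-\eta\mathcal{H})^t\nabla\mathcal{L}(\bw_0)$, and flag that every equality holds within this approximation; the diagonalization of $\mathcal{H}$ on the $\vmin$ eigendirection then closes the proof.
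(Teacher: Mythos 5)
Your proof is correct, and it rests on the same foundation as the paper's: the affine-gradient identity $\nabla\mathcal{L}(\bw) = \nabla\mathcal{L}(\bw_0) + \mathcal{H}(\bw-\bw_0)$ from the second-order Taylor model, leading to the key identity $\nabla\mathcal{L}(\bw_t) = (\mathbf{I}-\eta\mathcal{H})^t\nabla\mathcal{L}(\bw_0)$ and then diagonalization along $\vmin$. The route you take to that key identity, however, is genuinely cleaner than the paper's. The paper first derives a recursion for the displacement, $\bw_t-\bw_0 = (\mathbf{I}-\eta\mathcal{H})(\bw_{t-1}-\bw_0)-\eta\nabla\mathcal{L}(\bw_0)$, sums the resulting geometric series into the closed form $\bw_t-\bw_0 = -\left[\mathbf{I}-(\mathbf{I}-\eta\mathcal{H})^t\right]\mathcal{H}^{-1}\nabla\mathcal{L}(\bw_0)$, and substitutes this back into the gradient expression; that intermediate step implicitly assumes $\mathcal{H}$ is invertible, which need not hold at a saddle (and the paper's substitution line also carries a sign typo that cancels in the final result). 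Your one-step gradient recurrence $\nabla\mathcal{L}(\bw_{i+1}) = (\mathbf{I}-\eta\mathcal{H})\nabla\mathcal{L}(\bw_i)$ bypasses $\mathcal{H}^{-1}$ entirely, and interestingly mirrors the structure the paper itself adopts in the companion SAM lemma, where the recursion $\nabla\mathcal{L}(\btw_t) = (\mathbf{I}-\eta\mathcal{H})\nabla\mathcal{L}(\btw_{t-1})-\eta\mathcal{H}^2\be_{t-1}$ is used directly. The closing steps differ only cosmetically: the paper expands $(\mathbf{I}-\eta\mathcal{H})^t$ by the binomial theorem and applies $\mathcal{H}^i\vmin=\lmin^i\vmin$ term by term, while you invoke self-adjointness to move $(\mathbf{I}-\eta\mathcal{H})^t$ onto $\vmin$ as an eigenvector with eigenvalue $(1-\eta\lmin)^t$; both are valid, yours being marginally more economical.
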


\begin{proof}
    Starting from the Taylor expansion given in Eq. (~\ref{eq:taylor-expansion-l}), we have:
    \begin{equation}\label{eq:taylor-expansion-nabla-l}
        \nabla \mathcal{L}(\boldsymbol{w}) = \nabla \mathcal{L}(\boldsymbol{w}_0) + \mathcal{H}(\boldsymbol{w}-\boldsymbol{w}_0).
    \end{equation}

    Substituting $\boldsymbol{w}_t$ for $\boldsymbol{w}$ in the above equation yields:
    \begin{equation}\label{eq:sgd-taylor-expansion-nabla-l}
        \nabla \mathcal{L}(\boldsymbol{w}_t) = \nabla \mathcal{L}(\boldsymbol{w}_0) + \mathcal{H}(\boldsymbol{w}_t-\boldsymbol{w}_0).
    \end{equation}
    
    The displacement from the initial point can be expressed as:
    \begin{equation}
    \begin{aligned}
        \boldsymbol{w}_t - \boldsymbol{w}_0 &= \boldsymbol{w}_{t-1} - \eta\nabla \mathcal{L}(\boldsymbol{w}_{t-1}) - \boldsymbol{w}_0 \\
                  &= \boldsymbol{w}_{t-1} - \boldsymbol{w}_0 - \eta\left(\nabla \mathcal{L}(\boldsymbol{w}_0)+\mathcal{H}\left(\boldsymbol{w}_{t-1}-\boldsymbol{w}_0\right)\right) \\
                  &= (\mathbf{I}-\eta \mathcal{H})(\boldsymbol{w}_{t-1}-\boldsymbol{w}_0) - \eta\nabla \mathcal{L}(\boldsymbol{w}_0).
    \end{aligned}
    \end{equation}

    Iteratively applying this relationship gives:
    \begin{equation}\label{eq:sam-w_t-w_0}
    \begin{aligned}
        \boldsymbol{w}_t-\boldsymbol{w}_0 =& (\mathbf{I}-\eta \mathcal{H})^{t-1}(\boldsymbol{w}_1-\boldsymbol{w}_0) - \\
                & \eta\sum_{i=0}^{t-2}(\mathbf{I}-\eta\mathcal{H})^i\nabla \mathcal{L}(\boldsymbol{w}_0) \\
                =& -\eta\sum_{i=0}^{t-1}(\mathbf{I}-\eta\mathcal{H})^i\nabla \mathcal{L}(\boldsymbol{w}_0) \\
                =& -\left[\mathbf{I}-\left(\mathbf{I}-\eta\mathcal{H}\right)^t\right]\mathcal{H}^{-1}\nabla \mathcal{L}(\boldsymbol{w}_0).
    \end{aligned}
    \end{equation}

    Consequently, the gradient at $\bw_t$ can be rewritten as:
    \begin{equation}
    \begin{aligned}
        \nabla \mathcal{L}(\boldsymbol{w}_t) &= \nabla \mathcal{L}(\boldsymbol{w}_0) + \mathcal{H}(\boldsymbol{w}_t-\boldsymbol{w}_0) \\
                      &= \nabla \mathcal{L}(\boldsymbol{w}_0) + \mathcal{H}\left[\mathbf{I}-\left(\mathbf{I}-\eta \mathcal{H}\right)^t\right]\mathcal{H}^{-1}\nabla \mathcal{L}(\boldsymbol{w}_0) \\
                      &= (\mathbf{I}-\eta \mathcal{H})^t\nabla \mathcal{L}(\boldsymbol{w}_0).
    \end{aligned}
    \end{equation}

    Thus, the inner product of interest becomes:
    \begin{equation}
    \begin{aligned}
        \langle \boldsymbol{v}_{\text{min}}, \nabla \mathcal{L}(\boldsymbol{w}_t)\rangle &= \nabla \mathcal{L}(\boldsymbol{w}_0)^{\top}(\mathbf{I}-\eta\mathcal{H})^t\boldsymbol{v}_{\text{min}} \\
        &= \nabla \mathcal{L}(\boldsymbol{w}_0)^{\top}\sum_{i=0}^{t}\binom{t}{i}(-1)^i\eta^i\mathcal{H}^i\boldsymbol{v}_{\text{min}} \\
        &= \nabla \mathcal{L}(\boldsymbol{w}_0)^{\top}\sum_{i=0}^{t}\binom{t}{i}(-1)^i\eta^i\lambda_{\text{min}}^i\boldsymbol{v}_{\text{min}} \\
        &= (1-\eta\lambda_{\text{min}})^t\nabla \mathcal{L}(\boldsymbol{w}_0)^{\top}\boldsymbol{v}_{\text{min}} \\
        &= (1-\eta\lambda_{\text{min}})^t\langle \boldsymbol{v}_{\text{min}}, \nabla \mathcal{L}(\boldsymbol{w}_0)\rangle.
    \end{aligned}
    \end{equation}
\end{proof}

\begin{lemma}\label{lem:sam-vmin-multiply-nabla-l}
    Under the conditions of Thm.~\ref{thm:projection-onto-negative-curvature}, it holds that
    \begin{equation}\label{eq:sam-vmin-multiply-nabla-l}
    \begin{aligned}
        \lvert\langle \boldsymbol{v}_{\text{min}}, \nabla \mathcal{L}(\widetilde{\boldsymbol{w}}_t)\rangle\rvert \geq& \left(\eta\rho\lambda_{\text{min}}^2/L + \eta \lambda_{\text{min}} - 1\right) ^t\cdot\\
        &\lvert \langle \boldsymbol{v}_{\text{min}}, \nabla \mathcal{L}(\boldsymbol{w}_0)\rangle\rvert.
    \end{aligned}
    \end{equation}
\end{lemma}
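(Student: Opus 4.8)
The plan is to mirror the structure of Lemma~\ref{lem:sgd-vmin-multiply-nabla-l}, but now tracking the extra curvature term that the SAM perturbation introduces. First I would write out a single SAM step explicitly. Under the second-order Taylor model (Eq.~\ref{eq:taylor-expansion-nabla-l}) the gradient is affine, $\nabla\mathcal{L}(\bw)=\nabla\mathcal{L}(\bw_0)+\mathcal{H}(\bw-\bw_0)$, so the ascent gradient at $\btw_{t-1}+\be_{t-1}$, with $\be_{t-1}=\rho\,\nabla\mathcal{L}(\btw_{t-1})/\lVert\nabla\mathcal{L}(\btw_{t-1})\rVert$, reduces to $\nabla\mathcal{L}(\btw_{t-1})+\mathcal{H}\be_{t-1}$. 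Plugging this into the SAM update $\btw_t=\btw_{t-1}-\eta\nabla\mathcal{L}(\btw_{t-1}+\be_{t-1})$ and applying the affine identity once more via $\nabla\mathcal{L}(\bw_0)+\mathcal{H}(\btw_{t-1}-\bw_0)=\nabla\mathcal{L}(\btw_{t-1})$, I would re-express $\nabla\mathcal{L}(\btw_t)$ entirely through the current gradient, yielding the one-step recursion
\begin{equation}
\nabla\mathcal{L}(\btw_t)=\left(\mathbf{I}-\eta\mathcal{H}-\frac{\eta\rho}{\lVert\nabla\mathcal{L}(\btw_{t-1})\rVert}\mathcal{H}^2\right)\nabla\mathcal{L}(\btw_{t-1}).
\end{equation}

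Next I would project onto $\vmin$. Because the bracketed operator is a polynomial in the symmetric matrix $\mathcal{H}$, it is self-adjoint and shares the eigenvector $\vmin$, now with eigenvalue $1-\eta\lmin-\eta\rho\lmin^2/\lVert\nabla\mathcal{L}(\btw_{t-1})\rVert$. Hence the inner product obeys the scalar recursion
\begin{equation}
\langle\vmin,\nabla\mathcal{L}(\btw_t)\rangle=\left(1-\eta\lmin-\frac{\eta\rho\lmin^2}{\lVert\nabla\mathcal{L}(\btw_{t-1})\rVert}\right)\langle\vmin,\nabla\mathcal{L}(\btw_{t-1})\rangle,
\end{equation}
which I would iterate from $\btw_0=\bw_0$ up to step $t$, producing a product of $t$ such factors multiplying $\langle\vmin,\nabla\mathcal{L}(\bw_0)\rangle$.

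The crux is converting each step factor into the uniform, step-independent bound $\eta\rho\lmin^2/L+\eta\lmin-1$. Since all iterates lie near the saddle, $\lVert\nabla\mathcal{L}(\btw_i)\rVert\le L$ by definition of $L$, so $\eta\rho\lmin^2/\lVert\nabla\mathcal{L}(\btw_i)\rVert\ge\eta\rho\lmin^2/L$ and therefore the factor is at most $g:=1-\eta\lmin-\eta\rho\lmin^2/L$. The delicate point is the sign. I would use the hypothesis on $\lmin$ to force the quadratic $\eta\rho\lmin^2/L+\eta\lmin-1$ to be nonnegative, equivalently $g\le 0$: the threshold in Thm.~\ref{thm:projection-onto-negative-curvature} (even at $\mu=1$) is more negative than the relevant root of this quadratic, so $g\le 0$ holds. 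Each factor is then $\le g\le 0$, hence negative, and passing to absolute values flips the inequality into $\lvert 1-\eta\lmin-\eta\rho\lmin^2/\lVert\nabla\mathcal{L}(\btw_i)\rVert\rvert\ge -g=\eta\rho\lmin^2/L+\eta\lmin-1\ge 0$. Multiplying these $t$ per-step magnitude bounds gives the claimed inequality.

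The main obstacle I anticipate is exactly this sign bookkeeping. The estimate $\lVert\nabla\mathcal{L}(\btw_i)\rVert\le L$ weakens the factor by making it larger (less negative), which is the wrong direction for a magnitude lower bound unless one has first certified that the factor is already negative; that certification is precisely what the quadratic condition on $\lmin$ supplies, and it explains why the hypothesis is phrased as a threshold on $\lmin$ rather than on $\rho$ or $\eta$ alone. I would therefore be careful to carry the argument through absolute values of the per-step factors rather than through the signed inner products, since the latter may alternate in sign along the trajectory.
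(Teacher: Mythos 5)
Your proposal is correct and takes essentially the same route as the paper's proof: both derive the one-step recursion $\nabla\mathcal{L}(\btw_t)=\left(\mathbf{I}-\eta\mathcal{H}-\frac{\eta\rho}{\lVert\nabla\mathcal{L}(\btw_{t-1})\rVert}\mathcal{H}^2\right)\nabla\mathcal{L}(\btw_{t-1})$ under the quadratic model, project onto $\vmin$ to get the scalar recursion, use the threshold on $\lmin$ (instantiated at $\mu=1$) to certify that each per-step factor is negative, and then lower-bound its magnitude by $\eta\rho\lmin^2/L+\eta\lmin-1$ via $\lVert\nabla\mathcal{L}(\btw_{i})\rVert\le L$ before iterating. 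The only cosmetic difference is that you obtain the recursion directly from the affine gradient identity applied twice, whereas the paper unrolls the whole trajectory and differences consecutive expressions to reach the same recursion.
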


\begin{proof}
    Starting with the Taylor expansion from Eq. (~\ref{eq:taylor-expansion-l}), we have:
    \begin{equation}
        \nabla \mathcal{L}(\boldsymbol{w}) = \nabla \mathcal{L}(\boldsymbol{w}_0) + \mathcal{H}(\boldsymbol{w}-\boldsymbol{w}_0).
    \end{equation}

    Substituting $\widetilde{\boldsymbol{w}}_t+\boldsymbol{\epsilon}_t$ for $\boldsymbol{w}$ in the above equation, we obtain:
    \begin{equation}\label{eq:sam-taylor-expansion-nabla-l}
        \nabla \mathcal{L}(\widetilde{\boldsymbol{w}}_t+\boldsymbol{\epsilon}_t) = \nabla \mathcal{L}(\boldsymbol{w}_0) + \mathcal{H}(\widetilde{\boldsymbol{w}}_t+\boldsymbol{\epsilon}_t-\boldsymbol{w}_0).
    \end{equation}

    For SAM, considering iterative updates:
    \begin{equation}
    \begin{aligned}
        \widetilde{\boldsymbol{w}}_t-\boldsymbol{w}_0 =& \widetilde{\boldsymbol{w}}_{t-1} - \eta\nabla \mathcal{L}(\widetilde{\boldsymbol{w}}_{t-1} + \boldsymbol{\epsilon}_{t-1}) - \boldsymbol{w}_0 \\
        =& \widetilde{\boldsymbol{w}}_{t-1} - \boldsymbol{w}_0 - \eta(\nabla \mathcal{L}(\boldsymbol{w}_0) +\\
        & \mathcal{H}\left(\widetilde{\boldsymbol{w}}_{t-1} - \boldsymbol{w}_0 + \boldsymbol{\epsilon}_{t-1}\right)) \\
        =& (\mathbf{I}-\eta\mathcal{H})(\widetilde{\boldsymbol{w}}_{t-1}-\boldsymbol{w}_0) - \\
        & \eta\left(\nabla \mathcal{L}(\boldsymbol{w}_0)+\mathcal{H}\boldsymbol{\epsilon}_{t-1}\right) \\
        =& -\eta\sum_{i=0}^{t-1}(\mathbf{I}-\eta\mathcal{H})^i\left(\nabla \mathcal{L}(\boldsymbol{w}_0)+\mathcal{H}\boldsymbol{\epsilon}_{t-1-i}\right) \\
        =& -\left[\mathbf{I}-\left(\mathbf{I}-\eta\mathcal{H}\right)^t\right]\mathcal{H}^{-1}\nabla \mathcal{L}(\boldsymbol{w}_0) - \\
        & \eta\sum_{i=0}^{t-1}(\mathbf{I}-\eta\mathcal{H})^i\mathcal{H}\boldsymbol{\epsilon}_{t-1-i}.
    \end{aligned}
    \end{equation}

    This leads to
    \begin{equation}
    \begin{aligned}
        \nabla \mathcal{L}(\widetilde{\boldsymbol{w}}_t) =& (\mathbf{I}-\eta\mathcal{H})^t\nabla \mathcal{L}(\boldsymbol{w}_0) - \\
        & \eta\sum_{i=0}^{t-1}(\mathbf{I}-\eta\mathcal{H})^i\mathcal{H}^2\boldsymbol{\epsilon}_{t-1-i},
    \end{aligned}
    \end{equation}
    and
    \begin{equation}
    \begin{aligned}
        \nabla \mathcal{L}(\widetilde{\boldsymbol{w}}_{t-1}) =& (\mathbf{I}-\eta\mathcal{H})^{t-1}\nabla \mathcal{L}(\boldsymbol{w}_0) - \\
        & \eta\sum_{i=0}^{t-1}(\mathbf{I}-\eta\mathcal{H})^{i-1}\mathcal{H}^2\boldsymbol{\epsilon}_{t-1-i}.
    \end{aligned}
    \end{equation}

    Therefore, the recursive relation is established by combining the above two formulas:
    \begin{equation}
        \nabla \mathcal{L}(\widetilde{\boldsymbol{w}}_t) = (\mathbf{I}-\eta \mathcal{H})\nabla \mathcal{L}(\widetilde{\boldsymbol{w}}_{t-1}) - \eta\mathcal{H}^2\boldsymbol{\epsilon}_{t-1}.
    \end{equation}

    This recursion yields:
    \begin{equation}
    \begin{aligned}
        \langle \boldsymbol{v}_{\text{min}}, \nabla \mathcal{L}(\widetilde{\boldsymbol{w}}_t)\rangle =& \left(1-\eta\lambda_{\text{min}}-\eta\rho\lambda_{\text{min}}^2 / \lVert\nabla \mathcal{L}(\widetilde{\boldsymbol{w}}_{t-1})\rVert\right)\cdot\\
        &\langle \boldsymbol{v}_{\text{min}}, \nabla \mathcal{L}(\widetilde{\boldsymbol{w}}_{t-1})\rangle.
    \end{aligned}
    \end{equation}

    We know that 
    \begin{equation}
        \lmin\leq -\frac{L}{\eta\rho}\left(\frac{1+\mu}{2}\eta+\sqrt{\left(\frac{1+\mu}{2}\right)^2\eta^2+\frac{\left(2+\mu\right)\eta\rho}{L}}\right)
    \end{equation}
    therefore, because $\mu\geq 1$, it holds that
    \begin{equation}
        \lambda_{\text{min}}\leq -\frac{L}{\eta \rho}\left(\eta + \sqrt{\eta^2+\frac{3\eta \rho}{L}}\right).
    \end{equation}
    It follows that 
    \begin{equation}
    \begin{aligned}
        &\eta\rho\lambda_{\text{min}}^2 / \lVert\nabla \mathcal{L}(\widetilde{\boldsymbol{w}}_{t-1})\rVert + \eta\lambda_{\text{min}} - 1 \\
        \geq& \eta\rho\lambda_{\text{min}}^2 / L + \eta\lambda_{\text{min}} - 1 \\
        \geq& 2 - \eta\lambda_{\text{min}} > 0.
    \end{aligned}
    \end{equation}
    Therefore
    \begin{equation}
    \begin{aligned}
        \lvert\langle \boldsymbol{v}_{\text{min}}, \nabla \mathcal{L}(\widetilde{\boldsymbol{w}}_t)\rangle\rvert
        =& \left(\eta\rho\lambda_{\text{min}}^2 / \lVert\nabla \mathcal{L}(\widetilde{\boldsymbol{w}}_{t-1})\rVert + \eta\lambda_{\text{min}} - 1\right)\\
        & \lvert\langle \boldsymbol{v}_{\text{min}}, \nabla \mathcal{L}(\widetilde{\boldsymbol{w}}_{t-1})\rangle\rvert \\
        \geq& \left(\eta\rho\lambda_{\text{min}}^2 / L + \eta\lambda_{\text{min}} - 1\right) \\
        & \lvert\langle \boldsymbol{v}_{\text{min}}, \nabla \mathcal{L}(\widetilde{\boldsymbol{w}}_{t-1})\rangle\rvert.
    \end{aligned}
    \end{equation}
\end{proof}

Now that we have the above two lemmas, we can proceed to prove Thm.~\ref{thm:projection-onto-negative-curvature}.

\begin{proof}
    From the Taylor expansion in Eq. (~\ref{eq:taylor-expansion-nabla-l}), we have:
    \begin{equation}
        \langle \boldsymbol{v}_{\text{min}}, \nabla \mathcal{L}(\boldsymbol{w}) \rangle = \langle \boldsymbol{v}_{\text{min}}, \nabla \mathcal{L}(\boldsymbol{w}_0) \rangle + \lambda_{\text{min}}\langle \boldsymbol{v}_{\text{min}}, \boldsymbol{w} - \boldsymbol{w}_0\rangle.
    \end{equation}

    This leads to:
    \begin{equation}
        \langle \boldsymbol{v}_{\text{min}}, \boldsymbol{w} - \boldsymbol{w}_0\rangle = \frac{\langle \boldsymbol{v}_{\text{min}}, \nabla \mathcal{L}(\boldsymbol{w}) \rangle - \langle \boldsymbol{v}_{\text{min}}, \nabla \mathcal{L}(\boldsymbol{w}_0) \rangle}{\lambda_{\text{min}}}.
    \end{equation}

    Thus, we derive:
    \begin{equation}
        \langle \boldsymbol{v}_{\text{min}}, \boldsymbol{w}_t - \boldsymbol{w}_0\rangle = \frac{\langle \boldsymbol{v}_{\text{min}}, \nabla \mathcal{L}(\boldsymbol{w}_t) \rangle - \langle \boldsymbol{v}_{\text{min}}, \nabla \mathcal{L}(\boldsymbol{w}_0) \rangle}{\lambda_{\text{min}}},
    \end{equation}
    \begin{equation}
        \langle \boldsymbol{v}_{\text{min}}, \widetilde{\boldsymbol{w}}_t - \boldsymbol{w}_0\rangle = \frac{\langle \boldsymbol{v}_{\text{min}}, \nabla \mathcal{L}(\widetilde{\boldsymbol{w}}_t) \rangle - \langle \boldsymbol{v}_{\text{min}}, \nabla \mathcal{L}(\boldsymbol{w}_0) \rangle}{\lambda_{\text{min}}}.
    \end{equation}

    Given $\mu\geq 1$, to prove that 
    \begin{equation}
    \begin{aligned}
        \lvert\langle \boldsymbol{v}_{\text{min}}, \widetilde{\boldsymbol{w}}_t-\boldsymbol{w}_0\rangle\rvert\geq\mu\lvert\langle \boldsymbol{v}_{\text{min}}, \boldsymbol{w}_t-\boldsymbol{w}_0\rangle\rvert \\ +\left((\mu^2+\mu)t-2\mu^2-\mu\right)\frac{\lvert\langle\vmin, \nabla \mathcal{L}(\bw_0)\rangle\rvert}{\lvert\lmin\rvert},
    \end{aligned}
    \end{equation}
    we only have to prove that 
    \begin{equation}
    \begin{aligned}
        & \lvert \langle \boldsymbol{v}_{\text{min}}, \nabla \mathcal{L}(\widetilde{\boldsymbol{w}}_t) \rangle - \langle \boldsymbol{v}_{\text{min}}, \nabla \mathcal{L}(\boldsymbol{w}_0) \rangle\rvert - \\
        & \mu\lvert \langle \boldsymbol{v}_{\text{min}}, \nabla \mathcal{L}(\boldsymbol{w}_t) \rangle - \langle \boldsymbol{v}_{\text{min}}, \nabla \mathcal{L}(\boldsymbol{w}_0) \rangle\rvert \\ 
        \geq&  \left((\mu^2+\mu)t-2\mu^2-\mu\right)\lvert\langle\vmin, \nabla \mathcal{L}(\bw_0)\rangle\rvert.
    \end{aligned}
    \end{equation}
    
    By triangle inequality, it suffices to show that
    \begin{equation}
    \begin{aligned}
        \lvert \langle \boldsymbol{v}_{\text{min}}, \nabla \mathcal{L}(\widetilde{\boldsymbol{w}}_t) \rangle\rvert - \mu\lvert \langle \boldsymbol{v}_{\text{min}}, \nabla \mathcal{L}(\boldsymbol{w}_t) \rangle\rvert -\\
        \left((\mu^2+\mu)t-2\mu^2+1\right)\lvert\langle \boldsymbol{v}_{\text{min}}, \nabla \mathcal{L}(\boldsymbol{w}_0) \rangle\rvert \geq 0.
    \end{aligned}
    \end{equation}

    Using Lem.~\ref{lem:sgd-vmin-multiply-nabla-l} and Lem.\ref{lem:sam-vmin-multiply-nabla-l}, we have:
    \begin{equation}
    \begin{aligned}
        & \lvert \langle \boldsymbol{v}_{\text{min}}, \nabla \mathcal{L}(\widetilde{\boldsymbol{w}}_t) \rangle\rvert - \mu\lvert \langle \boldsymbol{v}_{\text{min}}, \nabla \mathcal{L}(\boldsymbol{w}_t) \rangle\rvert -\\
        & \left((\mu^2+\mu)t-2\mu^2+1\right)\lvert\langle \boldsymbol{v}_{\text{min}}, \nabla \mathcal{L}(\boldsymbol{w}_0) \rangle\rvert \\
        \geq & \Big[\left(\eta\rho\lambda_{\text{min}}^2/L + \eta \lambda_{\text{min}} - 1\right) ^t - \mu\left(1-\eta\lambda_{\text{min}}\right)^t -\\
        & \left((\mu^2+\mu)t-2\mu^2+1\right)\Big]\lvert\langle \boldsymbol{v}_{\text{min}}, \nabla \mathcal{L}(\boldsymbol{w}_0) \rangle\rvert \\
        \geq & \Big[\left(\eta\rho\lambda_{\text{min}}^2/L + \eta \lambda_{\text{min}} - 1\right) ^t - \mu^t\left(1-\eta\lambda_{\text{min}}\right)^t -\\
        & \left((\mu^2+\mu)t-2\mu^2+1\right)\Big]\lvert\langle \boldsymbol{v}_{\text{min}}, \nabla \mathcal{L}(\boldsymbol{w}_0) \rangle\rvert.
    \end{aligned}
    \end{equation}
    Therefore, it suffices to show 
    \begin{equation}
    \begin{aligned}
        & \left(\eta\rho\lambda_{\text{min}}^2/L + \eta \lambda_{\text{min}} - 1\right)^t - \mu^t\left(1-\eta\lambda_{\text{min}}\right)^t \\
        \geq& (\mu^2+\mu )t-2\mu^2+1.
    \end{aligned}
    \end{equation}

    To establish the desired inequality, apply the binomial theorem for difference of powers:
    \begin{equation}
    \begin{aligned}
        & \left(\eta\rho\lambda_{\text{min}}^2/L + \eta \lambda_{\text{min}} - 1\right) ^t - \mu^t\left(1-\eta\lambda_{\text{min}}\right)^t \\
        = & \left(\eta\rho\lambda_{\text{min}}^2/L + (1+\mu)\eta \lambda_{\text{min}} - (1+\mu)\right)\cdot \\
        & \left(\sum_{i=0}^{t-1}\underbrace{\left(\eta\rho\lambda_{\text{min}}^2/L + \eta \lambda_{\text{min}} - 1\right)^{t-1-i}\left(\mu-\mu\eta\lambda_{\text{min}}\right)^i}_{T_i}\right).
    \end{aligned}
    \end{equation}

    Setting $\eta\rho\lambda_{\text{min}}^2/L + (1+\mu)\eta \lambda_{\text{min}} - (1+\mu) \geq 1$, it holds that 
    \begin{equation}
        \eta\rho\lambda_{\text{min}}^2/L + \eta \lambda_{\text{min}} - 1 \geq 1 + \mu - \mu\eta\lambda_{\text{min}}.
    \end{equation}
    
    Further set $\mu - \mu\eta\lambda_{\text{min}} \geq \mu$, i.e., $\lambda_{\text{min}} \leq 0$, we have that 
    \begin{equation}
        \eta\rho\lambda_{\text{min}}^2/L + \eta \lambda_{\text{min}} - 1 \geq 1+\mu.
    \end{equation}
    
    Now, for $i=1,\cdots, t-2$, $T_i\geq \mu(1+\mu), T_0 \geq 1+\mu, T_{t-1} \geq \mu$. Thus, 
    \begin{equation}
    \begin{aligned}
        & \left(\eta\rho\lambda_{\text{min}}^2/L + \eta \lambda_{\text{min}} - 1\right) ^t - \mu^t\left(1-\eta\lambda_{\text{min}}\right)^t \\
        \geq& (\mu^2+\mu)t-2\mu^2+1.
    \end{aligned}
    \end{equation} 

    Solving the system
    \begin{equation}
    \left\{
    \begin{aligned}
        \eta\rho\lambda_{\text{min}}^2/L + (1+\mu)\eta \lambda_{\text{min}} - (1+\mu) &\geq 1, \\
        \lambda_{\text{min}} &\leq 0.
    \end{aligned}
    \right.
    \end{equation}
    yields $\lmin\leq -\frac{L}{\eta\rho}\left(\frac{1+\mu}{2}\eta+\sqrt{\left(\frac{1+\mu}{2}\right)^2\eta^2+\frac{\left(2+\mu\right)\eta\rho}{L}}\right)$, validating the theorem.
\end{proof}

\subsection{Proof of Prop.\ref{prop:scale-of-ehead-and-etail} and Prop.\ref{prop:direction-of-ehead-and-etail}}
\label{app:ehaed-etail-direction}

Here we re-state Prop.\ref{prop:scale-of-ehead-and-etail} and Prop.\ref{prop:direction-of-ehead-and-etail}, and provide their proofs respectively.

\begin{proposition} 
    Assume that the samples are drawn from i.i.d distribution, let $m=\lvert S_{\text{head}}\rvert, n=\lvert S_{\text{tail}}\rvert$
    then only if $\frac{\xi_{head}}{\xi_{tail}} = \Theta(\frac{n^2}{m^2})$, we have 
    \begin{equation}
        \frac{\mathbb{E}_{head}[\lVert\nabla \mathcal{L}(\bw)\rVert^2]}{\mathbb{E}_{tail}[\lVert\nabla \mathcal{L}(\bw)\rVert^2]} = \Theta(1).
    \end{equation}
    where 
    \begin{align*}
    &\xi_{head} = \|\bm{\mu}^{(1)}\|^2 + \sum_{j=1}^d \sigma_j^{(1)^2},~\xi_{tail} = \|\bm{\mu}^{(2)}\|^2 + \sum_{j=1}^d \sigma_j^{(2)^2}
    \end{align*}
$\bm{\mu}^{(1)},\bm{\mu}^{(2)}$ are the mean of instance gradient $\nabla \li$ for head/tail class, $\sigma_j^{(1)^2},\sigma_j^{(2)^2} $ are the variance of the j-th dimension of the instance gradient $\nabla \li$ for head/tail class.
\end{proposition}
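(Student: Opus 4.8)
The plan is to compute the two expected squared gradient norms in closed form and then compare their dominant terms. First I would write the head- and tail-class gradients as sums of per-instance gradients, $\nabla\Lhead(\bw)=\sum_{i\in S_{\text{head}}}\nabla\li$ and $\nabla\Ltail(\bw)=\sum_{i\in S_{\text{tail}}}\nabla\li$, where by the i.i.d.\ assumption the summands within each group are independent copies of a random vector with mean $\bm{\mu}^{(1)}$ (resp.\ $\bm{\mu}^{(2)}$) and per-coordinate variances $\sigma_j^{(1)^2}$ (resp.\ $\sigma_j^{(2)^2}$). The crucial observation is that $\xi_{\text{head}}$ is exactly the per-instance second moment: $\xi_{\text{head}}=\|\bm{\mu}^{(1)}\|^2+\sum_{j}\sigma_j^{(1)^2}=\expe[\|\nabla\li\|^2]$ for a head sample, and analogously for $\xi_{\text{tail}}$. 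I read $\expe_{\text{head}}[\|\nabla\mathcal{L}(\bw)\|^2]$ as $\expe[\|\nabla\Lhead(\bw)\|^2]$, the expected squared norm of the head-class gradient.

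The main computation is a bias--variance expansion of the second moment of a sum of i.i.d.\ vectors. Writing $\expe[\|\sum_{i}\nabla\li\|^2]=\sum_{i,k}\expe[\langle\nabla\ell_i,\nabla\ell_k\rangle]$, I would split into the $m$ diagonal terms, each contributing $\expe[\|\nabla\li\|^2]=\xi_{\text{head}}$, and the $m(m-1)$ off-diagonal terms, each of which factorizes by independence into $\langle\bm{\mu}^{(1)},\bm{\mu}^{(1)}\rangle=\|\bm{\mu}^{(1)}\|^2$. This gives $m\,\xi_{\text{head}}+m(m-1)\|\bm{\mu}^{(1)}\|^2$, i.e.
\[\expe_{\text{head}}[\|\nabla\Lhead(\bw)\|^2]=m^2\|\bm{\mu}^{(1)}\|^2+m\textstyle\sum_{j}\sigma_j^{(1)^2},\]
and the analogous identity with $n$ and superscript $(2)$ for the tail.

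With both second moments in closed form, I would form the ratio and isolate its leading order. The mean term scales as $\Theta(m^2)$ while the variance term scales only as $\Theta(m)$, so in the large-sample long-tail regime the mean term dominates and $\expe_{\text{head}}[\|\nabla\mathcal{L}\|^2]=\Theta(m^2\xi_{\text{head}})$, $\expe_{\text{tail}}[\|\nabla\mathcal{L}\|^2]=\Theta(n^2\xi_{\text{tail}})$. Substituting yields $\frac{\expe_{\text{head}}[\|\nabla\mathcal{L}\|^2]}{\expe_{\text{tail}}[\|\nabla\mathcal{L}\|^2]}=\Theta\!\big(\frac{m^2\xi_{\text{head}}}{n^2\xi_{\text{tail}}}\big)$, and this equals $\Theta(1)$ precisely when $\frac{\xi_{\text{head}}}{\xi_{\text{tail}}}=\Theta(\frac{n^2}{m^2})$, establishing the stated equivalence in both directions.

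I expect the only delicate point to be the passage from the exact two-term formula to the single-term $\Theta(m^2\xi_{\text{head}})$ estimate, since the mean contribution ($\Theta(m^2)$) and the variance contribution ($\Theta(m)$) carry different powers of the sample size. Making this rigorous requires the working hypothesis that the squared mean is not asymptotically negligible against the total variance, i.e.\ $\|\bm{\mu}^{(1)}\|^2=\Theta(\xi_{\text{head}})$ (equivalently $m\|\bm{\mu}^{(1)}\|^2=\Omega(\sum_j\sigma_j^{(1)^2})$), so that the $\Theta(m)$ remainder is absorbed; note the upper bound $\expe_{\text{head}}[\|\nabla\Lhead(\bw)\|^2]\le m^2\xi_{\text{head}}$ holds unconditionally, and only the matching lower bound invokes this hypothesis. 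I would state this regime explicitly and verify that the $\Theta$-constants hide no further dependence on $m$ or $n$ before concluding.
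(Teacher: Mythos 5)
Your proposal is correct and, in fact, more careful than the paper's own argument. The paper proves only the per-instance identity $\expe[\lVert\nabla\li\rVert^2]=\xi_{\text{head}}$ (via the trace trick $\expe[\bm{K}^\top\bm{K}]=tr(\bm{\mu}\bm{\mu}^\top)+tr(\bm{\Sigma})$) and then concludes in one line that $\frac{\expe_{\text{head}}[\lVert\nabla\mathcal{L}(\bw)\rVert^2]}{\expe_{\text{tail}}[\lVert\nabla\mathcal{L}(\bw)\rVert^2]}=\frac{m^2\xi_{\text{head}}}{n^2\xi_{\text{tail}}}$ as an exact fact; this amounts to treating the second moment of the sum $\sum_{i}\nabla\li$ as $m^2$ times the per-instance second moment, which would be valid for perfectly correlated summands but not under the stated i.i.d.\ assumption. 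Your diagonal/off-diagonal expansion exposes exactly this: the true value is $m^2\lVert\bm{\mu}^{(1)}\rVert^2+m\sum_j\sigma_j^{(1)^2}$, in which the mean contribution carries weight $m^2$ but the variance contribution only $m$, so the paper's claimed identity holds only up to $\Theta$-equivalence and only in the regime where $\lVert\bm{\mu}^{(1)}\rVert^2=\Theta(\xi_{\text{head}})$ (and likewise for the tail) --- precisely the working hypothesis you flag. Without that hypothesis the conclusion can genuinely fail (e.g.\ with zero-mean instance gradients the ratio becomes $\Theta\bigl(\tfrac{m\sum_j\sigma_j^{(1)^2}}{n\sum_j\sigma_j^{(2)^2}}\bigr)$, so $\Theta(1)$ would force $\xi_{\text{head}}/\xi_{\text{tail}}=\Theta(n/m)$, not $\Theta(n^2/m^2)$). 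So what your route buys is rigor: it identifies the leap in the paper's final step, supplies the exact bias--variance formula that step glosses over, and states the extra condition under which the proposition as written is actually true; what it costs is that the clean exact ratio $\frac{m^2\xi_{\text{head}}}{n^2\xi_{\text{tail}}}$ claimed by the paper must be downgraded to a $\Theta$-estimate valid only in the mean-dominated regime.
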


\begin{proof}
    Since $(\boldsymbol{x}_i, y_i) \in S$ are sampled i.i.d, so will $\nabla \ell_i(\bm{w})$. For the head class distribution, we have
    \begin{align}
        \expe_{head}\left[ \nabla \li \right] = \bm{\mu}^{(1)}, ~~\bm{Cov}_{head}(\nabla \li) = \bm{\Sigma}^{(1)}
    \end{align}
    Similarly, for the tail distribution, we have:
    \begin{align}
        \expe_{tail}\left[ \nabla \li \right] = \bm{\mu}^{(2)}, ~~\bm{Cov}_{tail}(\nabla \li) = \bm{\Sigma}^{(2)}
    \end{align}
    For the sake of simplicity, denote $\bm{K} = \nabla \li$, we have:
    \begin{align*}
        \expe_{head}\left[ \bm{K}^\top \bm{K} \right] &=  \expe_{head}\left[ tr\left(\bm{K}^\top \bm{K} \right) \right] \\
        & = \expe_{head}\left[ tr\left(\bm{K} \bm{K}^\top \right) \right] \\
        & = tr\left(\expe_{head}\left[ \bm{K} \bm{K}^\top \right]\right) \\ 
        & = tr\left(  \bm{\mu}^{(1)} \bm{\mu}^{(1)^\top} \right) +  tr\left(  \bm{\Sigma}^{(1)} \right) \\
        & = \|\bm{\mu}^{(1)}\|^2 + \sum_{j=1}^d \sigma_j^{(1)^2} = \xi_{head}
    \end{align*}
    
Similarly, we have:
\begin{align*}
    \expe_{tail}\left[ \bm{K}^\top \bm{K} \right] =  \|\bm{\mu}^{(2)}\|^2 + \sum_{j=1}^d \sigma_j^{(2)^2} = \xi_{tail}
\end{align*}

The result follows from the fact that:
\begin{align*}
    \frac{\mathbb{E}_{head}[\lVert\nabla \mathcal{L}(\bw)\rVert^2]}{\mathbb{E}_{tail}[\lVert\nabla \mathcal{L}(\bw)\rVert^2]} = \frac{m^2\cdot \xi_{head}}{n^2 \cdot \xi_{tail}}.
\end{align*}

\end{proof}

\begin{proposition} 
    Denote by $\theta_{\text{head}}$ the angle between $\ehead$ and $\boldsymbol{\epsilon}$, $\theta_{\text{tail}}$ the angle between $\etail$ and $\boldsymbol{\epsilon}$, $\psi$ the angle between $\ehead$ and $\etail$. When $\lVert\ehead\rVert\gg\lVert\etail\rVert$, we have
    \begin{equation}
    \begin{aligned}
        &\cos\theta_{\text{head}} \approx 1, \\
        &\cos\theta_{\text{tail}} \approx \cos\psi.
    \end{aligned}
    \end{equation}
\end{proposition}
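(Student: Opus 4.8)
The plan is to reduce the statement to elementary vector algebra, since $\ehead$, $\etail$, and $\boldsymbol{\epsilon} = \ehead + \etail$ all lie in the plane spanned by $\ehead$ and $\etail$. I would parametrize the configuration by the two magnitudes $a := \lVert\ehead\rVert$ and $b := \lVert\etail\rVert$ together with the angle $\psi$ between $\ehead$ and $\etail$, and express each target cosine as a normalized inner product. The hypothesis $\lVert\ehead\rVert \gg \lVert\etail\rVert$ then becomes simply $\beta := b/a \to 0$.

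First I would expand the inner product defining $\theta_{\text{head}}$ by bilinearity, using $\langle\ehead,\etail\rangle = ab\cos\psi$ and the law of cosines $\lVert\boldsymbol{\epsilon}\rVert^2 = a^2 + 2ab\cos\psi + b^2$. This gives $\langle\ehead,\boldsymbol{\epsilon}\rangle = a^2 + ab\cos\psi$, and after dividing numerator and denominator by $a^2$,
\begin{equation}
\cos\theta_{\text{head}} = \frac{\langle\ehead,\boldsymbol{\epsilon}\rangle}{\lVert\ehead\rVert\lVert\boldsymbol{\epsilon}\rVert} = \frac{1 + \beta\cos\psi}{\sqrt{1 + 2\beta\cos\psi + \beta^2}}.
\end{equation}
The identical computation for the tail direction, using $\langle\etail,\boldsymbol{\epsilon}\rangle = ab\cos\psi + b^2$, yields
\begin{equation}
\cos\theta_{\text{tail}} = \frac{\langle\etail,\boldsymbol{\epsilon}\rangle}{\lVert\etail\rVert\lVert\boldsymbol{\epsilon}\rVert} = \frac{\cos\psi + \beta}{\sqrt{1 + 2\beta\cos\psi + \beta^2}}.
\end{equation}

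Finally I would let $\beta \to 0$: both denominators converge to $1$, so the first expression tends to $1$ and the second to $\cos\psi$, which is exactly the claim. If a quantitative error bound is wanted, a first-order expansion in $\beta$ gives $\cos\theta_{\text{head}} = 1 - \tfrac12\beta^2\sin^2\psi + O(\beta^3)$ and $\cos\theta_{\text{tail}} = \cos\psi + \beta\sin^2\psi + O(\beta^2)$, making the rate of convergence explicit. There is no genuine obstacle here; the only points deserving care are recognizing that the common factor $\rho/\lVert\nabla\mathcal{L}(\bw)\rVert$ shared by $\ehead$ and $\etail$ cancels in every ratio (each cosine is scale-invariant in both of its arguments, so only $\psi$ and $\beta$ survive), and stating precisely in what sense the symbol ``$\approx$'' is meant, namely as the limiting behavior under $\beta\to 0$.
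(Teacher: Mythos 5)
Your proof is correct and follows essentially the same route as the paper's: both expand $\cos\theta_{\text{head}}$ and $\cos\theta_{\text{tail}}$ by bilinearity over the decomposition $\boldsymbol{\epsilon}=\ehead+\etail$, isolating the $\cos\psi$ cross term. The only difference is that you also expand $\lVert\boldsymbol{\epsilon}\rVert$ via the law of cosines and make ``$\approx$'' precise as the limit $\beta=\lVert\etail\rVert/\lVert\ehead\rVert\to 0$ with explicit error terms, which sharpens the paper's informal approximation but is not a genuinely different argument.
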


\begin{proof}
    We have that 
    \begin{equation}
    \begin{aligned}
        \cos{\theta_{\text{head}}} &= \frac{\boldsymbol{\epsilon}_{\text{head}}^{\top}\boldsymbol{\epsilon}}{\lVert\boldsymbol{\epsilon}_{\text{head}}\rVert\lVert\boldsymbol{\epsilon}\rVert} \\
        &= \frac{\lVert\boldsymbol{\epsilon}_{\text{head}}\rVert}{\lVert\boldsymbol{\epsilon}\rVert} + \frac{\boldsymbol{\epsilon}_{\text{head}}^{\top}\boldsymbol{\epsilon}_{\text{tail}}}{\lVert\boldsymbol{\epsilon}_{\text{head}}\rVert\lVert\boldsymbol{\epsilon}\rVert} \\
        &= \frac{\lVert\boldsymbol{\epsilon}_{\text{head}}\rVert}{\lVert\boldsymbol{\epsilon}\rVert} + \frac{\lVert\boldsymbol{\epsilon}_{\text{tail}}\rVert}{\lVert\boldsymbol{\epsilon}\rVert}\cos{\psi}\approx 1.
    \end{aligned}
    \end{equation}
    
    Similarly, 
    \begin{equation}
        \cos{\theta_{\text{tail}}} = \frac{\lVert\boldsymbol{\epsilon}_{\text{tail}}\rVert}{\lVert\boldsymbol{\epsilon}\rVert} + \frac{\lVert\boldsymbol{\epsilon}_{\text{head}}\rVert}{\lVert\boldsymbol{\epsilon}\rVert}\cos{\psi}\approx \cos{\psi}.
    \end{equation}
\end{proof}

\subsection{Proof of Prop.\ref{prop:imbsam-limitation}}
\label{app:proof-of-imbsam-limitation}

Proof of Prop.\ref{prop:imbsam-limitation} is similar to Thm.\ref{thm:projection-onto-negative-curvature}. We first re-state Prop.\ref{prop:imbsam-limitation} here.

\begin{proposition}
    Consider a special case where $\mathcal{H}_{\text{tail}} = r\mathcal{H}, \mathcal{H}_{\text{head}}=(1-r)\mathcal{H}$ with $r=\frac{\lvert S_{\text{tail}}\rvert}{\lvert S\rvert}\approx 0$. Let $\bw_t'$ denote the weight obtained after optimizing for $t$ steps by ImbSAM. For any $\mu\geq 1$ and $t\geq 2$, we have that:

    (1) when $\lvert\langle \vmin,\bw_t'-\bw_0\rangle\rvert < \mu\lvert\langle\vmin, \bw_t-\bw_0\rangle\rvert$, as long as $\lmin\leq-\frac{L}{\eta\rho}\bigg(\frac{1+\mu}{2}\eta+$ $\sqrt{\left(\frac{1+\mu}{2}\right)^2\eta^2+\frac{(2+\mu)\eta\rho}{L}}\bigg)$, it holds that $\lvert\langle\vmin, \btw_t-\bw_0\rangle\rvert\geq \mu\lvert\langle\vmin,\bw_t-\bw_0\rangle\rvert$;

    (2) when $\lvert\langle \vmin,\btw_t-\bw_0\rangle\rvert < \mu\lvert\langle\vmin, \bw_t-\bw_0\rangle\rvert$, it also holds that $\lvert\langle \vmin,\bw_t'-\bw_0\rangle\rvert < \mu\lvert\langle\vmin, \bw_t-\bw_0\rangle\rvert$.
\end{proposition}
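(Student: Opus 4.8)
The plan is to mirror the proof of Thm.~\ref{thm:projection-onto-negative-curvature}: I will derive an ImbSAM analogue of the recursions in Lem.~\ref{lem:sgd-vmin-multiply-nabla-l} and Lem.~\ref{lem:sam-vmin-multiply-nabla-l}, and then exploit the special structure $\mathcal{H}_{\text{tail}}=r\mathcal{H}$, $\mathcal{H}_{\text{head}}=(1-r)\mathcal{H}$ with $r\approx 0$ to show that, \emph{on the negative-curvature direction $\vmin$, ImbSAM collapses onto plain SGD}. First I would Taylor-expand the ImbSAM update $\bw_t'=\bw_{t-1}'-\eta\big(\nabla\Lhead(\bw_{t-1}')+\nabla\Ltail(\bw_{t-1}'+\etail)\big)$ around $\bw_0$. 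Since $\nabla\Lhead(\bw_0)+\nabla\Ltail(\bw_0)=\nabla\mathcal{L}(\bw_0)$ and $\mathcal{H}_{\text{head}}+\mathcal{H}_{\text{tail}}=\mathcal{H}$, the head and tail curvature terms recombine into the full Hessian, leaving
\[
\bw_t'-\bw_0=(\mathbf{I}-\eta\mathcal{H})(\bw_{t-1}'-\bw_0)-\eta\big(\nabla\mathcal{L}(\bw_0)+\mathcal{H}_{\text{tail}}\etail\big),
\]
which is exactly the SAM recursion of Lem.~\ref{lem:sam-vmin-multiply-nabla-l} with the perturbation term $\mathcal{H}\be$ replaced by $\mathcal{H}_{\text{tail}}\etail$. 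This is the crux: in this coordinate, the sole difference among SGD, SAM, and ImbSAM is the perturbation term, so everything reduces to estimating it.

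Next I would unroll the recursion and project onto $\vmin$ precisely as in Lem.~\ref{lem:sam-vmin-multiply-nabla-l}, obtaining
\[
\langle\vmin,\nabla\mathcal{L}(\bw_t')\rangle=(1-\eta\lmin)\langle\vmin,\nabla\mathcal{L}(\bw_{t-1}')\rangle-\eta\lmin\langle\vmin,\mathcal{H}_{\text{tail}}\etail\rangle.
\]
Here the special structure does the work: $\vmin^{\top}\mathcal{H}_{\text{tail}}=r\lmin\vmin^{\top}$ and $\etail=\rho\,\nabla\Ltail/\lVert\nabla\Ltail\rVert$, so the perturbation contribution equals $\eta r\rho\lmin^2\langle\vmin,\nabla\Ltail(\bw_{t-1}')\rangle/\lVert\nabla\Ltail(\bw_{t-1}')\rVert$, carrying an explicit factor $r\approx 0$ and hence negligible against the $(1-\eta\lmin)$ growth. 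Dropping it recovers the SGD recursion of Lem.~\ref{lem:sgd-vmin-multiply-nabla-l}, giving $\langle\vmin,\nabla\mathcal{L}(\bw_t')\rangle\approx(1-\eta\lmin)^t\langle\vmin,\nabla\mathcal{L}(\bw_0)\rangle=\langle\vmin,\nabla\mathcal{L}(\bw_t)\rangle$. Converting gradient projections to displacement projections via $\langle\vmin,\bw-\bw_0\rangle=(\langle\vmin,\nabla\mathcal{L}(\bw)\rangle-\langle\vmin,\nabla\mathcal{L}(\bw_0)\rangle)/\lmin$ then yields $\lvert\langle\vmin,\bw_t'-\bw_0\rangle\rvert\approx\lvert\langle\vmin,\bw_t-\bw_0\rangle\rvert$; that is, in this regime ImbSAM escapes no better than SGD.

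With this equivalence, both claims are short. For (1), the hypothesis only describes ImbSAM's weak behaviour and is not needed in the conclusion: the stated bound on $\lmin$ is \emph{exactly} the hypothesis of Thm.~\ref{thm:projection-onto-negative-curvature}, which already gives $\lvert\langle\vmin,\btw_t-\bw_0\rangle\rvert\geq\mu\lvert\langle\vmin,\bw_t-\bw_0\rangle\rvert+\big((\mu^2+\mu)t-2\mu^2-\mu\big)\lvert\langle\vmin,\nabla\mathcal{L}(\bw_0)\rangle\rvert/\lvert\lmin\rvert$; since $t\geq 2$ makes the extra term nonnegative, SAM retains the stronger escape. For (2), I use the equivalence directly: $\lvert\langle\vmin,\bw_t'-\bw_0\rangle\rvert\approx\lvert\langle\vmin,\bw_t-\bw_0\rangle\rvert\leq\mu\lvert\langle\vmin,\bw_t-\bw_0\rangle\rvert$ because $\mu\geq 1$, so the conclusion holds, with the SAM-side hypothesis serving to frame the contrast with (1) (equivalently, one chains the ordering ImbSAM $\lesssim$ SGD through the hypothesis $\lvert\langle\vmin,\btw_t-\bw_0\rangle\rvert<\mu\lvert\langle\vmin,\bw_t-\bw_0\rangle\rvert$).

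The main obstacle I anticipate is making the $r\approx 0$ reduction rigorous rather than heuristic. The dropped term is $O(r)$ in the coefficient $r$, but the ratio $\langle\vmin,\nabla\Ltail\rangle/\lVert\nabla\Ltail\rVert$ need not itself be small, and it must be controlled \emph{uniformly} across all $t$ iterations without flipping the sign of the slowly growing projection. The cleanest route is to treat the $O(r)$ term as a perturbation of the SGD recursion and bound its accumulated effect by a geometric sum, absorbing it into the approximation; alternatively, one proves the one-sided bound $\lvert\langle\vmin,\bw_t'-\bw_0\rangle\rvert\leq\lvert\langle\vmin,\btw_t-\bw_0\rangle\rvert$ outright by comparing per-step multipliers, which suffices for (2) and sidesteps the need for a two-sided estimate.
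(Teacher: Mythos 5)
Your starting point---Taylor-expanding the ImbSAM update and obtaining the recursion $\nabla\mathcal{L}(\bw_t')=(\mathbf{I}-\eta\mathcal{H})\nabla\mathcal{L}(\bw_{t-1}')-\eta\mathcal{H}\mathcal{H}_{\text{tail}}\be_{\text{tail},t-1}$, then projecting onto $\vmin$---is exactly how the paper's proof begins (its Lem.~\ref{lem:imbsam-limitation}). But from there the paper never approximates away the perturbation term. It keeps the $r$-scaled term exactly, lower-bounds the per-step multiplier by $\eta\rho r\lmin^2/L+\eta\lmin-1$ using only the uniform bound $L$ on gradient norms, and thereby proves an exact ImbSAM analogue of Thm.~\ref{thm:projection-onto-negative-curvature} whose escape threshold is $\lambda(r)$, i.e.\ the SAM threshold with $\rho$ replaced by $\rho r$. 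The heart of the paper's argument is then the monotonicity fact $\partial\lambda(r)/\partial r>0$, hence $\lambda(r)<\lambda(1)$, established by an explicit derivative computation; part (1) follows by the contrapositive of the lemma (ImbSAM failing forces $\lmin>\lambda(r)$, which is compatible with $\lmin\leq\lambda(1)$ precisely because $\lambda(r)<\lambda(1)$) together with Thm.~\ref{thm:projection-onto-negative-curvature}, and part (2) by chaining $\lmin>\lambda(1)>\lambda(r)$ back through the lemma.

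Your route has a genuine gap, concentrated in part (2). First, the ``ImbSAM collapses onto SGD'' reduction discards exactly the quantity the proposition is about: the neglected term is negligible only when $\eta\rho r\lmin^2/L\ll 1$, whereas the entire content of the result lies in how the escape behavior changes as this quantity scales with $r$; near the relevant thresholds it is of order one, and the per-step discrepancy is amplified geometrically by multipliers of magnitude up to $\lvert 1-\eta\lmin\rvert>1$, so it cannot be ``absorbed into the approximation'' uniformly over $t$ steps. Second, even granting the approximation, (2) demands a \emph{strict} inequality $\lvert\langle\vmin,\bw_t'-\bw_0\rangle\rvert<\mu\lvert\langle\vmin,\bw_t-\bw_0\rangle\rvert$ for every $\mu\geq1$, including $\mu=1$, and ``$\approx$'' is precisely what fails to give strictness at $\mu=1$: the sign of the discarded, accumulated correction decides the claim. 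Your fallback---comparing per-step multipliers to get $\lvert\langle\vmin,\bw_t'-\bw_0\rangle\rvert\leq\lvert\langle\vmin,\btw_t-\bw_0\rangle\rvert$---is also not a routine step: the two multipliers are evaluated along \emph{different} trajectories, and upper-bounding ImbSAM's multiplier (as opposed to lower-bounding it, which is all the paper ever needs via $L$) would require a lower bound on $\lVert\nabla\Ltail(\bw_{t-1}')\rVert$ that the setting does not supply; this is exactly why the paper routes both claims through the threshold $\lambda(r)$ and its monotonicity in $r$ rather than through any trajectory-to-trajectory comparison. Your observation on part (1)---that its conclusion follows from Thm.~\ref{thm:projection-onto-negative-curvature} and $t\geq2$ alone, the ImbSAM hypothesis being logically idle---is correct as a reading of the literal statement, but it skips the non-vacuity content of the paper's proof, namely that $\lambda(r)<\lambda(1)$ guarantees the premise ``ImbSAM fails'' and the condition $\lmin\leq\lambda(1)$ can hold simultaneously.
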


Before proving Prop.\ref{prop:imbsam-limitation}, we first prove the following lemma:

\begin{lemma}\label{lem:imbsam-limitation}
    Under the conditions of Prop.\ref{prop:imbsam-limitation}, when $\lmin\leq-\frac{L}{\eta\rho r}\left(\frac{1+\mu}{2}\eta+\sqrt{\left(\frac{1+\mu}{2}\right)^2\eta^2+\frac{(2+\mu)\eta\rho r}{L}}\right)$, it holds that 
    \begin{equation}
        \lvert\langle \vmin,\bw_t'-\bw_0\rangle\rvert \geq \mu\lvert\langle\vmin, \bw_t-\bw_0\rangle\rvert.
    \end{equation}
\end{lemma}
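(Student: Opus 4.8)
The plan is to show that, under the special-case split $\mathcal{H}_{\text{tail}}=r\mathcal{H}$ and $\mathcal{H}_{\text{head}}=(1-r)\mathcal{H}$, the ImbSAM trajectory behaves \emph{exactly} like a SAM trajectory whose perturbation radius is $r\rho$ instead of $\rho$. Once this reduction is established, the lemma follows by invoking Thm.~\ref{thm:projection-onto-negative-curvature} (equivalently Lem.~\ref{lem:sam-vmin-multiply-nabla-l}) with $\rho$ replaced by $r\rho$, which is precisely why the hypothesis on $\lmin$ carries the extra factor $r$ inside.

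First I would linearize the ImbSAM gradient around $\bw_0$. Using the second-order Taylor expansion $\nabla\mathcal{L}(\bw)=\nabla\mathcal{L}(\bw_0)+\mathcal{H}(\bw-\bw_0)$ and the analogous expansion of $\Ltail$, the ImbSAM gradient reads $\nabla\mathcal{L}^{\text{ImbSAM}}(\bw)=\nabla\mathcal{L}(\bw)+\mathcal{H}_{\text{tail}}\etail$, mirroring the SAM gradient $\nabla\mathcal{L}^{\text{SAM}}(\bw)=\nabla\mathcal{L}(\bw)+\mathcal{H}\be$. Consequently the displacement and gradient recursions are identical to those derived in the proof of Lem.~\ref{lem:sam-vmin-multiply-nabla-l} after the single substitution $\mathcal{H}\be\mapsto\mathcal{H}_{\text{tail}}\etail$; in particular the perturbation enters the gradient recursion as $-\eta\mathcal{H}\mathcal{H}_{\text{tail}}\etail_{t-1}=-\eta r\mathcal{H}^2\etail_{t-1}$.

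The crucial simplification comes from the near-saddle condition. Since $\nabla\mathcal{L}(\bw_0)\approx 0$, the Taylor expansion yields $\nabla\Ltail(\bw)\approx\mathcal{H}_{\text{tail}}(\bw-\bw_0)=r\mathcal{H}(\bw-\bw_0)\approx r\nabla\mathcal{L}(\bw)$, so $\nabla\Ltail$ is collinear with $\nabla\mathcal{L}$ and the factor $r$ cancels under the normalization defining $\etail$, leaving $\etail$ parallel to $\be$. Projecting the gradient recursion onto $\vmin$ and using $\mathcal{H}_{\text{tail}}\vmin=r\lmin\vmin$, I obtain $\langle\vmin,\mathcal{H}\mathcal{H}_{\text{tail}}\etail\rangle=r\rho\lmin^2\,\langle\vmin,\nabla\mathcal{L}(\bw_{t-1}')\rangle/\lVert\nabla\mathcal{L}(\bw_{t-1}')\rVert$, which is exactly the SAM recursion with $\rho\mapsto r\rho$. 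Hence $\lvert\langle\vmin,\nabla\mathcal{L}(\bw_t')\rangle\rvert\geq\left(\eta r\rho\lmin^2/L+\eta\lmin-1\right)^t\lvert\langle\vmin,\nabla\mathcal{L}(\bw_0)\rangle\rvert$, where positivity of the base is guaranteed by the lemma's hypothesis $\lmin\leq-\frac{L}{\eta r\rho}\left(\frac{1+\mu}{2}\eta+\sqrt{\left(\frac{1+\mu}{2}\right)^2\eta^2+\frac{(2+\mu)\eta r\rho}{L}}\right)$.

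Finally I would convert the gradient bound into the displacement bound exactly as in the closing step of Thm.~\ref{thm:projection-onto-negative-curvature}, via $\langle\vmin,\bw-\bw_0\rangle=\left(\langle\vmin,\nabla\mathcal{L}(\bw)\rangle-\langle\vmin,\nabla\mathcal{L}(\bw_0)\rangle\right)/\lmin$ applied to $\bw_t'$ and, together with Lem.~\ref{lem:sgd-vmin-multiply-nabla-l}, to $\bw_t$. Re-running that argument with $\rho\mapsto r\rho$ produces $\lvert\langle\vmin,\bw_t'-\bw_0\rangle\rvert\geq\mu\lvert\langle\vmin,\bw_t-\bw_0\rangle\rvert+\left((\mu^2+\mu)t-2\mu^2-\mu\right)\lvert\langle\vmin,\nabla\mathcal{L}(\bw_0)\rangle\rvert/\lvert\lmin\rvert$; since the second term is nonnegative for $t\geq 2,\ \mu\geq 1$, dropping it gives the claimed inequality. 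The main obstacle is justifying the reduction rigorously, namely that the near-saddle proportionality $\nabla\Ltail\approx r\nabla\mathcal{L}$ genuinely holds so that $\etail$ is collinear with $\be$ and the recursion collapses to the SAM one with radius $r\rho$; everything downstream is a mechanical re-run of the Thm.~\ref{thm:projection-onto-negative-curvature} argument.
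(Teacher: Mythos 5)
Your proposal is correct and follows essentially the same route as the paper: linearize the ImbSAM update around $\bw_0$, observe that $\mathcal{H}\mathcal{H}_{\text{tail}}\etail = r\mathcal{H}^2\etail$ together with the near-saddle collinearity $\nabla\Ltail \approx r\nabla\mathcal{L}$ collapses the gradient recursion to the SAM recursion with $\rho \mapsto r\rho$ (the paper phrases this as replacing $L$ with $L/r$, which is the same substitution since only the ratio $\rho/L$ matters), and then re-run the displacement argument of Thm.~\ref{thm:projection-onto-negative-curvature}. The "obstacle" you flag — rigorously justifying $\nabla\Ltail \approx r\nabla\mathcal{L}$ so that the $r$ cancels under the normalization in $\etail$ — is indeed glossed over in the paper as well, which silently uses this proportionality both in its recursion (the $r^2$ numerator against $\lVert\nabla\Ltail\rVert$) and in the bound $\lVert\nabla\Ltail(\bw_{t-1}')\rVert \leq rL$.
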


\begin{proof}
    Following the proof of Lem.\ref{lem:sam-vmin-multiply-nabla-l}, we have that 
    \begin{equation}
    \begin{aligned}
        \nabla \mathcal{L}^{\text{ImbSAM}}(\bw_t') =& \nabla \Lhead(\bw_t') + \nabla \Ltail(\bw_t'+\be_{\text{tail},t}) \\
        =& \nabla \Lhead(\bw_0') + \mathcal{H}_{\text{head}}(\bw_t'-\bw_0) +\\
        &\nabla \Ltail(\bw_0') + \mathcal{H}_{\text{tail}}(\bw_t'+\be_{\text{tail},t}-\bw_0)\\
        =& \nabla \mathcal{L}(\bw_0') + \mathcal{H}(\bw_t'-\bw_0) + \mathcal{H}_{\text{tail}}\be_{\text{tail},t}
    \end{aligned}
    \end{equation}

    Consider iterative updates:
    \begin{equation}
    \begin{aligned}
        \bw_t'-\bw_0 =& \bw_{t-1}' - \eta\nabla\mathcal{L}^{\text{ImbSAM}}(\bw_{t-1}') - \boldsymbol{w}_0 \\
        =& \bw_{t-1}' - \boldsymbol{w}_0 - \eta(\nabla \mathcal{L}(\bw_0') + \mathcal{H}(\bw_{t-1}'-\bw_0) + \\
        & \mathcal{H}_{\text{tail}}\be_{\text{tail},t-1}) \\
        =& (\mathbf{I}-\eta\mathcal{H})(\bw_{t-1}'-\boldsymbol{w}_0) - \\
        & \eta\left(\nabla \mathcal{L}(\boldsymbol{w}_0)+\mathcal{H}_{\text{tail}}\be_{\text{tail},t-1}\right) \\
        =& -\eta\sum_{i=0}^{t-1}(\mathbf{I}-\eta\mathcal{H})^i\left(\nabla \mathcal{L}(\boldsymbol{w}_0)+\mathcal{H}_{\text{tail}}\be_{\text{tail},t-1-i}\right) \\
        =& -\left[\mathbf{I}-\left(\mathbf{I}-\eta\mathcal{H}\right)^t\right]\mathcal{H}^{-1}\nabla \mathcal{L}(\boldsymbol{w}_0) - \\
        & \eta\sum_{i=0}^{t-1}(\mathbf{I}-\eta\mathcal{H})^i\mathcal{H}_{\text{tail}}\be_{\text{tail},t-1-i}.
    \end{aligned}
    \end{equation}

    This leads to
    \begin{equation}
    \begin{aligned}
        \nabla L(\bw_t') =& (\mathbf{I}-\eta\mathcal{H})^t\nabla L(\boldsymbol{w}_0) - \eta\sum_{i=0}^{t-1}(\mathbf{I}-\\
        & \eta\mathcal{H})^i\mathcal{H}\mathcal{H}_{\text{tail}}\be_{\text{tail},t-1-i},
    \end{aligned}
    \end{equation}
    and
    \begin{equation}
    \begin{aligned}
        \nabla \mathcal{L}(\bw_{t-1}') =& (\mathbf{I}-\eta\mathcal{H})^{t-1}\nabla \mathcal{L}(\boldsymbol{w}_0) - \\
        & \eta\sum_{i=0}^{t-1}(\mathbf{I}-\eta\mathcal{H})^{i-1}\mathcal{H}\mathcal{H}_{\text{tail}}\be_{\text{tail},t-1-i}.
    \end{aligned}
    \end{equation}

    Therefore, the recursive relation is established by combining the above two formulas:
    \begin{equation}
    \begin{aligned}
        \nabla \mathcal{L}(\bw_t') & = (\mathbf{I}-\eta \mathcal{H})\nabla \mathcal{L}(\bw_{t-1}') - \eta\mathcal{H}\mathcal{H}_{\text{tail}}\be_{\text{tail},t-1} \\
        & = (\mathbf{I}-\eta \mathcal{H})\nabla \mathcal{L}(\bw_{t-1}') - \eta r\mathcal{H}^2\be_{\text{tail},t-1}.
    \end{aligned}
    \end{equation}

    This recursion yields:
    \begin{equation}
    \begin{aligned}
        \langle \boldsymbol{v}_{\text{min}}, \nabla \mathcal{L}(\bw_t')\rangle =& \left(1-\eta\lambda_{\text{min}}-\frac{\eta r^2 \rho\lambda_{\text{min}}^2}{\lVert\nabla \Ltail(\bw_{t-1}')\rVert}\right)\cdot \\
        &\langle \boldsymbol{v}_{\text{min}}, \nabla \mathcal{L}(\bw_{t-1}')\rangle.
    \end{aligned}
    \end{equation}

    We know that 
    \begin{equation}
    \begin{aligned}
        \lmin\leq& -\frac{L}{\eta\rho r}\left(\frac{1+\mu}{2}\eta+\right.\\
        &\left.\sqrt{\left(\frac{1+\mu}{2}\right)^2\eta^2+\frac{\left(2+\mu\right)\eta\rho r}{L}}\right),
    \end{aligned}
    \end{equation}
    therefore, because $\mu\geq 1$, it holds that 
    \begin{equation}
        \lambda_{\text{min}}\leq -\frac{L}{\eta \rho r}\left(\eta + \sqrt{\eta^2+\frac{3\eta \rho r}{L}}\right).
    \end{equation}
    It follows that 
    \begin{equation}
    \begin{aligned}
        & \eta\rho r^2 \lambda_{\text{min}}^2 / \lVert\nabla \Ltail(\bw_{t-1}')\rVert + \eta\lambda_{\text{min}} - 1 \\
        \geq& \eta\rho r \lambda_{\text{min}}^2 / L + \eta\lambda_{\text{min}} - 1 \\
        \geq& 2 - \eta\lambda_{\text{min}} > 0.
    \end{aligned}
    \end{equation}
    Therefore
    \begin{equation}
    \begin{aligned}
        \lvert\langle \boldsymbol{v}_{\text{min}}, \nabla \mathcal{L}(\bw_t')\rangle\rvert
        &= \left(\frac{\eta\rho r^2 \lambda_{\text{min}}^2}{\lVert\nabla \Ltail(\bw_{t-1}')\rVert} + \eta\lambda_{\text{min}} - 1\right) \\
        & \lvert\langle \boldsymbol{v}_{\text{min}}, \nabla \mathcal{L}(\bw_{t-1}')\rangle\rvert \\
        &\geq \left(\eta\rho r \lambda_{\text{min}}^2 / L + \eta\lambda_{\text{min}} - 1\right) \\
        &\lvert\langle \boldsymbol{v}_{\text{min}}, \nabla \mathcal{L}(\bw_{t-1}')\rangle\rvert.
    \end{aligned}
    \end{equation}

    Subsequently, replacing all $L$ with $L/r$ in the proof of Thm.\ref{thm:projection-onto-negative-curvature}, we come to the following conclusion
    \begin{equation}
    \begin{aligned}
        & \lvert\langle \boldsymbol{v}_{\text{min}}, \widetilde{\boldsymbol{w}}_t-\boldsymbol{w}_0\rangle\rvert\\
        \geq& \mu\lvert\langle \boldsymbol{v}_{\text{min}}, \boldsymbol{w}_t-\boldsymbol{w}_0\rangle\rvert + \\
        & \left((\mu^2+\mu)t-2\mu^2-\mu\right)\frac{\lvert\langle\vmin, \nabla \mathcal{L}(\bw_0)\rangle\rvert}{\lvert\lmin\rvert}.
    \end{aligned}
    \end{equation}

    For $t\geq 2$, we have that $(\mu^2+\mu)t-2\mu^2-\mu \geq \mu > 0$, therefore we have $\lvert\langle \boldsymbol{v}_{\text{min}}, \widetilde{\boldsymbol{w}}_t-\boldsymbol{w}_0\rangle\rvert\geq\mu\lvert\langle \boldsymbol{v}_{\text{min}}, \boldsymbol{w}_t-\boldsymbol{w}_0\rangle\rvert$, concluding our proof.
\end{proof}

Now we are ready to prove Prop.\ref{prop:imbsam-limitation}.

\begin{proof}
    Note that the contra-positive of a statement is equivalent to the original statement. Therefore, according to Lem.\ref{lem:imbsam-limitation}, when $\lvert\langle \vmin,\bw_t'-\bw_0\rangle\rvert < \mu\lvert\langle\vmin, \bw_t-\bw_0\rangle\rvert$, we have that 
    \begin{equation}
    \begin{aligned}
        \lmin >& -\frac{L}{\eta\rho r}\left(\frac{1+\mu}{2}\eta+\right.\\
        &\left.\sqrt{\left(\frac{1+\mu}{2}\right)^2\eta^2+\frac{(2+\mu)\eta\rho r}{L}}\right).
    \end{aligned}
    \end{equation}

    Define 
    \begin{equation}
    \begin{aligned}
        \lambda(r) :=& -\frac{L}{\eta\rho r}\left(\frac{1+\mu}{2}\eta+\right.\\
        &\left.\sqrt{\left(\frac{1+\mu}{2}\right)^2\eta^2+\frac{(2+\mu)\eta\rho r}{L}}\right).
    \end{aligned}
    \end{equation}
    To prove (2), we only have to show that 
    \begin{equation}
    \begin{aligned}
        &\lambda(r) \\
        <& -\frac{L}{\eta\rho}\left(\frac{1+\mu}{2}\eta+\sqrt{\left(\frac{1+\mu}{2}\right)^2\eta^2+\frac{(2+\mu)\eta\rho}{L}}\right) \\
        =& \lambda(1),
    \end{aligned}
    \end{equation}
    so that the condition $\lmin \leq \lambda(1)$ is reachable. 

    Because $0<r<1$, we only have to show that $\frac{\partial\lambda(r)}{\partial r} > 0$, i.e., we have to show that
    \begin{equation}
    \begin{aligned}
        & \frac{L}{\eta\rho}\left(\frac{1}{r^2}\left(\frac{1+\mu}{2}\eta+\sqrt{\left(\frac{1+\mu}{2}\right)^2\eta^2+\frac{(2+\mu)\eta\rho r}{L}}\right) - \right.\\
        & \left.\frac{1}{r}\cdot \frac{(2+\mu)\eta\rho / L}{2 \sqrt{\left(\frac{1+\mu}{2}\right)^2\eta^2+\frac{(2+\mu)\eta\rho r}{L}}}\right) > 0.
    \end{aligned}
    \end{equation}

    Multiplying each side with $\frac{\eta\rho}{L}r^2\sqrt{\left(\frac{1+\mu}{2}\right)^2\eta^2+\frac{(2+\mu)\eta\rho r}{L}}\\>0$, what we have to show becomes
    \begin{equation}
    \begin{aligned}
        & \frac{1+\mu}{2}\eta\sqrt{\left(\frac{1+\mu}{2}\right)^2\eta^2+\frac{(2+\mu)\eta\rho r}{L}} +\\
        & \left(\frac{1+\mu}{2}\right)^2\eta^2+\frac{(2+\mu)\eta\rho r}{L} - \frac{(2+\mu)\eta\rho r}{2L} > 0,
    \end{aligned}
    \end{equation}
    which is equivalent to
    \begin{equation}
    \begin{aligned}
        & \frac{1+\mu}{2}\eta\sqrt{\left(\frac{1+\mu}{2}\right)^2\eta^2+\frac{(2+\mu)\eta\rho r}{L}} +\\
        & \left(\frac{1+\mu}{2}\right)^2\eta^2+\frac{(2+\mu)\eta\rho r}{2L} > 0.
    \end{aligned}
    \end{equation}

    Clearly last inequality is true. Therefore we prove (1).

    For (2), when $\lvert\langle \vmin,\btw_t-\bw_0\rangle\rvert < \mu\lvert\langle\vmin, \bw_t-\bw_0\rangle\rvert$, it holds that 
    \begin{equation}
    \begin{aligned}
        \lmin >& -\frac{L}{\eta\rho}\\
        & \left(\frac{1+\mu}{2}\eta+\sqrt{\left(\frac{1+\mu}{2}\right)^2\eta^2+\frac{(2+\mu)\eta\rho}{L}}\right) \\
        =& \lambda(1) > \lambda(r) \\
        =& -\frac{L}{\eta\rho r}\\
        & \left(\frac{1+\mu}{2}\eta+\sqrt{\left(\frac{1+\mu}{2}\right)^2\eta^2+\frac{(2+\mu)\eta\rho r}{L}}\right).
    \end{aligned}
    \end{equation}

    Then by Lem.\ref{lem:imbsam-limitation}, it also holds that $\lvert\langle \vmin,\bw_t'-\bw_0\rangle\rvert < \mu\lvert\langle\vmin, \bw_t-\bw_0\rangle\rvert$.
\end{proof}

\subsection{Proof of Thm.\ref{thm:saddle-generalization}}

We first re-state Thm.\ref{thm:saddle-generalization} here, then provide its proof.

\begin{theorem}
    Suppose loss function $\mathcal{L}$ is upper bounded by $M$. For any $\rho>0$ and any distribution $\mathcal{D}$, with probability at least $1-\delta$ over the choice of the training set $S\sim\mathcal{D}$, there exists a constant $0\leq c\leq 1$ such that
    \begin{equation}
    \begin{aligned}
        & \mathcal{L}_{\mathcal{D}}(\bw)\leq \mathcal{L}_S(\bw) + \rho^2\sqrt{\frac{d}{4\pi}}\max_{\lVert\be\rVert\leq \rho}\lmax\left(\nabla^2L\left(\bw+c\be\right)\right) \\
        & + \frac{M}{\sqrt{n}} + \left[\left(\frac{1}{4}d\log \left(1+\frac{\lVert\bw\rVert^2(\sqrt{d}+\sqrt{\log n})^2}{d\rho^2}\right)+\frac{1}{4}\right.\right. \\
        & \left.\left.+\log\frac{n}{\delta} +2\log(6n+3d)\right)/\left(n-1\right)\right]^{\frac{1}{2}},
    \end{aligned}
    \end{equation}
    where $d$ is the number of parameters and $n$ is the size of training set $S$.
\end{theorem}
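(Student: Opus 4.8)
The plan is to establish this as a PAC-Bayesian generalization bound in the style of the SAM analysis of \citet{foret2020sharpness}, with the key modification of expressing the sharpness penalty through the maximal Hessian eigenvalue $\lmax$ (term $B$) rather than through a perturbed-loss gap. First I would instantiate McAllester's PAC-Bayes bound with a data-dependent Gaussian posterior $Q = \mathcal{N}(\bw, \sigma^2\mathbf{I})$ centered at the learned weights and a fixed Gaussian prior $P = \mathcal{N}(\mathbf{0}, \sigma_P^2\mathbf{I})$, choosing the posterior scale $\sigma = \rho/(\sqrt{d}+\sqrt{\log n})$ so that $\|\be\|$ concentrates just below $\rho$. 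This yields, with probability $\ge 1-\delta$, that $\mathbb{E}_{\be\sim Q}[\mathcal{L}_{\mathcal{D}}(\bw+\be)] \le \mathbb{E}_{\be\sim Q}[\mathcal{L}_S(\bw+\be)] + \sqrt{(\mathrm{KL}(Q\|P)+\log(n/\delta))/(2(n-1))}$.

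The heart of the proof is converting the two expected-loss terms. For the empirical side, I would Taylor-expand to second order with a Lagrange remainder, writing $\mathcal{L}_S(\bw+\be) = \mathcal{L}_S(\bw) + \be^\top\nabla\mathcal{L}_S(\bw) + \tfrac12\be^\top\nabla^2\mathcal{L}_S(\bw+c\be)\be$ for some $c\in[0,1]$; this is exactly where the constant $c$ in the statement originates. Taking the Gaussian expectation annihilates the linear term since $\mathbb{E}[\be]=\mathbf{0}$; bounding the quadratic form by $\be^\top\nabla^2\mathcal{L}_S(\bw+c\be)\be \le \lmax(\nabla^2\mathcal{L}_S(\bw+c\be))\|\be\|^2$ and restricting to $\|\be\|\le\rho$ produces term $(B)$ with the maximum over the ball, the Gaussian second-moment integral of $\|\be\|^2$ supplying the factor $\rho^2\sqrt{d/(4\pi)}$. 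The complementary tail event $\|\be\| > \rho$ is controlled by the uniform bound $M$ together with a $\chi$-concentration inequality; for the chosen $\sigma$ this tail probability is $O(1/\sqrt n)$, which—together with the de-randomization step relating $\mathbb{E}_{Q}[\mathcal{L}_{\mathcal{D}}(\bw+\be)]$ back to the deterministic $\mathcal{L}_{\mathcal{D}}(\bw)$—accounts for the additive $M/\sqrt n$.

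Finally I would compute the KL term explicitly as $\mathrm{KL}(Q\|P) = \tfrac{1}{2\sigma_P^2}(\|\bw\|^2 + d\sigma^2) - \tfrac d2 + \tfrac d2\log(\sigma_P^2/\sigma^2)$, and, since $\|\bw\|$ is unknown a priori, take a union bound over a geometric grid of candidate prior variances $\sigma_P^2$. Optimizing over the grid yields the $\tfrac14 d\log(1 + \|\bw\|^2(\sqrt d+\sqrt{\log n})^2/(d\rho^2))$ leading term together with the $\tfrac14 + \log(n/\delta) + 2\log(6n+3d)$ residuals, all divided by $(n-1)$ under the square root. Collecting the empirical term $\mathcal{L}_S(\bw)$, the sharpness term $(B)$, the $M/\sqrt n$ slack, and this complexity term then gives the claimed inequality.

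The main obstacle, I expect, is the transition from the randomized to the deterministic predictor and the accompanying truncation of the Gaussian perturbation: keeping the penalty in the clean $\max_{\|\be\|\le\rho}\lmax$ form requires simultaneously discarding the unbounded Gaussian tail through $M$-boundedness without inflating the rate beyond $1/\sqrt n$, and pinning down the constant $\sqrt{d/(4\pi)}$ from the Gaussian second-moment integral rather than the cruder $O(\rho^2)$ estimate. This de-randomization is especially delicate near a saddle, where the population Hessian is indefinite, so the gap $\mathcal{L}_{\mathcal{D}}(\bw)-\mathbb{E}_Q[\mathcal{L}_{\mathcal{D}}(\bw+\be)]$ must be absorbed into $M/\sqrt n$ via boundedness and concentration alone. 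The union bound over prior scales in the last step is routine but must be synchronized with the choice $\sigma=\rho/(\sqrt d+\sqrt{\log n})$ to reproduce the stated logarithmic term.
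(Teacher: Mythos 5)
Your proposal follows essentially the same route as the paper's proof: the paper likewise instantiates the PAC-Bayes argument of Foret et al. with posterior $\mathcal{N}(\bw,\sigma^2\mathbf{I})$ and $\sigma=\rho/(\sqrt{d}+\sqrt{\log n})$, truncates the Gaussian perturbation via the Laurent--Massart $\chi^2$ bound (tail probability $1/\sqrt{n}$, costing $M/\sqrt{n}$), expands $\mathcal{L}_S(\bw+\be)$ with a mean-value/Lagrange remainder to produce the constant $c$, kills the linear term by symmetry, applies the Rayleigh quotient bound $\be^{\top}\nabla^2\mathcal{L}_S(\bw+c\be)\be\leq \lmax\lVert\be\rVert^2$, and extracts the factor $\rho^2\sqrt{d/(4\pi)}$ from the truncated $\chi^2(d)$ moment combined with Stirling's bound $\Gamma(d/2)\geq\sqrt{4\pi/d}\,(d/(2e))^{d/2}$; the KL and prior-grid bookkeeping you spell out explicitly is simply imported by citation in the paper. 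The one place your plan would fail is the de-randomization: you propose to absorb the gap $\mathcal{L}_{\mathcal{D}}(\bw)-\mathbb{E}_{Q}[\mathcal{L}_{\mathcal{D}}(\bw+\be)]$ into $M/\sqrt{n}$ ``via boundedness and concentration alone,'' but no such argument exists --- near a saddle the Gaussian-smoothed population loss can lie \emph{below} $\mathcal{L}_{\mathcal{D}}(\bw)$ by roughly $\tfrac{\sigma^2}{2}\lvert\mathrm{tr}\,\nabla^2\mathcal{L}_{\mathcal{D}}\rvert$, an amount independent of $n$, so boundedness gives only an $O(M)$ control, not $O(M/\sqrt{n})$. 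The paper does not close this step either: its $M/\sqrt{n}$ is spent entirely on the empirical-side truncation, and the inequality $\mathcal{L}_{\mathcal{D}}(\bw)\leq\mathbb{E}_{\be}[\mathcal{L}_{\mathcal{D}}(\bw+\be)]$ is silently inherited as an assumption from Theorem 1 of Foret et al.; so this gap is shared by both arguments (and, to your credit, explicitly flagged only in yours), while every step the paper actually carries out is reproduced correctly in your sketch.
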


\begin{proof}
    Fix $\sigma = \rho/(\sqrt{d}+\sqrt{\log n})$. Following the proof of Thm.1 in \cite{foret2020sharpness}, we have that
    \begin{equation}\label{eq:saddle-generl-1}
    \begin{aligned}
        &\mathbb{E}_{\be\sim\mathcal{N}(\boldsymbol{0}, \sigma^2\mathbf{I})}\left[\mathcal{L}_{\mathcal{D}}(\bw+\be)\right]\leq \mathbb{E}_{\be\sim\mathcal{N}(\boldsymbol{0}, \sigma^2\mathbf{I})}\left[\mathcal{L}_S(\bw+\be)\right] +\\
        & \left[\left(\frac{1}{4}d\log \left(1+\frac{\lVert\bw\rVert^2(\sqrt{d}+\sqrt{\log n})^2}{d\rho^2}\right)+\frac{1}{4}\right.\right. \\
        & \left.\left.+\log\frac{n}{\delta} +2\log(6n+3d)\right)/\left(n-1\right)\right]^{\frac{1}{2}}
    \end{aligned}
    \end{equation}
    
    Since $\be\sim\mathcal{N}(\boldsymbol{0}, \sigma^2\mathbf{I})$, $\lVert\be\rVert^2/\sigma^2$ follows chi-squared distribution. By Lem.1 in \cite{laurent2000adaptive}, it holds that for any $t>0$, 
    \begin{equation}
        \mathbb{P}\left(\lVert\be\rVert^2/\sigma^2 - d\geq 2\sqrt{dt}+2t\right)\leq \exp(-t).
    \end{equation}
    
    Then with probability at least $1-1/\sqrt{n}$,
    \begin{equation}
    \begin{aligned}
        \lVert\be\rVert^2 \leq& \sigma^2(d + \sqrt{2d\log n} + \log n)\\
        \leq& \sigma^2\left(\sqrt{d}+\sqrt{\log n}\right)^2 \\
        =& \rho^2.
    \end{aligned}
    \end{equation}
    
    Therefore, 
    \begin{equation}\label{eq:saddle-generl-2}
    \begin{aligned}
        & \mathbb{E}_{\be\sim\mathcal{N}(\boldsymbol{0}, \sigma^2\mathbf{I})}\left[\mathcal{L}_S(\bw+\be)\right] \\
        \leq& \mathbb{E}_{\be\sim\mathcal{N}(\boldsymbol{0}, \sigma^2\mathbf{I})}\left[\mathcal{L}_S(\bw+\be) \mid \lVert\be\rVert\leq\rho\right]\mathbb{P}\left(\lVert\be\rVert\leq\rho\right) + \\
        & \mathbb{E}_{\be\sim\mathcal{N}(\boldsymbol{0}, \sigma^2\mathbf{I})}\left[\mathcal{L}_S(\bw+\be) \mid \lVert\be\rVert>\rho\right]\mathbb{P}\left(\lVert\be\rVert>\rho\right) \\
        \leq& \mathbb{E}_{\be\sim\mathcal{N}(\boldsymbol{0}, \sigma^2\mathbf{I})}\left[\mathcal{L}_S(\bw+\be) \mid \lVert\be\rVert\leq\rho\right] + \frac{M}{\sqrt{n}}.
    \end{aligned}
    \end{equation}
    
    According to mean value theorem, there exist constants $0\leq c'\leq 1$ and $0\leq c''\leq 1$, such that
    \begin{equation}
        \mathcal{L}_S(\bw+\be) = \mathcal{L}_S(\bw) + \be^{\top}\nabla\mathcal{L}_S(\bw+c'\be)
    \end{equation}
    \begin{equation}
        \nabla\mathcal{L}_S(\bw+c'\be) = \nabla\mathcal{L}_S(\bw) + c'\cdot\nabla^2\mathcal{L}_S(\bw+c''c'\be)\be.
    \end{equation}
    
    Combining the above equations, we have that
    \begin{equation}
        \mathcal{L}_S(\bw+\be) = \mathcal{L}_S(\bw) + \be^{\top}\nabla\mathcal{L}_S(\bw)+c'\be^{\top}\nabla^2\mathcal{L}_S(\bw+c\be)\be,
    \end{equation}
    where $c=c''c'$.
    
    Substituting this result back, we have that
    \begin{equation}\label{eq:saddle-generl-3}
    \begin{aligned}
        & \mathbb{E}_{\be\sim\mathcal{N}(\boldsymbol{0}, \sigma^2\mathbf{I})}\left[\mathcal{L}_S(\bw+\be) \mid \lVert\be\rVert\leq\rho\right] \\
        \leq& \mathcal{L}_S(\bw) + \mathbb{E}_{\be\sim\mathcal{N}(\boldsymbol{0}, \sigma^2\mathbf{I})}\left[\be^{\top}\nabla\mathcal{L}_S(\bw) \mid \lVert\be\rVert\leq\rho\right] \\
        & +  \mathbb{E}_{\be\sim\mathcal{N}(\boldsymbol{0}, \sigma^2\mathbf{I})}\left[\be^{\top}\nabla^2\mathcal{L}_S(\bw+c\be)\be \mid \lVert\be\rVert\leq\rho\right]
    \end{aligned}
    \end{equation}
    
    By symmetry, 
    \begin{equation}\label{eq:saddle-generl-4}
        \mathbb{E}_{\be\sim\mathcal{N}(\boldsymbol{0}, \sigma^2\mathbf{I})}\left[\be^{\top}\nabla\mathcal{L}_S(\bw) \mid \lVert\be\rVert\leq\rho\right] = 0.
    \end{equation}
    
    By Rayleigh-Ritz theorem, we have that
    \begin{equation}
    \begin{aligned}
        & \mathbb{E}_{\be\sim\mathcal{N}(\boldsymbol{0}, \sigma^2\mathbf{I})}\left[\be^{\top}\nabla^2\mathcal{L}_S(\bw+c\be)\be \mid \lVert\be\rVert\leq\rho\right] \\
        \leq& \mathbb{E}_{\be\sim\mathcal{N}(\boldsymbol{0}, \sigma^2\mathbf{I})}\left[\lmax\left(\nabla^2\mathcal{L}_S(\bw+c\be)\right)\be^{\top}\be \mid \lVert\be\rVert\leq\rho\right] \\
        \leq& \max_{\lVert\be\rVert\leq \rho}\lmax\left(\nabla^2\mathcal{L}_S(\bw+c\be)\right)\sigma^2\\
        & \mathbb{E}_{\be\sim\mathcal{N}(\boldsymbol{0}, \sigma^2\mathbf{I})}\left[\lVert\be\rVert^2/\sigma^2 \mid \lVert\be\rVert\leq\rho\right] \\
        \leq& \max_{\lVert\be\rVert\leq \rho}\lmax\left(\nabla^2\mathcal{L}_S(\bw+c\be)\right)\sigma^2\mathbb{E}_{z}\left[z \mid z\leq\rho^2/\sigma^2\right], \\
    \end{aligned}
    \end{equation}
    where $z=\lVert\be\rVert^2/\sigma^2$, $z\sim\chi^2(d)$.
    
    Denote by $p(z)$ the probability density function of $z$. By properties of chi-squared distribution, we have that
    \begin{equation}
    \begin{aligned}
        & \mathbb{E}_{z}\left[z \mid z\leq\rho^2/\sigma^2\right] \\
        =& \int_{0}^{\rho^2/\sigma^2}zp(z)dz \\
        =& \frac{1}{2^{d/2}\Gamma(d/2)}\int_{0}^{\rho^2/\sigma^2}z^{d/2}e^{-z/2}dz
    \end{aligned}
    \end{equation}
    
    Solving $\frac{d}{dz}\left(z^{d/2}e^{-z/2}\right)=0$ gives $z=d$. Thus, $z^{d/2}e^{-z/2}$ reaches maximum at $z=d$. Because $\rho^2/\sigma^2 = d+\log n + 2\sqrt{d\log n}>d$, we have that
    \begin{equation}
        \mathbb{E}_{z}\left[z \mid z\leq\rho^2/\sigma^2\right]\leq \frac{\rho^2}{\sigma^2\Gamma(d/2)}\left(\frac{d}{2e}\right)^{\frac{d}{2}}.
    \end{equation}
    
    Using Stirling's formula, which is a rigorous lower bound of Gamma function, we have that
    \begin{equation}
    \begin{aligned}
        \Gamma\left(\frac{d}{2}\right) =& \frac{2}{d}\Gamma\left(\frac{d}{2}+1\right) \\
        \geq& \frac{2\sqrt{\pi d}}{d}\left(\frac{d}{2e}\right)^{\frac{d}{2}} \\
        =& \sqrt{\frac{4\pi}{d}}\left(\frac{d}{2e}\right)^{\frac{d}{2}}.
    \end{aligned}
    \end{equation}
    
    Substituting this result back, we have that
    \begin{equation}
        \mathbb{E}_{z}\left[z \mid z\leq\rho^2/\sigma^2\right]\leq \frac{\rho^2}{\sigma^2}\sqrt{\frac{d}{4\pi}}.
    \end{equation}
    
    Finally,
    \begin{equation}\label{eq:saddle-generl-5}
    \begin{aligned}
        & \mathbb{E}_{\be\sim\mathcal{N}(\boldsymbol{0}, \sigma^2\mathbf{I})}\left[\be^{\top}\nabla^2\mathcal{L}_S(\bw+c\be)\be \mid \lVert\be\rVert\leq\rho\right] \\
        \leq& \rho^2\sqrt{\frac{d}{4\pi}}\max_{\lVert\be\rVert\leq \rho}\lmax\left(\nabla^2\mathcal{L}_S(\bw+c\be)\right).
    \end{aligned}
    \end{equation}
    
    Combining (\ref{eq:saddle-generl-1}), (\ref{eq:saddle-generl-2}), (\ref{eq:saddle-generl-3}), (\ref{eq:saddle-generl-4}) and (\ref{eq:saddle-generl-5}) gives the final bound.
\end{proof}


\section{Pseudo-code for SSE-SAM Algorithm}
\label{app:algorithm}

We provide the pseudo-code for our SSE-SAM here in Alg.\ref{alg:sse-sam}.

\begin{algorithm}
    \caption{SSE-SAM Algorithm}
    \label{alg:sse-sam}
    \textbf{Input}: Training set $S=\left\{(\boldsymbol{x}_i,y_i)\right\}_{i=1}^n$, loss function $l$, batch size $b$, learning rate $\eta$, total training epochs $T$, initial value of head class neighborhood size $\rhead$, tail class neighborhood size $\rtail$, stage transitional factor $\gamma$ (where $0<\gamma<1$).\\
    \textbf{Output}: Model weights $\bw^*$ returned by SSE-SAM algorithm.
    \begin{algorithmic}[1]
        \STATE Initialize weights $\bw_0, t=0$;
        \WHILE{$t<T$}
        \IF {$t\geq \gamma T$}
        \STATE $\rhead=0$;
        \ENDIF
        \STATE Sample batch $\mathcal{B}=\left\{(\boldsymbol{x}_i,y_i)\right\}_{i=1}^n$;
        \STATE Split $\mathcal{B}$ into $\mathcal{B}_{\text{head}}$ and $\mathcal{B}_{\text{tail}}$;
        \STATE Compute $\nabla_{\bw}\Lhead(\be_t)$ and $\nabla_{\bw}\Ltail(\bw_t)$;
        \STATE Compute perturbation for head classes $\ehead(\bw_t)=\rhead\nabla_{\bw}\Lhead(\bw_t)/\lVert \nabla_{\bw}L(\bw_t)\rVert$;
        \STATE Compute perturbation for tail classes $\etail(\bw_t)=\rtail\nabla_{\bw}\Ltail(\bw_t)/\lVert \nabla_{\bw}L(\bw_t)\rVert$;
        \STATE Compute head class gradient $\boldsymbol{g}_{\text{head}}=\nabla_{\bw}\Lhead(\bw_t+\ehead(\bw_t))$;
        \STATE Compute tail class gradient $\boldsymbol{g}_{\text{tail}}=\nabla_{\bw}\Ltail(\bw_t+\etail(\bw_t))$;
        \STATE Update weights $\bw_{t+1}=\bw_t-\eta(\boldsymbol{g}_{\text{head}}+\boldsymbol{g}_{\text{tail}})$;
        \STATE $t=t+1$;
        \ENDWHILE
        \STATE \textbf{return} $\bw_t$;
    \end{algorithmic}
\end{algorithm}


\section{Experiment Settings}
\label{app:experiment-setting}

\subsection{Baseline Methods}

we use cross-entropy (CE) loss with fine-tuned weight decay\cite{alshammari2022long}, LDAM\cite{cao2019learning}, LA\cite{menon2020long} and VS Loss\cite{kini2021label} as baselines for comparison. Here we briefly describe them.

\textbf{LDAM\cite{cao2019learning}}. LDAM introduces a class-dependent margin for multiple classes to regularize the classes with low frequency more. It results in the following loss function
\begin{equation}
    L_{\text{LDAM}}\left((x,y),f\right) = -\log\frac{e^{f(x)_y-\Delta_y}}{e^{f(x)_y-\Delta_y}+\sum\limits_{j\neq y}e^{f(x)_j}},
\end{equation}
where $\Delta_j=\frac{C}{\lvert S_j\rvert^{1/4}}$ for $j\in \left\{1,\cdots,k\right\}$.

\textbf{LA\cite{menon2020long}}. LA adjusts the predicted logit to achieve Fisher consistent loss, resulting in the following objective
\begin{equation}
    L_{\text{LA}}\left((x,y),f\right) = -w_y\log\frac{e^{f(x)_y+\tau\cdot\log\pi_y}}{\sum\limits_{j=1}^ke^{f(x)_j+\tau\cdot\log\pi_j}},
\end{equation}
where $\pi_j$ are estimates of the class priors $\mathbb{P}(j)$, e.g., the empirical class frequencies on training sample.

\textbf{VS\cite{kini2021label}}. VS loss proposes to unify the multiplicative loss, additive shift and loss re-weighting, and has the following form
\begin{equation}
    L_{\text{VS}}\left((x,y),f\right) = -w_y\log\frac{e^{\gamma_yf(x)_y+\Delta_y}}{\sum\limits_{j=1}^ke^{\gamma_jf(x)_j+\Delta_j}},
\end{equation}
where $\gamma_j$ and $\Delta_j$ denote the multiplicative and additive logit hyperparameters.

\subsection{Experiment Settings}

We train our model on CIFAR-10-LT and CIFAR-100-LT datasets\cite{krizhevsky2009learning}. CIFAR-10-LT and CIFAR-100-LT are artificially truncated datasets from balanced datasets CIFAR-10 and CIFAR-100, with class number from first to last class following exponential distribution. 

When splitting head classes and tail classes, we set $\eta_{\text{thres}}=100$ on CIFAR-10-LT and $1000$ on CIFAR-10-LT according to the following equation:
\begin{equation}\label{eq:eta-thres}
\left\{
\begin{aligned}
    & (\boldsymbol{x}_i, y_i)\in S_{\text{head}}, & \text{ if } \lvert S_{y_i}\rvert >\eta_{\text{thres}}, \\
    & (\boldsymbol{x}_i, y_i)\in S_{\text{tail}}, & \text{ if } \lvert S_{y_i}\rvert \leq\eta_{\text{thres}}. \\
\end{aligned}
\right.
\end{equation}

Model evaluation is performed on balanced test sets from CIFAR-10 and CIFAR-100 directly. Following \cite{liu2019large}, Evaluation results are presented using four metrics: \textit{overall accuracy} ($=\frac{1}{K}\sum \text{acc}_k$, where $\text{acc}_k$ is model accuracy on k-th class), \textit{Many classes} (number of training data $>1000$ in CIFAR-10, $>100$ in CIFAR-100), \textit{Medium classes} ($200\sim 1000$ in CIFAR-10, $20\sim 100$ in CIFAR-100) and \textit{Few classes} ($<200$ in CIFAR-10, $<20$ in CIFAR-100).

We use ResNet32\cite{cui2019class} as our model structure. We set SGD optimizer with momentum $0.9$ as the base optimizer and train models for $500$ epochs with batch size $64$. As introduced in last subsection, we use cross-entropy (CE) with fine-tuned weight decay\cite{alshammari2022long}, LDAM\cite{cao2019learning}, LA\cite{menon2020long} and VS Loss\cite{kini2021label} as baselines for comparison. All experiments are conducted on NVIDIA GeForce RTX 3090 with 16GB of memory. The operating system on which we perform our experiments is Ubuntu 20.04.3 LTS (GNU/Linux 5.15.0-107-generic x86\_64). We run each experiment for three times under different random seeds due to limited resources. For experiment runtime, our comparative experiments all run for 290 to 300 minutes, Hessian analysis experiments for overall classes run for around 14 hours, Hessian analysis experiments for head and tail classes run for 30 to 50 minutes.


\section{Additional Experiments}
\label{app:additional-experiments}

\subsection{Test Accuracy Dynamics on CIFAR-10-LT}
\label{app:test-acc-cifar10}

Despite test accuracy dynamics on CIFAR-100-LT, we also run experiments on CIFAR-10-LT. Fig.\ref{fig:acc-10} shows our experiment results. We note that these results exhibit similar trends demonstrated in Fig.\ref{fig:acc-100}, aligning with our expectation.

\begin{figure}[!htbp]
    \centering
    \begin{subfigure}{0.23\textwidth}
        \centering
        \includegraphics[width=\textwidth]{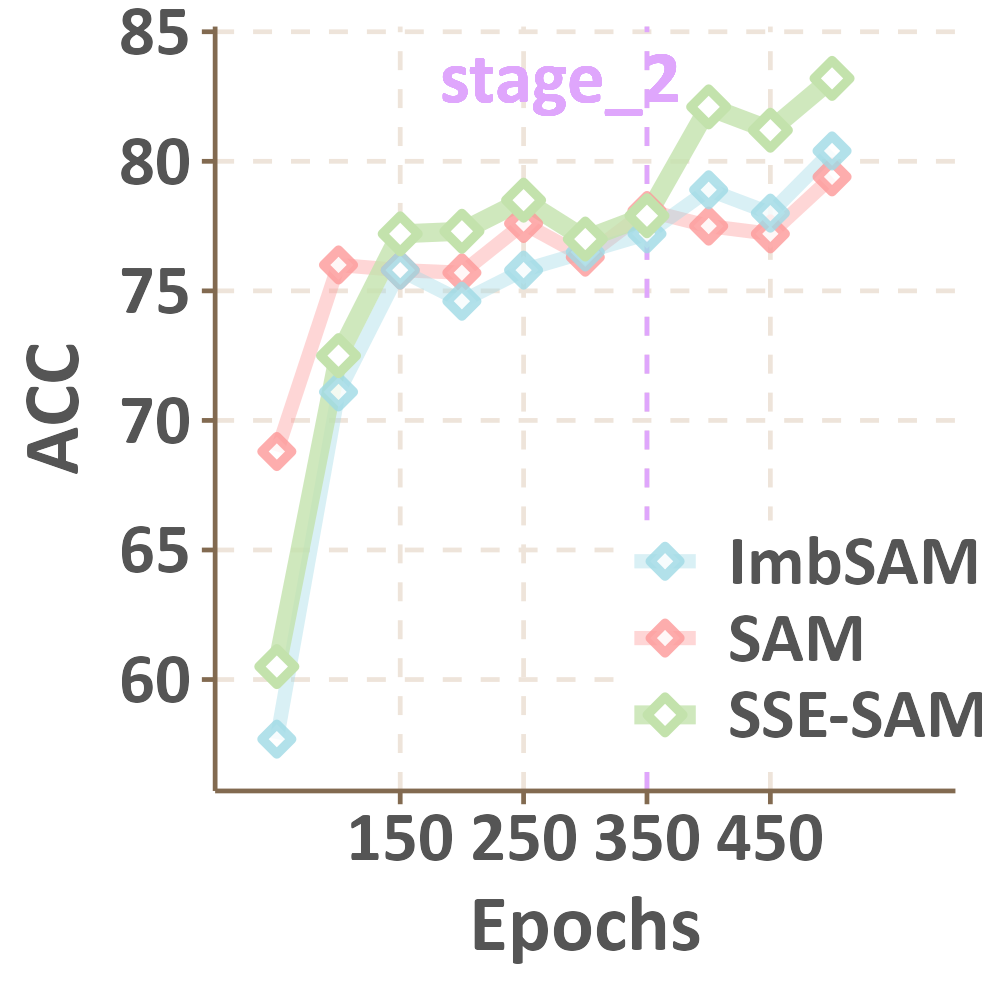}
        \caption{Overall}
        \label{fig:acc-all-10}
    \end{subfigure}
    \hfill
    \begin{subfigure}{0.23\textwidth}
        \centering
        \includegraphics[width=\textwidth]{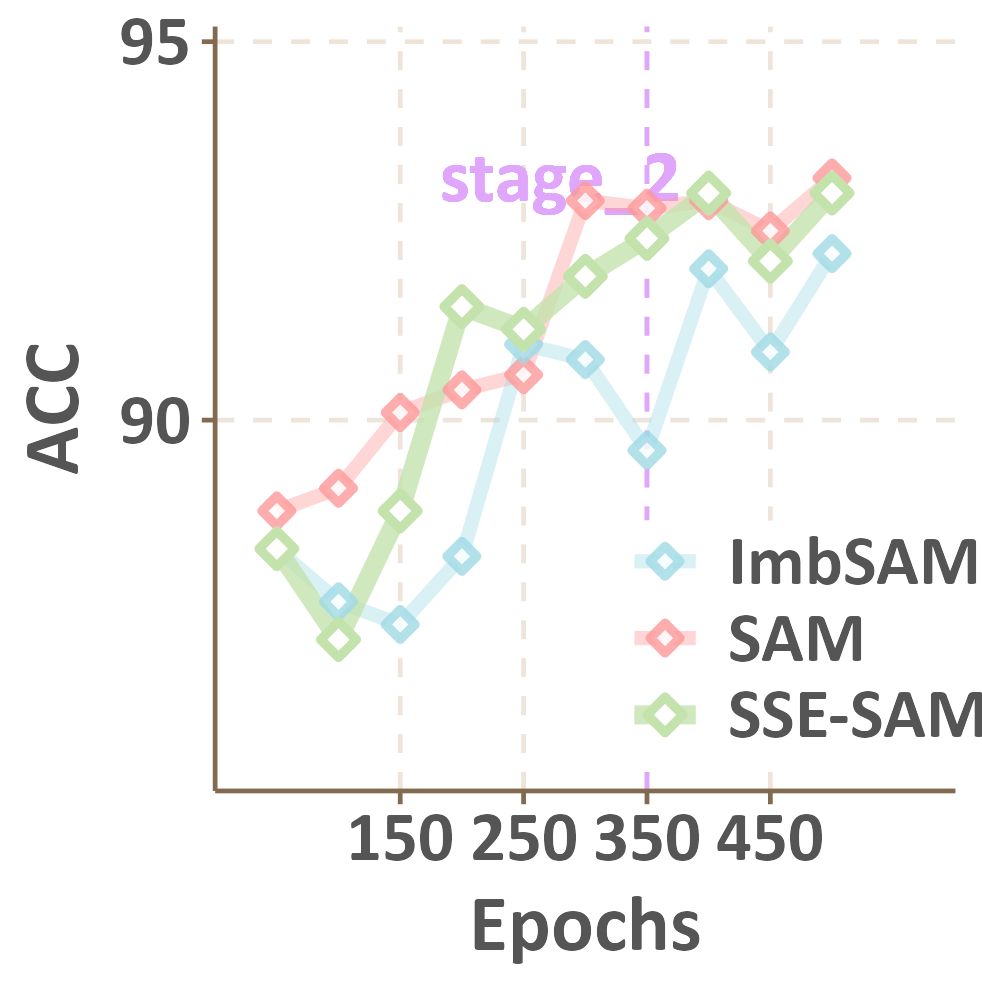}
        \caption{Head classes}
        \label{fig:acc-many-10}
    \end{subfigure}
    \hfill
    \begin{subfigure}{0.23\textwidth}
        \centering
        \includegraphics[width=\textwidth]{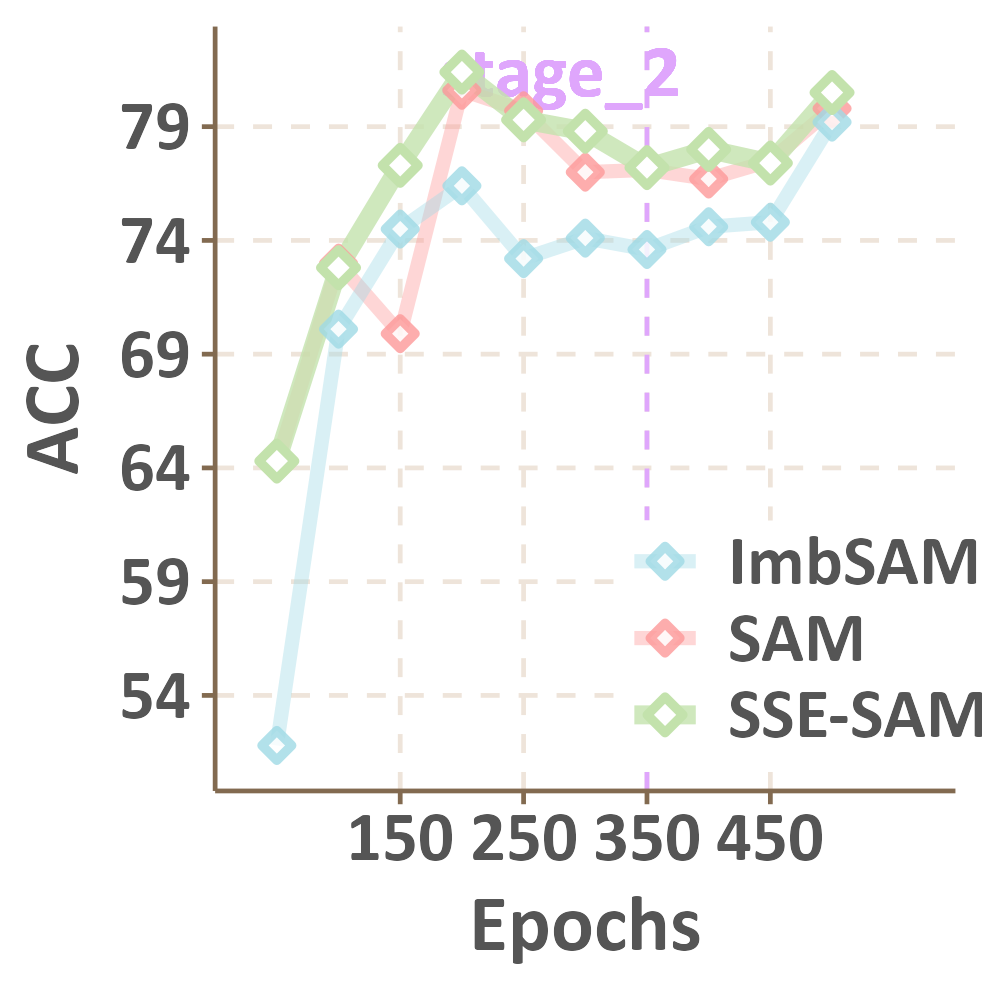}
        \caption{Medium classes}
        \label{fig:acc-med-10}
    \end{subfigure}
    \hfill
    \begin{subfigure}{0.23\textwidth}
        \centering
        \includegraphics[width=\textwidth]{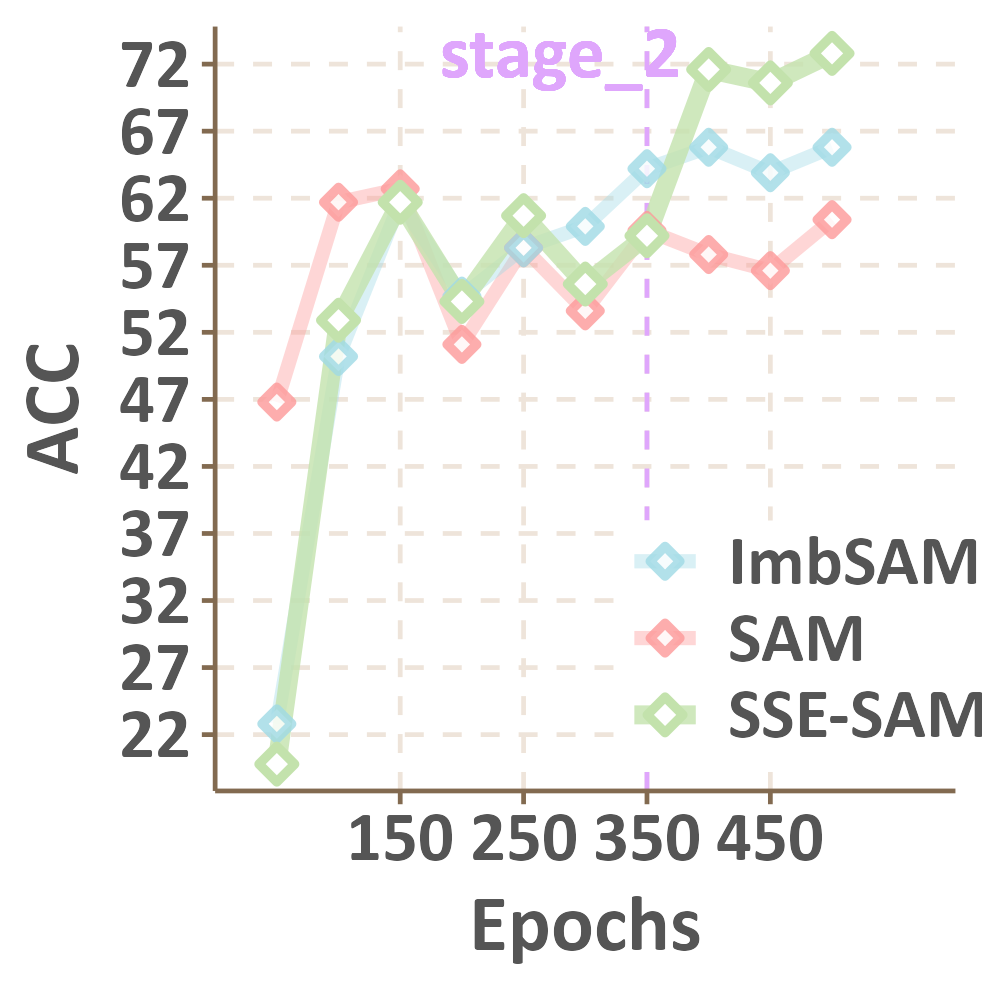}
        \caption{Tail classes}
        \label{fig:acc-few-10}
    \end{subfigure}
    \caption{\textbf{Test accuracy on CIFAR-10-LT across different classes.}}
    \label{fig:acc-10}
\end{figure}

\subsection{Ablation Studies}
\label{app:ablation-studies}

We also perform some ablation studies on hyperparameters. 

First, the SSE-SAM objective for the first stage is
\begin{equation}
    \mathcal{L}^{\text{SSE-SAM}}_1(\bw) = \Lhead(\bw+\ehead) + \Ltail(\bw+\etail).
\end{equation}

Following \cite{foret2020sharpness, zhou2023imbsam}, we set $\rhead = 0.05$. As we have mentioned in previous section discussing SSE-SAM algorithm, it is natural for head classes to escape from saddles while difficult for tail classes. From Thm.\ref{thm:projection-onto-negative-curvature}, we know that an increased $\rho$ will lead to a lower upper bound on $\lmin$, therefore we fine-tune $\rtail$ by increasing its value from $0.05$ to $0.15$. The results are presented in Fig.\ref{fig:ablt-rtail}. As illustrated in Fig.\ref{fig:ablt-rtail-overall}, the model's overall performance peaks at $\rtail=0.10$ and then declines. A similar trend is observed for the medium and tail classes (Figs.\ref{fig:ablt-rtail-med} and \ref{fig:ablt-rtail-tail}), while the head classes remain nearly unaffected (Fig.\ref{fig:ablt-rtail-head}). This confirms our hypothesis that increasing $\rtail$ within a certain range can enhance performance on tail classes.

\begin{figure}[htbp!]
    \centering
    \begin{subfigure}{0.23\textwidth}
        \centering
        \includegraphics[width=\textwidth]{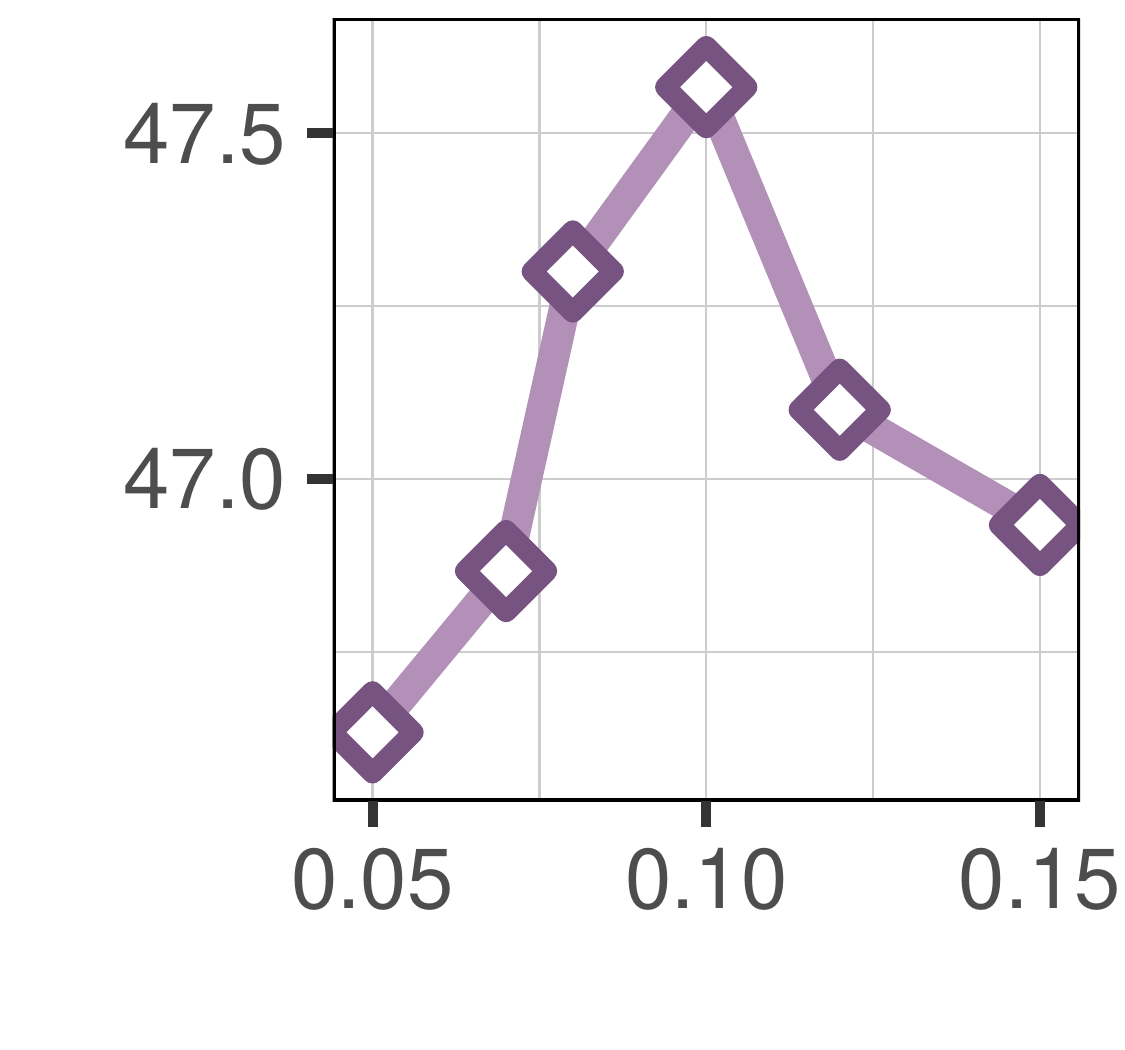}
        \caption{Overall}
        \label{fig:ablt-rtail-overall}
    \end{subfigure}
    \hfill
    \begin{subfigure}{0.22\textwidth}
        \centering
        \includegraphics[width=\textwidth]{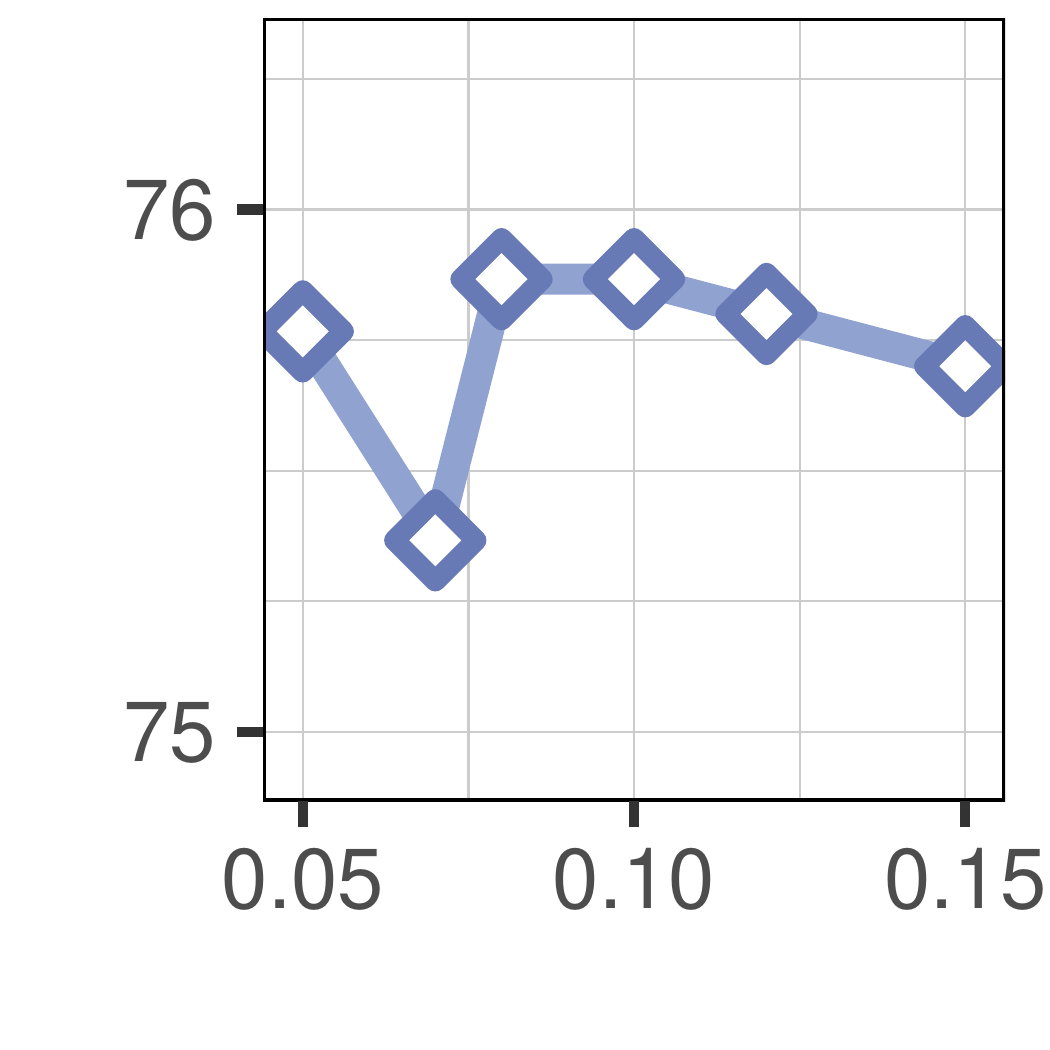}
        \caption{Head classes}
        \label{fig:ablt-rtail-head}
    \end{subfigure}
    \hfill
    \begin{subfigure}{0.22\textwidth}
        \centering
        \includegraphics[width=\textwidth]{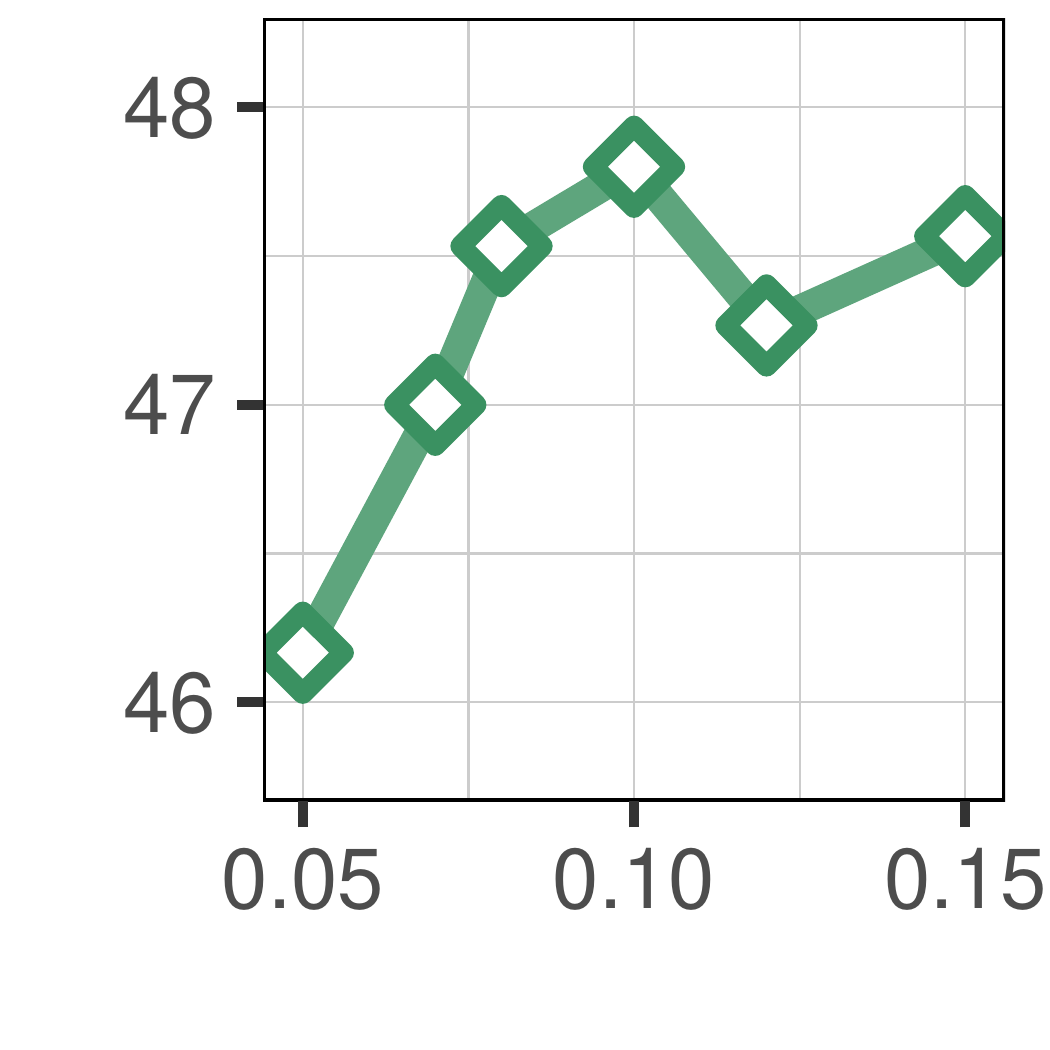}
        \caption{Medium classes}
        \label{fig:ablt-rtail-med}
    \end{subfigure}
    \hfill
    \begin{subfigure}{0.22\textwidth}
        \centering
        \includegraphics[width=\textwidth]{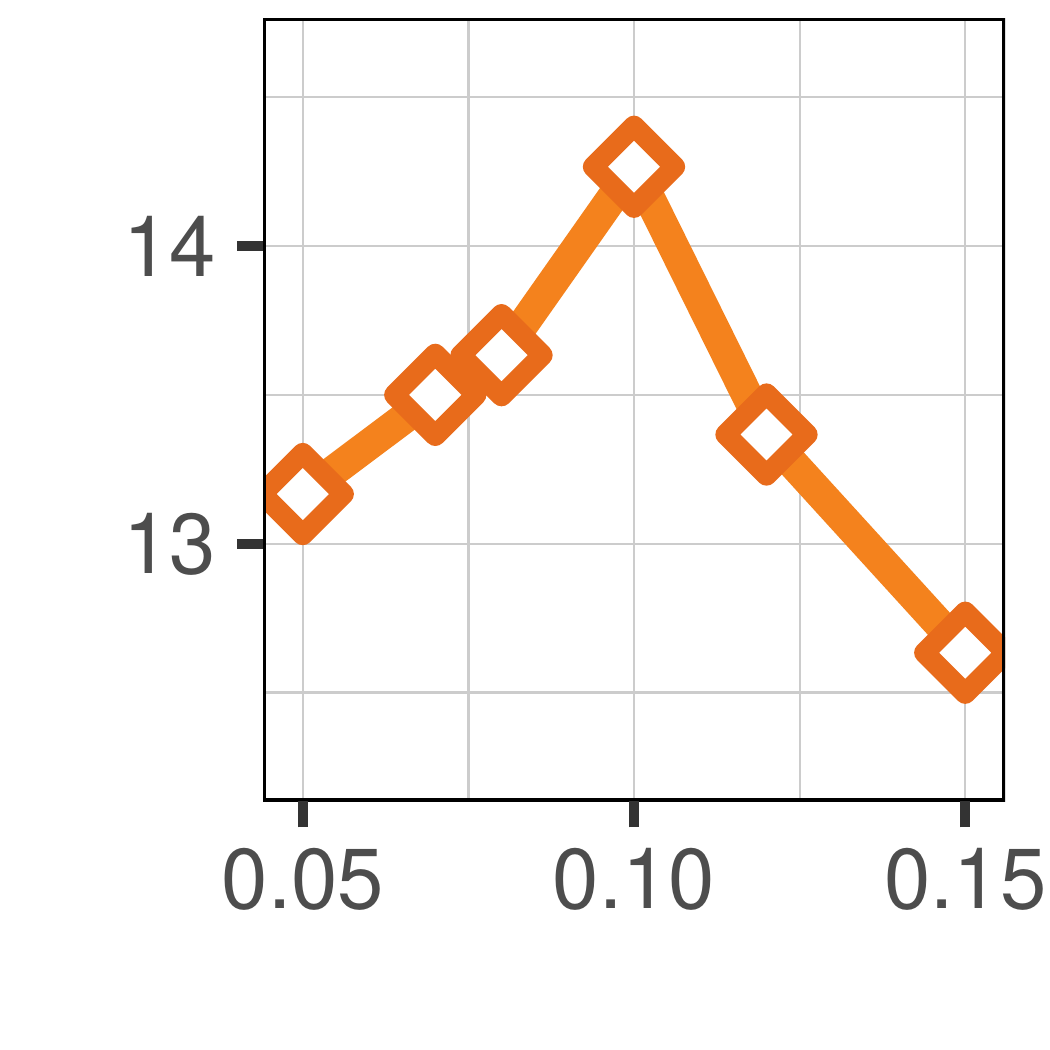}
        \caption{Tail classes}
        \label{fig:ablt-rtail-tail}
    \end{subfigure}
    
    \caption{\textbf{Test accuracy of different classes on varying $\rtail$}. The horizontal axis represents the values of $\rtail$, and the vertical axis represents Accuracy (\%).}
    \label{fig:ablt-rtail}
\end{figure}

\begin{figure}[htbp!]
    \centering
    \begin{subfigure}{0.23\textwidth}
        \centering
        \includegraphics[width=\textwidth]{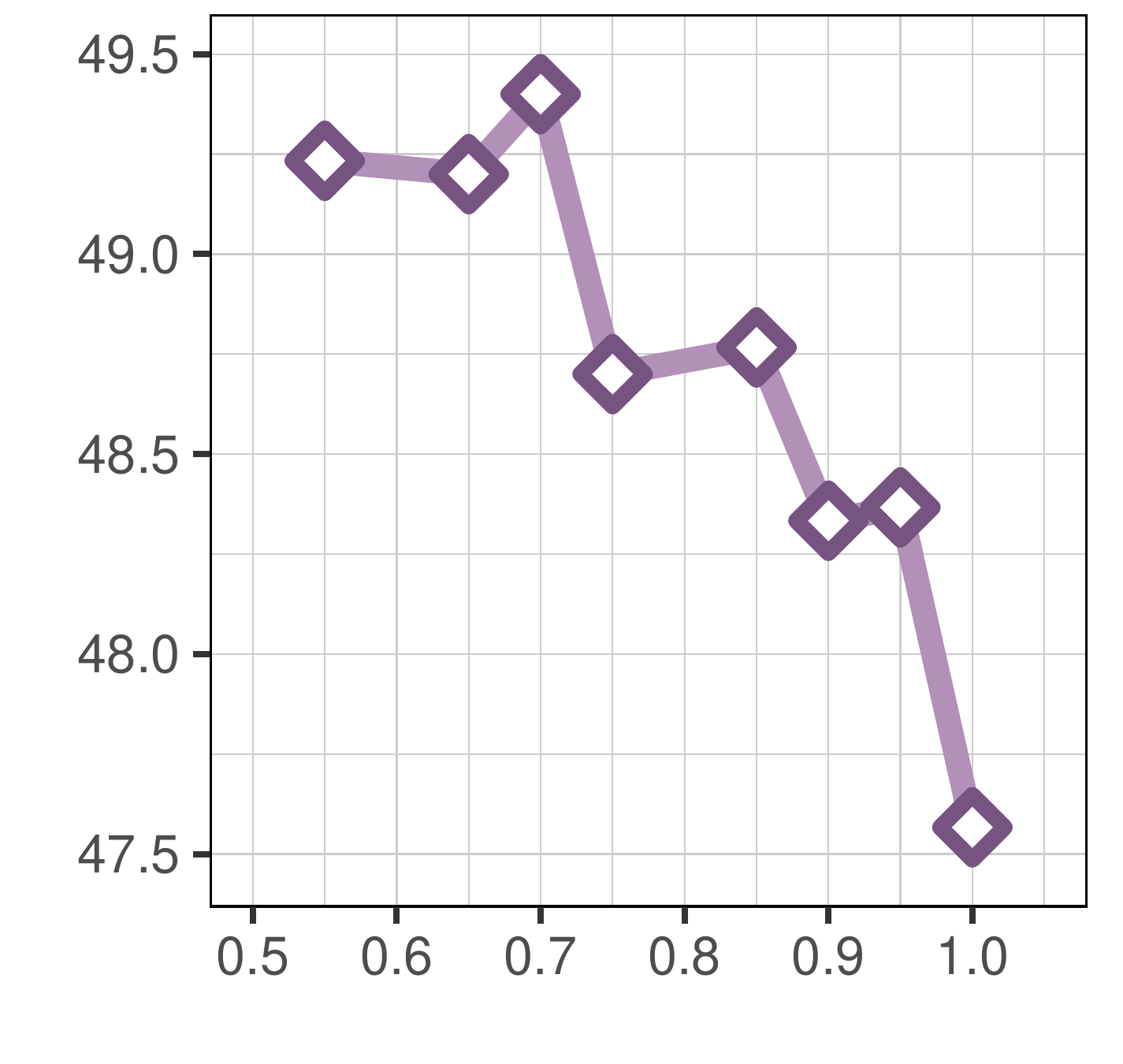}
        \caption{Overall}
        \label{fig:ablt-gamma-overall}
    \end{subfigure}
    \hfill
    \begin{subfigure}{0.23\textwidth}
        \centering
        \includegraphics[width=\textwidth]{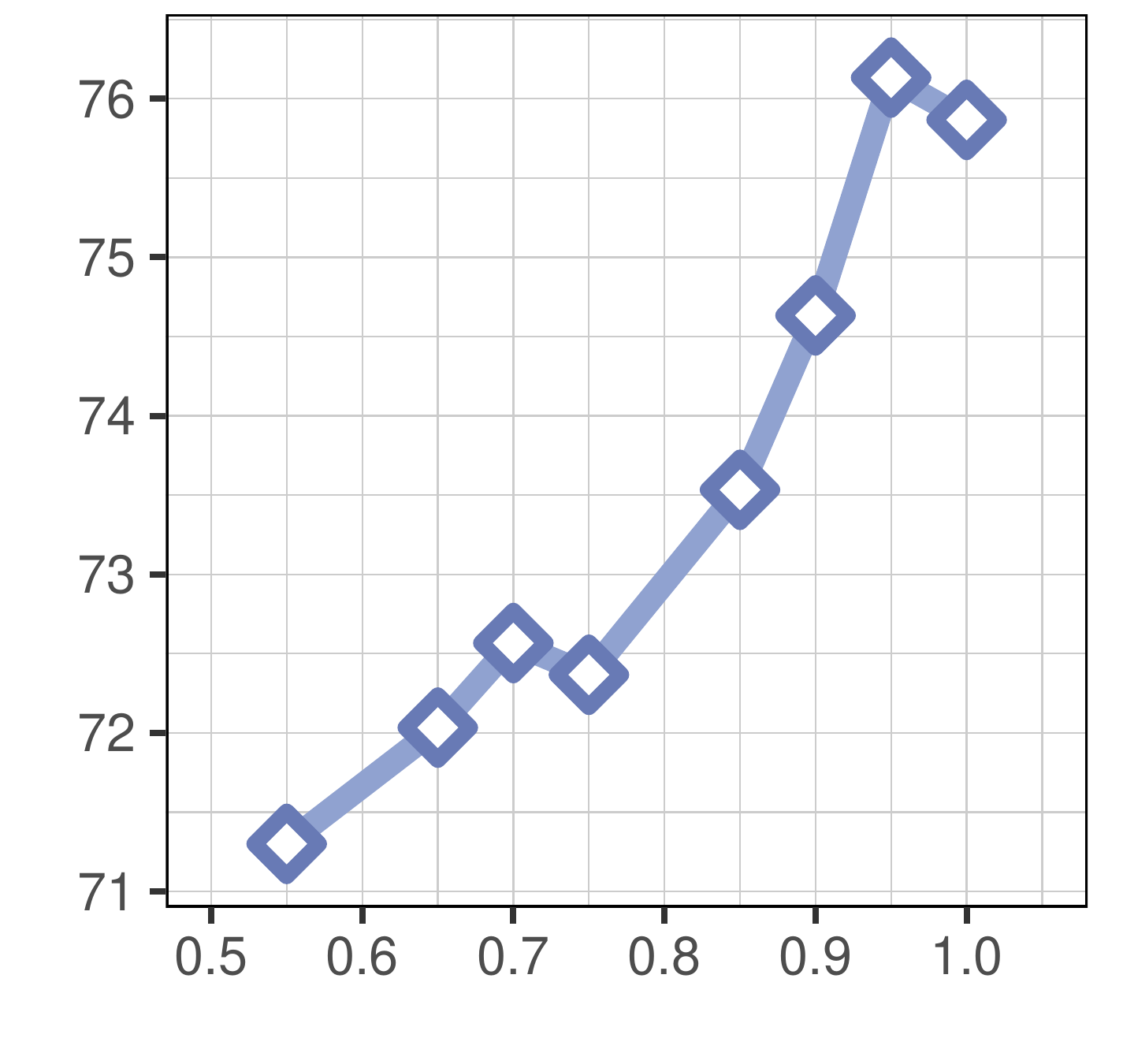}
        \caption{Head classes}
        \label{fig:ablt-gamma-head}
    \end{subfigure}
    \hfill
    \begin{subfigure}{0.23\textwidth}
        \centering
        \includegraphics[width=\textwidth]{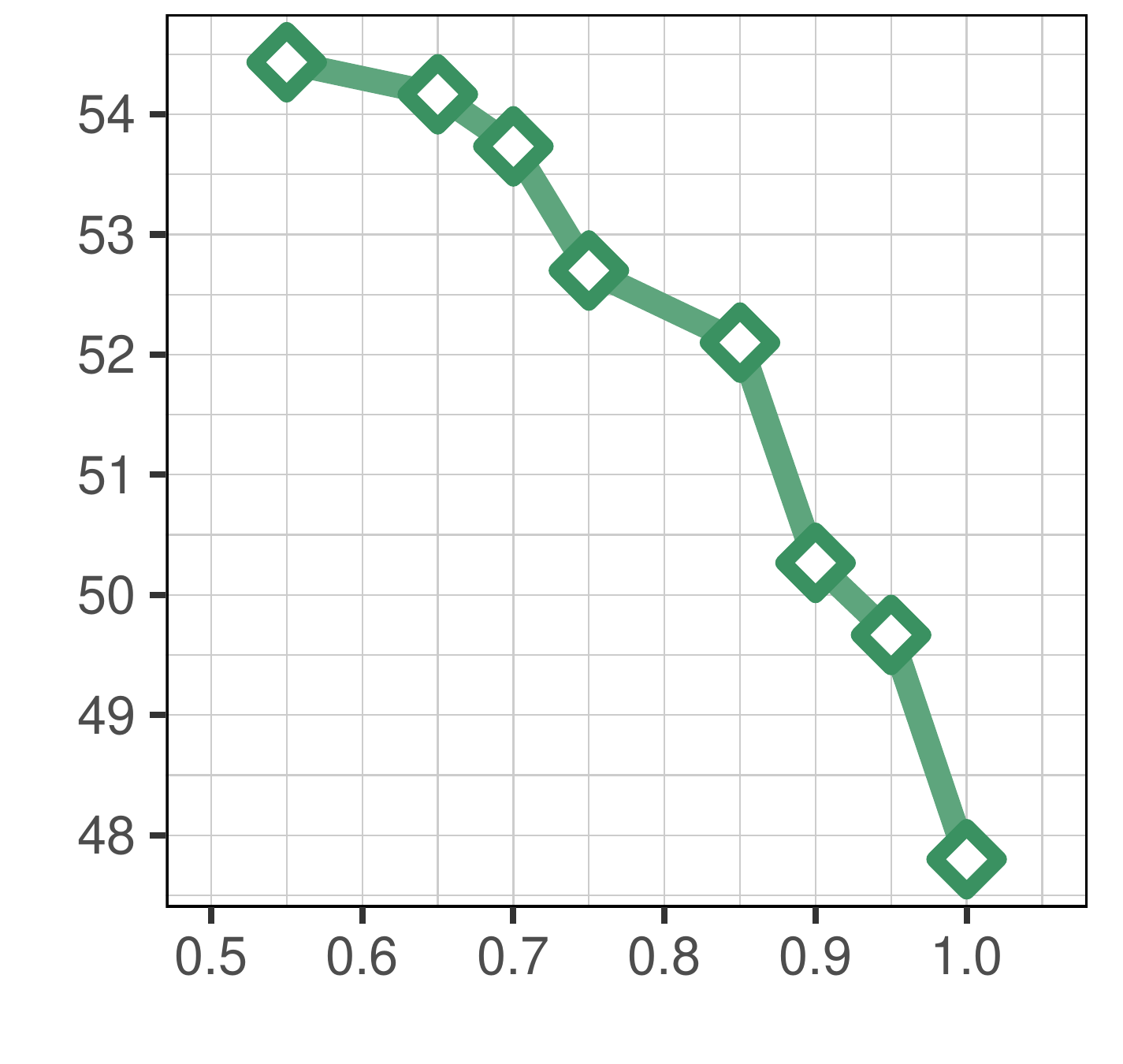}
        \caption{Medium classes}
        \label{fig:ablt-gamma-med}
    \end{subfigure}
    \hfill
    \begin{subfigure}{0.23\textwidth}
        \centering
        \includegraphics[width=\textwidth]{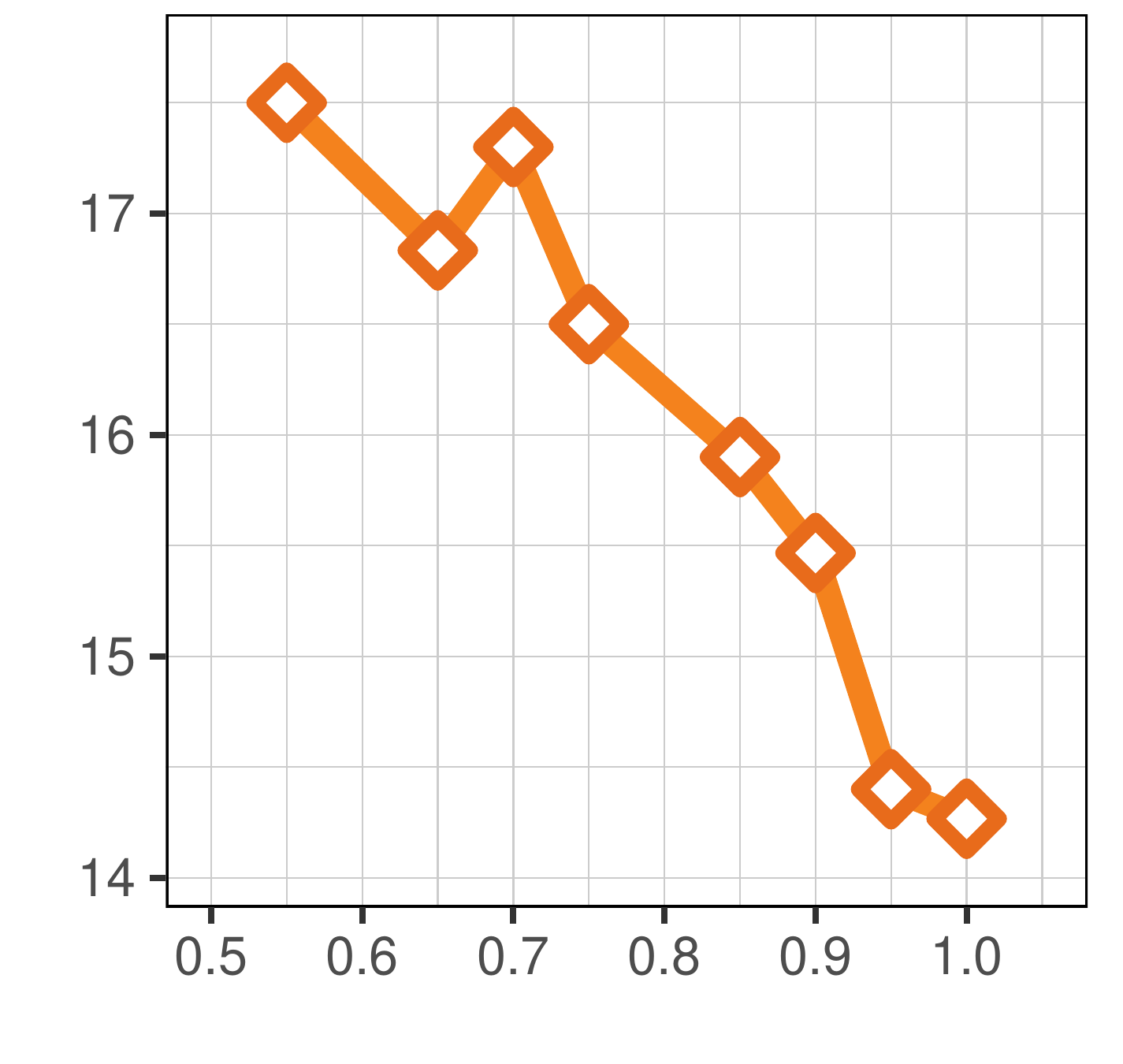}
        \caption{Tail classes}
        \label{fig:ablt-gamma-tail}
    \end{subfigure}
    
    \caption{\textbf{Test accuracy of different classes on varying $\gamma$}. The horizontal axis represents the values of $\gamma$, and the vertical axis represents Accuracy (\%).}
    \label{fig:ablt-gamma}
\end{figure}

Subsequently, we fine-tune hyperparameter $\gamma$ to see its effect on model performance. Results are shown in Fig.\ref{fig:ablt-gamma}. As we increase $\gamma$, i.e., postponing transitional point, performance on head classes increases. However, model performance on medium and tail classes is greatly hurt, rendering the overall performance decreases. This shows that at final stages, model performance is highly influenced by tail classes.


\section{Broader Impacts}
\label{app:broader-impacts}

In our study, we analyze the effectiveness of SAM and ImbSAM in escaping saddle points and introduce the SSE-SAM algorithm, which exhibits enhanced performance in long-tail learning scenarios. The application of such long-tail learning algorithms can facilitate the development of more inclusive and representative AI systems. This is particularly crucial in fields such as medical diagnosis, where underrepresented classes often correspond to rare diseases that conventional models frequently overlook. By ensuring these tail classes receive adequate attention, our method has the potential to save lives by improving the recognition of lesser-known medical conditions.


\section{Acknowledgements}

This work was supported in part by the National Key R\&D Program of China under Grant 2018AAA0102000, in part by National Natural Science Foundation of China: 62236008, U21B2038, U23B2051, 61931008, 62122075, 62206264, and 92370102, in part by Youth Innovation Promotion Association CAS, in part by the Strategic Priority Research Program of the Chinese Academy of Sciences, Grant No. XDB0680000, in part by the Innovation Funding of ICT, CAS under Grant No.E000000, in part by the Tencent Marketing Solution Rhino-Bird Focused Research Program.

\bibliography{aaai25}

\end{document}